\newtheorem{theorem}{Theorem}[section]
\newtheorem{lemma}[theorem]{Lemma}
\newtheorem{corollary}[theorem]{Corollary}
\theoremstyle{plain}
\newtheorem{definition}[theorem]{Definition}
\newtheorem{remark}[theorem]{Remark}
\newcommand{\argmax}{\mathop{\rm argmax}\limits}
\newcommand{\argmin}{\mathop{\rm argmin}\limits}
\newcommand{\targmin}{{\rm argmin}}
\newcommand{\sqtimes}{{\raisebox{0.08em}{$\times$}\hspace{-0.79em}\Box}}
\newcommand{\red}[1]{\textcolor{red}{#1}}
\newcommand{\blue}[1]{\textcolor{blue}{#1}}
\title{Distributionally Robust Safe Sample Elimination under Covariate Shift}
\author{
	Hiroyuki Hanada${}^*$ \and
	Tatsuya Aoyama${}^{\dagger}$ \and
	Satoshi Akahane${}^{\dagger}$ \and
	Tomonari Tanaka${}^{\dagger}$ \and
	Yoshito Okura${}^{\dagger}$ \and
	Yu Inatsu${}^{\ddagger}$ \and
	Noriaki Hashimoto${}^*$ \and
	Shion Takeno${}^{{\dagger}*}$ \and
	Taro Murayama${}^{\#}$ \and
	Hanju Lee${}^{\#}$ \and
	Shinya Kojima${}^{\#}$ \and
	Ichiro Takeuchi${}^{{\dagger}*}$
	\\
	\small{* RIKEN, $\dagger$ Nagoya University, $\ddagger$ Nagoya Institute of Technology, \# DENSO CORPORATION}
}
\begin{document}
\maketitle

\begin{abstract}
We consider a machine learning setup where one training dataset is used to train multiple models across slightly different data distributions.
This occurs when customized models are needed for various deployment environments.
To reduce storage and training costs, we propose the DRSSS method, which combines distributionally robust (DR) optimization and safe sample screening (SSS).
The key benefit of this method is that models trained on the reduced dataset will perform the same as those trained on the full dataset for all possible different environments.
In this paper, we focus on covariate shift as a type of data distribution change and demonstrate the effectiveness of our method through experiments.

% A bit modified long version
%We consider a machine learning setup where a single training dataset is used to train multiple predictive models across various data distributions that differ slightly from the original dataset.
%%
%Such situations arise when customized models are required for a variety of deployment environments operating under different data distributions.
%%
%For these situations, we propose a method to reduce storage and model training costs by eliminating a subset of samples from the training dataset.
%%
%We call the proposed method the DR-SSS method, as it effectively combines two techniques known as safe sample screening (SSS) and distributionally robust (DR) optimization.
%%
%The key advantage of this method is that even when models for different environments are trained with the reduced training dataset, it guarantees that the same models can be obtained for all users as if no samples had been eliminated.
%%
%In this paper, we specifically consider covariate shift as a type of data distribution change and demonstrate the effectiveness of the proposed method through numerical experiments.

\end{abstract}

%%%%%%%%%%%%%%%%%%%%%%%%%%%%%%%%%%%%%%%%%%%%%%%%%%%%%%%%%%%%
%\section{Introduction} \label{sec:introduction}
\section{INTRODUCTION} \label{sec:introduction}
%%%%%%%%%%%%%%%%%%%%%%%%%%%%%%%%%%%%%%%%%%%%%%%%%%%%%%%%%%%%

In this paper, we address the problem of reducing dataset size for training many predictive models across slightly different data distributions.
This is common in applications requiring customized models for varied environments, such as healthcare prediction for different patient groups, financial risk assessment across market conditions, or product refinement for specific regions.
In these cases, efficiently training a customized model for each environment is desirable, and keeping the dataset small can help reduce both storage and training costs.

Our focus is on eliminating a subset of training instances to create a smaller dataset, ensuring that the model trained on this reduced set performs as well as one trained on the full dataset when environmental changes remain within a certain range.
To address this, we propose \emph{distributionally robust safe sample screening (DRSSS)}, which integrates distributionally robust (DR) learning~\citep{goh2010distributionally,delage2010distributionally,chen2021distributionally} and safe sample screening (SSS)~\citep{ogawa2013safe,shibagaki2016simultaneous}.
DR learning builds models that are robust to changes in input distribution, while SSS helps efficiently train sample-sparse models such as support vector machine (SVM) by identifying and removing instances that do not affect the solution (called non-support vectors (SVs) in SVM).

As a type of environmental change, we consider covariate shift, where the input distribution varies between training and testing.
If the test distribution is known, the covariate shift problem can be addressed by performing instance weighted learning, using the density ratio between the test and training distributions as the weight.
For sample-sparse models such as SVM, it is rather straightforward to extend existing SSS methods for instance weighted sample sparse model if the weights are known. 
However, when the test distribution is unknown, we need to consider weighted training where the weights vary within a certain range, making it challenging to identify instances that do not affect all possible customized models trained with a variety of instance weights.

Namely, in this problem setting, our technical challenge is formulated as identifying training instances that do not affect all possible customized models in a weighted sample-sparse model, where the weight distribution varies within a certain range.
If we can identify such a subset of training instances, we can remove them in advance without altering all possible customized models in different environments, thereby reducing both storage and training costs.
To confirm the effectiveness of the proposed DRSSS method, which is designed to address this challenge, we conduct experiments not only with simple kernels, such as the Gaussian kernel, but also with the neural tangent kernel (NTK) \citep{jacot2018neural}, which achieves performance comparable to state-of-the-art deep learning models.
Figure~\ref{fig:safe-sample-screening-DR} illustrates the problem setting and the proposed DRSSS method.

\begin{figure}[t]
\begin{center}
\includegraphics[width=\hsize]{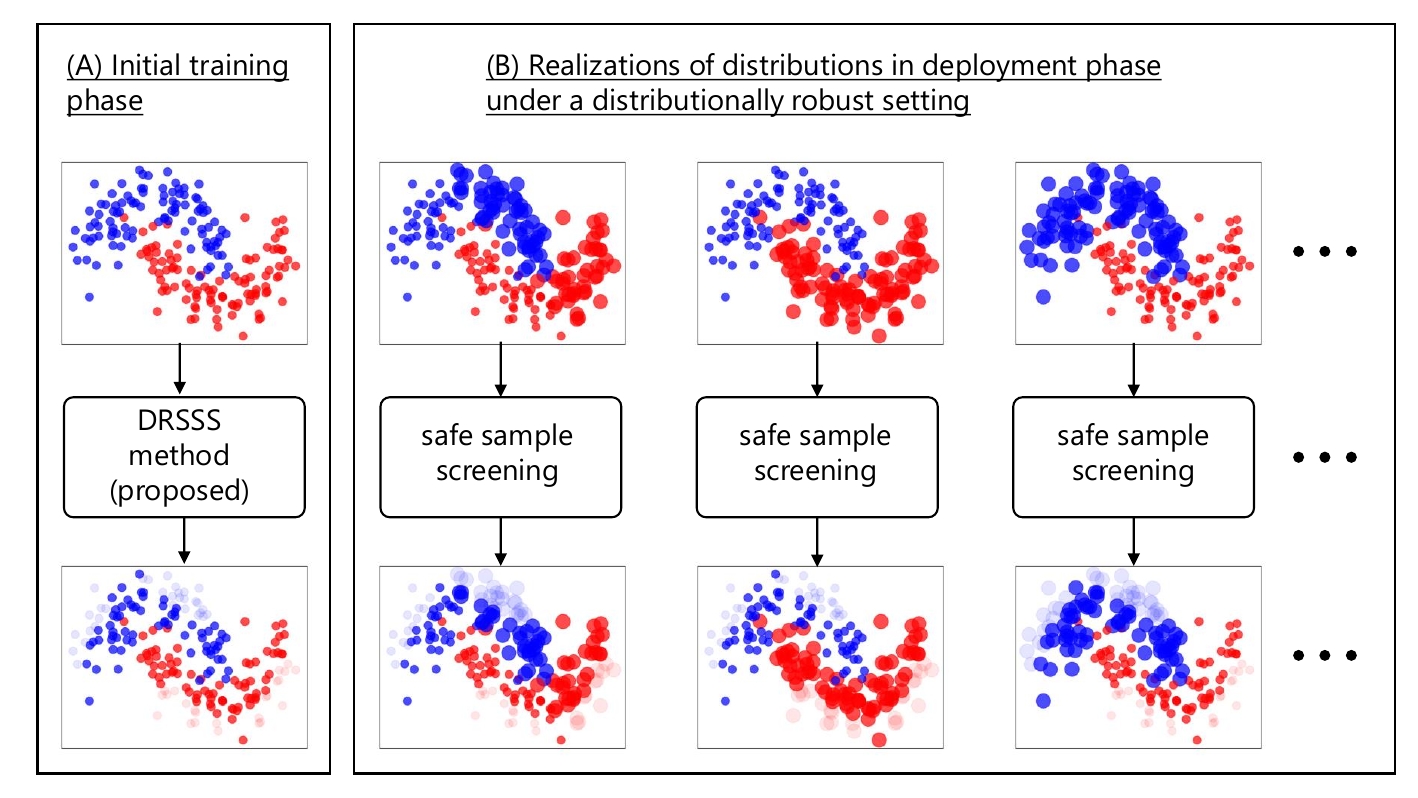}
\caption{
Schematic illustration of the proposed DRSSS method in a toy binary classification problem with a sample-sparse classifier such as SVM.
(A) Suppose that we have a dataset of binary classification (classes described by colors),
and we compute the proposed DRSSS in the initial training phase.
Then, it can identify a set of samples that do not influence the optimal solutions in the deployment phase, regardless of any changes within a specified range in the input distribution (points with thin colors).
(B) This means that, for any input distribution of the deployment phase within a specified range
(depicted by the differing sizes of points),
the set of samples screened out by the DRSSS method is always identified as no influence
(screened samples in a specific deployment is always a superset of that in DRSSS).
}
\label{fig:safe-sample-screening-DR}
\end{center}
\end{figure}

% % % % % % % % % % % % % % % % % % % % % % % % % % % % % %
\paragraph{Related Works} \label{sec:related}
% % % % % % % % % % % % % % % % % % % % % % % % % % % % % %

The DR setting has been explored in various machine learning problems, aiming to enhance model robustness against data distribution variations.
A DR learning problem is typically formulated as a worst-case optimization problem since the goal is to ensure model performance under the worst-case data distribution within a specified range.
Consequently, a variety of optimization techniques tailored to DR learning have been investigated within both the machine learning and optimization communities~\citep{goh2010distributionally,delage2010distributionally,chen2021distributionally}.
The proposed DRSSS method is inspired by such DR learning methods, however, as stated previously,
DRSSS does not consider DR of the learning computation but of the SSS computations.

DRSSS mainly aims to reduce samples for computational resource problems, however,
sample reductions are of practical significance also in other several contexts.
For example, in the context of continual learning (see, e.g., \citet{wang2022memory}), it is crucial to effectively manage data by selectively retaining and discarding samples, especially in anticipation of changes in future data distributions.
Incorrect deletion of essential data can lead to \emph{catastrophic forgetting}~\citep{kirkpatrick2017overcoming}, a phenomenon where a model, after being trained on new data, quickly loses information previously learned from older datasets.
%
%The proposed DRSSS method addresses this challenge by identifying samples that, regardless of future data distribution shifts, will not have any influence on all possible newly trained models in the future.

We employ SSS as a criterion of removing samples, however,
methods called the core set selection \citep{sener2018active,mirzasoleiman2020coresets}
and the data distillation \citep{geng2023dataset,lei2024comprehensive} are also studied.
In the context of data distillation, when we newly created a smaller dataset than the original dataset,
there exists several criteria to measure the similarily of these two datasets:
the model parameter, the gradient of the objective function of the learning, and
the data distribution \citep{lei2024comprehensive}.
SSS belongs to the first criterion, however, by appropriately limiting the class of the problem
we can remove samples {\em without changing the model parameter at all}.

Safe screening (SS) refers to optimization techniques in sparse learning that identify and exclude unnecessary samples or features from the learning process, thereby reducing computational cost without altering the final trained model.
Initially, safe \emph{feature} screening was introduced by \citet{ghaoui2012safe} for the Lasso, followed by safe \emph{sample} screening by \citet{ogawa2013safe} for the SVM.
Among various SS methods developed so far, the most commonly used is based on the duality gap~\citep{fercoq2015mind,ndiaye2015gap,shibagaki2016simultaneous}, which is also used for the proposed DRSSS method.
Over the past decade, SS has seen diverse developments, including methodological improvements and expanded application scopes~\citep{shibagaki2016simultaneous,okumura2015quick,nakagawa2016safe,ren2018safe,zhao2019improved,zhai2020safe,wang2022safe,yoshida2023efficient}.
Unlike other SSS studies that primarily focused on reducing computational costs, this study adopts SSS for a different purpose.
We employ SSS to identify unnecessary samples from an infinitely large number of learning problems, or more specifically, across infinitely large numbers of scenarios where data distribution varies within a specified range.
To our knowledge, no existing studies have utilized SSS within the DR framework.

\paragraph{Contributions}
Our contributions are summarized as follows.
First, by effectively combining DR and SSS, we propose a novel method called the DRSSS method that can identify samples that do not affect the optimal solutions in the deployment phase, regardless of how the distribution changes within a specified range.
Although the DRSSS method is developed for convex empirical risk minimization problems, we have extended the proposed method through kernelization to be applicable to complex nonlinear models.
Using NTK, this extension enables the application of DRSSS to practical deep learning models.
Finally, through numerical experiments, we verify the effectiveness of the proposed DRSSS method.

\paragraph{Notations}
Notations used in this paper are described in Table~\ref{tab:definitions}.
\begin{table*}[t]
\caption{Notations used in the paper. $\mathbb{R}$: all real numbers, $\mathbb{N}$: all positive integers, $n, n^\prime, p\in\mathbb{N}$, $f: \mathbb{R}^n\to\mathbb{R}\cup\{+\infty\}$: convex function, $M\in\mathbb{R}^{n\times n^\prime}$: matrix, $\bm v\in\mathbb{R}^n$: vector.}
\label{tab:definitions}
\centering
{\small
\begin{tabular}{ll}
\hline
$m_{ij} \in \mathbb{R}$ (small case of matrix variable)
	& the element at the $i^\mathrm{th}$ row and the $j^\mathrm{th}$ column of $M$ \\
$v_i \in \mathbb{R}$ (nonbold font of vector variable)
	& the $i^\mathrm{th}$ element of $\bm v$\\
$M_{i:}\in\mathbb{R}^{1\times n^\prime}$ & the $i^\mathrm{th}$ row of $M$ \\
$M_{:j}\in\mathbb{R}^{n\times 1}$ & the $j^\mathrm{th}$ column of $M$ \\
$[n]$ & $\{1, 2, \dots, n\}$ \\
$\mathbb{R}_{\geq 0}$ & all nonnegative real numbers \\
$\otimes$ & elementwise product \\
\emph{diagonal matrix}: $\mathrm{diag}(\bm v) \in\mathbb{R}^{n\times n}$
	& $(\mathrm{diag}(\bm v))_{ii} = v_i$ and $(\mathrm{diag}(\bm v))_{ij} = 0$ ($i\neq j$) \\
$\bm v\sqtimes M \in\mathbb{R}^{n\times n^\prime}$ & $\mathrm{diag}(\bm v) M$ \\
$\bm 0_n \in \mathbb{R}^n$ & $[0, 0, \dots, 0]^\top$ (vector of size $n$) \\
$\bm 1_n \in \mathbb{R}^n$ & $[1, 1, \dots, 1]^\top$ (vector of size $n$) \\
\emph{$p$-norm}: $\|\bm v\|_p \in \mathbb{R}_{\geq 0}$ & $(\sum_{i=1}^n v_i^p)^{1/p}$ \\
\emph{subgradient}: $\partial f(\bm v) \subseteq \mathbb{R}^n$
	& $\{ \bm g\in\mathbb{R}^n \mid \forall\bm v^\prime\in\mathbb{R}^n:~f(\bm v^\prime) - f(\bm v)\geq \bm g^\top(\bm v^\prime - \bm v) \}$ \\
${\cal Z}[f] \subseteq\mathbb{R}^n$ & $\{ \bm v^\prime\in\mathbb{R}^n \mid \partial f(\bm v^\prime) = \{\bm 0_n\} \}$ \\
\emph{convex conjugate}: $f^*(\bm v) \in \mathbb{R}\cup\{+\infty\}$
	& $\sup_{\bm v^\prime\in\mathbb{R}^n} (\bm v^\top \bm v^\prime - f(\bm v^\prime))$ \\
``$f$ is $\kappa$-strongly convex'' ($\kappa>0$)
	& $f(\bm v) - (\kappa/2)\|\bm v\|_2^2$ is convex with respect to $\bm v$ \\
``$f$ is $\mu$-smooth'' ($\mu > 0$)
	& $\|f(\bm v) - f(\bm v^\prime)\|_2\leq \mu\|\bm v - \bm v^\prime\|_2$ for any $\bm v, \bm v^\prime\in\mathbb{R}^n$ \\
\hline
\end{tabular}
}
\end{table*}

%%%%%%%%%%%%%%%%%%%%%%%%%%%%%%%%%%%%%%%%%%%%%%%%%%%%%%%%%%%%
%\section{Preliminaries} \label{sec:preliminaries}
\section{PRELIMINARIES} \label{sec:preliminaries}
%%%%%%%%%%%%%%%%%%%%%%%%%%%%%%%%%%%%%%%%%%%%%%%%%%%%%%%%%%%%

% % % % % % % % % % % % % % % % % % % % % % % % % % % % % %
\subsection{Weighted Regularized Empirical Risk Minimization} \label{sec:weighted-RERM}
% % % % % % % % % % % % % % % % % % % % % % % % % % % % % %

We consider machine learning models for linear prediction based on the
\emph{weighted regularized empirical risk minimization} (weighted RERM).
Under the covariate shift setting, it is known that we should set the weight of the sample
as the ratio of the probability of the sample appearing in the distribution after change
to that before change \citep{shimodaira2000improving,sugiyama2007covariate}.
So we employ the formulation.
%Weighting samples appears in many machine learning setups
%(e.g., compensating class imbalance \citep{haixiang2017learning}),
%
We note that this ``linear prediction'' includes the \emph{kernel method}, which is discussed in
Section \ref{sec:safe-screening-kernelized}.

%We formally define the linear prediction:
%suppose that we learn the model parameters as linear prediction coefficients, that is,
%learn $\bm\beta^{*(\bm w)}\in\mathbb{R}^d$ such that the outcome $\hat{y}$ for a sample $\hat{\bm x}\in\mathbb{R}^d$ is predicted by $\hat{\bm x}^\top\bm\beta^{*(\bm w)}$.
%For regressions ($\hat{y}\in\mathbb{R}$) we usually expect that
%$\hat{y}\approx\hat{\bm x}^\top\bm\beta^{*(\bm w)}$,
%while for binary classifications ($\hat{y}\in\{-1, +1\}$) we usually expect that
%$y\approx\mathrm{sign}(\hat{\bm x}^\top\bm\beta^{*(\bm w)})$.

\begin{definition}\label{def:WRERM}
Given $n$ training samples of $d$-dimensional input variables, scalar output variables
and scalar sample weights, denoted by $X\in\mathbb{R}^{n\times d}$, $\bm y\in\mathbb{R}^n$ and
$\bm w\in\mathbb{R}_{\geq 0}^n$, respectively,
the training computation by weighted RERM for linear prediction is formulated as follows:
\begin{align}
& \bm\beta^{*(\bm w)} := \argmin_{\bm\beta\in\mathbb{R}^d} P_{\bm w}(\bm\beta),
	\quad\text{where} \\
& P_{\bm w}(\bm\beta) := \sum_{i=1}^n w_i \ell_{y_i}(\check{X}_{i:}\bm\beta) + \rho(\bm\beta). \label{eq:primal}
\end{align}
Here, $\ell_y: \mathbb{R}\to\mathbb{R}$ is a convex {\em loss function}\footnote{For $\ell_y(t)$, we assume that only $t$ is a variable of the function ($y$ is assumed to be a constant) when we take its subgradient or convex conjugate.}, $\rho: \mathbb{R}^d\to\mathbb{R}$ is a convex {\em regularization function}, and $\check{X}\in\mathbb{R}^{n\times d}$ is a matrix calculated from $X$ and $\bm y$ and determined depending on $\ell_y$.
In this paper, unless otherwise noted, we consider binary classifications ($\bm y\in\{-1, +1\}^n$)
with $\check{X} := \bm y \sqtimes X$.
For regressions ($\bm y\in\mathbb{R}^n$) we usually set $\check{X} := X$.
\end{definition}
Here, $\bm\beta^{*(\bm w)}$ is the linear prediction coefficients: given a sample $\hat{\bm x}\in\mathbb{R}^d$, its prediction is assumed to be done by $\hat{\bm x}^\top\bm\beta^{*(\bm w)}$.
Since $\ell_y$ and $\rho$ are convex, we can easily confirm that $P_{\bm w}(\bm\beta)$ is convex with respect to $\bm\beta$.
\begin{remark} \label{rem:intercept}
In machine learning models, we often introduce the \emph{intercept} (the common coefficient for any sample)
in the model parameter.
In this formulation we assume that $X_{:d} = \bm 1_n$ to regard $\beta^{*(\bm w)}_d$ (the last element) as the intercept.
\end{remark}

Applying {\em Fenchel's duality theorem} (Appendix \ref{app:fenchel}), we have the following {\em dual problem} of \eqref{eq:primal}:
\begin{align}
& \bm\alpha^{*(\bm w)} := \argmax_{\bm\alpha\in\mathbb{R}^n} D_{\bm w}(\bm\alpha),
	\quad\text{where} \\
%\GAMMA% 	D_{\bm w}(\bm\alpha) := -\sum_{i=1}^n w_i \ell^*_{y_i}(-\gamma_i \alpha_i) - \rho^*(((\bm\gamma \otimes \bm w)\sqtimes\check{X})^\top \bm\alpha), \label{eq:dual}
& D_{\bm w}(\bm\alpha) := -\sum_{i=1}^n w_i \ell^*_{y_i}(-\alpha_i) - \rho^*((\bm w\sqtimes\check{X})^\top \bm\alpha). \label{eq:dual}
\end{align}
%\GAMMA% where $\bm\gamma$ is a positive-valued vector.
The relationship between the original problem \eqref{eq:primal} (called the \emph{primal} problem) and the dual problem \eqref{eq:dual} are described as follows:
\begin{align}
& P_{\bm w}(\bm\beta^{*(\bm w)}) = D_{\bm w}(\bm\alpha^{*(\bm w)}), \label{eq:strong-duality}\\
%\GAMMA% & \bm\beta^{*(\bm w)} \in \partial\rho^*(((\bm\gamma \otimes \bm w)\sqtimes\check{X})^\top \bm\alpha^{*(\bm w)}), \label{eq:KKT-dual2primal}\\
& \bm\beta^{*(\bm w)} \in \partial\rho^*((\bm w\sqtimes\check{X})^\top \bm\alpha^{*(\bm w)}), \label{eq:KKT-dual2primal}\\
%\GAMMA% & \forall i\in[n]:\quad -\gamma_i\alpha^{*(\bm w)}_i \in \partial\ell_{y_i}(\check{X}_{i:}\bm\beta^{*(\bm w)}). \label{eq:KKT-primal2dual}
& \forall i\in[n]:\quad -\alpha^{*(\bm w)}_i \in \partial\ell_{y_i}(\check{X}_{i:}\bm\beta^{*(\bm w)}). \label{eq:KKT-primal2dual}
\end{align}

Note that, depending on the choice of the loss function and the regularization function,
we need to avoid using $\bm\alpha$ such that $D(\bm\alpha) = +\infty$.
See Appendix \ref{app:loss-functions} for details.

% % % % % % % % % % % % % % % % % % % % % % % % % % % % % %
\subsection{Sample-Sparse Loss Functions} \label{sec:sample-sparsity}
% % % % % % % % % % % % % % % % % % % % % % % % % % % % % %

In weighted RERM, we call that a loss function $\ell_y$ induces \emph{sample-sparsity} if elements in $\bm\alpha^{*(\bm w)}$ are easy to become zero.
As implied from \eqref{eq:KKT-dual2primal}, if $\alpha^{*(\bm w)}_i = 0$,
the $i^\mathrm{th}$ sample (i.e., the $i^\mathrm{th}$ row of $\check{X}$) is unnecessary
to represent the primal model parameter $\bm\beta^{*(\bm w)}$ and therefore we can remove them.
Due to \eqref{eq:KKT-primal2dual}, the sample-sparsity can be achieved by $\ell_y$ such that
${\cal Z}[\ell_y] := \{ t\in\mathbb{R} \mid \partial \ell_y(t) = \{0\} \}$ is not a point but an interval.

In case of binary classification problems ($y\in\{-1, +1\}$),
the \emph{hinge loss} $\ell_{y}(t) = \max\{0, 1-t\}$
is a famous sample-sparse loss function.
In fact, ${\cal Z}[\ell_y] = (1, +\infty)$ for the hinge loss.
Other examples are presented in Table \ref{tab:loss-functions} in Appendix \ref{app:loss-functions}.

%%%%%%%%%%%%%%%%%%%%%%%%%%%%%%%%%%%%%%%%%%%%%%%%%%%%%%%%%%%%
%\section{Distributionally Robust Safe Sample Screening} \label{sec:safe-screening}
\section{DISTRIBUTIONALLY ROBUST SAFE SAMPLE SCREENING} \label{sec:safe-screening}
%%%%%%%%%%%%%%%%%%%%%%%%%%%%%%%%%%%%%%%%%%%%%%%%%%%%%%%%%%%%

In this section we show the proposed method DRSSS for weighted RERM with two steps.
First, in Sections \ref{sec:safe-sample-screening} we show SSS rules (rule to remove samples)
for weighted RERM but not DR setup.
Here, ``DR setup'' means that the weight is assumed to be changed in a predetermined range.
To do this, we extended existing SSS rules in \citep{ndiaye2015gap,shibagaki2016simultaneous}.
Then we derive DRSSS rules in Section \ref{sec:DRSS}.

% % % % % % % % % % % % % % % % % % % % % % % % % % % % % %
\subsection{Existing (Non-DR) Safe Sample Screening} \label{sec:safe-sample-screening}
% % % % % % % % % % % % % % % % % % % % % % % % % % % % % %

We consider identifying training samples that do not affect the learned primal model parameter $\bm\beta^{*(\bm w)}$.
As discussed in Section \ref{sec:sample-sparsity},
such samples can be identified as $i\in[n]$ such that $\alpha^{*(\bm w)}_i = 0$.
However, since computing $\bm\alpha^{*(\bm w)}$ is as costly as $\bm\beta^{*(\bm w)}$, we would like to judge whether $\alpha^{*(\bm w)}_i = 0$ or not without computing exact $\bm\alpha^{*(\bm w)}$.
% Moreover, if we change $\bm w$ in a certain range, it is impossible to compute $\bm\beta^{*(\bm w)}$ or $\bm\alpha^{*(\bm w)}$ for infinite number of $\bm w$.
To solve the problem, the SSS first considers identifying the possible range ${\cal B}^{*(\bm w)}\subset\mathbb{R}^d$ such that $\bm\beta^{*(\bm w)}\in{\cal B}^{*(\bm w)}$ is assured.
Then, with ${\cal B}^{*(\bm w)}$ and \eqref{eq:KKT-primal2dual}, we can conclude that the $i^\mathrm{th}$ training sample do not affect the training result $\bm\beta^{*(\bm w)}$ if
$\bigcup_{\bm\beta\in{\cal B}^{*(\bm w)}} \partial\ell_{y_i}(\check{X}_{i:}\bm\beta) = \{0\}$.

First we show how to compute ${\cal B}^{*(\bm w)}$.
In this paper we adopt the computation methods that is available when the regularization function $\rho$ in $P_{\bm w}$ (and also $P_{\bm w}$ itself) of \eqref{eq:primal} are strongly convex.

\begin{lemma}\label{lem:gap-sphere-primal}
Suppose that $\rho$ in $P_{\bm w}$ (and also $P_{\bm w}$ itself) of \eqref{eq:primal} are $\kappa$-strongly convex.
Then, for any $\hat{\bm\beta}\in\mathbb{R}^d$ and $\hat{\bm\alpha}\in\mathbb{R}^n$, we can assure that the following ${\cal B}^{*(\bm w)}\subset\mathbb{R}^d$ must satisfy $\bm\beta^{*(\bm w)}\in{\cal B}^{*(\bm w)}$:
\begin{align*}
%\GAMMA% & \| \bm\beta - \hat{\bm\beta} \|_2 \leq r(\bm w, \bm\gamma, \kappa, \hat{\bm\beta}, \hat{\bm\alpha})
& {\cal B}^{*(\bm w)} := \{ \bm\beta\in\mathbb{R}^d \mid \| \bm\beta - \hat{\bm\beta} \|_2 \leq r(\bm w, \kappa, \hat{\bm\beta}, \hat{\bm\alpha}) \}, \\
%\GAMMA% 	:= \sqrt{\frac{2}{\kappa}[P_{\bm w}(\hat{\bm\beta}) - D_{\bm w}(\hat{\bm\alpha})]}.
& \text{where}\quad
	r(\bm w, \kappa, \hat{\bm\beta}, \hat{\bm\alpha}) := \sqrt{(2/\kappa)[P_{\bm w}(\hat{\bm\beta}) - D_{\bm w}(\hat{\bm\alpha})]}.
\end{align*}
\end{lemma}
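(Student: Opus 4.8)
The plan is to combine two standard ingredients: a quadratic growth lower bound coming from the $\kappa$-strong convexity of the primal objective $P_{\bm w}$, and weak duality together with the strong duality identity \eqref{eq:strong-duality}. The goal is to control $\|\hat{\bm\beta} - \bm\beta^{*(\bm w)}\|_2$ purely in terms of the duality gap $P_{\bm w}(\hat{\bm\beta}) - D_{\bm w}(\hat{\bm\alpha})$ evaluated at the reference pair $(\hat{\bm\beta}, \hat{\bm\alpha})$, which is exactly what the radius $r(\bm w, \kappa, \hat{\bm\beta}, \hat{\bm\alpha})$ encodes.

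First I would exploit the strong convexity of $P_{\bm w}$. By the definition in Table \ref{tab:definitions}, $\kappa$-strong convexity means $P_{\bm w}(\bm\beta) - (\kappa/2)\|\bm\beta\|_2^2$ is convex, which is equivalent to the subgradient form $P_{\bm w}(\bm\beta) \geq P_{\bm w}(\bm\beta') + \bm g^\top(\bm\beta - \bm\beta') + (\kappa/2)\|\bm\beta - \bm\beta'\|_2^2$ for every $\bm g \in \partial P_{\bm w}(\bm\beta')$. Since $\bm\beta^{*(\bm w)}$ minimizes $P_{\bm w}$, we have $\bm 0_d \in \partial P_{\bm w}(\bm\beta^{*(\bm w)})$; choosing $\bm\beta' = \bm\beta^{*(\bm w)}$ and $\bm g = \bm 0_d$ yields the quadratic growth inequality $P_{\bm w}(\hat{\bm\beta}) - P_{\bm w}(\bm\beta^{*(\bm w)}) \geq (\kappa/2)\|\hat{\bm\beta} - \bm\beta^{*(\bm w)}\|_2^2$ for every $\hat{\bm\beta}$. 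This is the single nontrivial analytic step.

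Second I would lower-bound $P_{\bm w}(\bm\beta^{*(\bm w)})$ by the dual value at $\hat{\bm\alpha}$. Weak duality gives $D_{\bm w}(\hat{\bm\alpha}) \leq D_{\bm w}(\bm\alpha^{*(\bm w)})$ because $\bm\alpha^{*(\bm w)}$ maximizes $D_{\bm w}$, and the strong duality identity \eqref{eq:strong-duality} gives $D_{\bm w}(\bm\alpha^{*(\bm w)}) = P_{\bm w}(\bm\beta^{*(\bm w)})$; hence $P_{\bm w}(\bm\beta^{*(\bm w)}) \geq D_{\bm w}(\hat{\bm\alpha})$ for any $\hat{\bm\alpha}$. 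Chaining this with the quadratic growth inequality gives $(\kappa/2)\|\hat{\bm\beta} - \bm\beta^{*(\bm w)}\|_2^2 \leq P_{\bm w}(\hat{\bm\beta}) - P_{\bm w}(\bm\beta^{*(\bm w)}) \leq P_{\bm w}(\hat{\bm\beta}) - D_{\bm w}(\hat{\bm\alpha})$. Rearranging and taking square roots produces precisely $\|\hat{\bm\beta} - \bm\beta^{*(\bm w)}\|_2 \leq r(\bm w, \kappa, \hat{\bm\beta}, \hat{\bm\alpha})$, i.e. $\bm\beta^{*(\bm w)} \in {\cal B}^{*(\bm w)}$. The same weak duality chain shows the quantity under the root is nonnegative, so the radius is well defined.

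I expect the main obstacle to be making the quadratic growth step fully rigorous rather than invoking it as folklore, namely confirming that the paper's phrasing ``$P_{\bm w}$ is $\kappa$-strongly convex'' indeed licenses the minimizer-based inequality via the subgradient characterization of strong convexity. A secondary point is the degenerate case $D_{\bm w}(\hat{\bm\alpha}) = -\infty$ (cf.\ Remark \ref{rem:constraint}): there the bound still holds, since the radius becomes $+\infty$ and ${\cal B}^{*(\bm w)} = \mathbb{R}^d$ trivially contains $\bm\beta^{*(\bm w)}$, though it is only informative when $\hat{\bm\alpha}$ is chosen so that $D_{\bm w}(\hat{\bm\alpha})$ is finite. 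Everything else is direct substitution.
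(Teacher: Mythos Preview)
Your proposal is correct and follows essentially the same route as the paper: the paper's proof applies the quadratic growth inequality for strongly convex functions (packaged there as Lemma~\ref{lem:strong-convexity-sphere}), then replaces $P_{\bm w}(\bm\beta^{*(\bm w)})$ by $D_{\bm w}(\bm\alpha^{*(\bm w)})$ via \eqref{eq:strong-duality}, and finally bounds $D_{\bm w}(\bm\alpha^{*(\bm w)})$ below by $D_{\bm w}(\hat{\bm\alpha})$ using that $\bm\alpha^{*(\bm w)}$ is a maximizer. The only cosmetic difference is that you derive the quadratic growth step from the subgradient characterization of strong convexity instead of quoting it as a lemma.
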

The proof is done by extending \citep{ndiaye2015gap,shibagaki2016simultaneous} to our setup; see Appendix \ref{app:gap-sphere-primal}.
The quantity $P_{\bm w}(\hat{\bm\beta}) - D_{\bm w}(\hat{\bm\alpha})$ is known as the \emph{duality gap}, which must be nonnegative due to \eqref{eq:strong-duality}.
Then we obtain the following SSS rule, which is referred to as the \emph{gap safe screening rule} \citep{ndiaye2015gap}:
\begin{lemma}\label{lem:gap-sample-screening}
Under the same assumptions as Lemma \ref{lem:gap-sphere-primal}, $\alpha_i^{*(\bm w)} = 0$ is assured
(i.e., the $i^\mathrm{th}$ training sample does not affect the training result $\bm\beta^{*(\bm w)}$)
if there exists $\hat{\bm\beta}\in\mathbb{R}^d$ and $\hat{\bm\alpha}\in\mathbb{R}^n$ such that
\begin{align*}
%\GAMMA% & [\check{X}_{i:}\hat{\bm\beta} - \|\check{X}_{i:}\|_2 r(\bm w, \bm\gamma, \kappa, \hat{\bm\beta}, \hat{\bm\alpha}),~
& [\check{X}_{i:}\hat{\bm\beta} - \|\check{X}_{i:}\|_2 r(\bm w, \kappa, \hat{\bm\beta}, \hat{\bm\alpha}), \\
%\GAMMA% 	\check{X}_{i:}\hat{\bm\beta} + \|\check{X}_{i:}\|_2 r(\bm w, \bm\gamma, \kappa, \hat{\bm\beta}, \hat{\bm\alpha})] \subseteq {\cal Z}[\ell_{y_i}].
	&\quad
	\check{X}_{i:}\hat{\bm\beta} + \|\check{X}_{i:}\|_2 r(\bm w, \kappa, \hat{\bm\beta}, \hat{\bm\alpha})] \subseteq {\cal Z}[\ell_{y_i}].
\end{align*}
\end{lemma}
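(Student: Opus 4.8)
The plan is to combine the sphere bound of Lemma~\ref{lem:gap-sphere-primal} with the primal-to-dual KKT stationarity condition~\eqref{eq:KKT-primal2dual}, using a single Cauchy--Schwarz estimate as the bridge. Throughout I would write $r := r(\bm w, \kappa, \hat{\bm\beta}, \hat{\bm\alpha})$ for brevity.

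First I would invoke Lemma~\ref{lem:gap-sphere-primal}, whose hypotheses are assumed here, to conclude that $\bm\beta^{*(\bm w)} \in {\cal B}^{*(\bm w)}$, i.e.\ $\|\bm\beta^{*(\bm w)} - \hat{\bm\beta}\|_2 \leq r$. The key step is then to control the scalar $\check{X}_{i:}\bm\beta^{*(\bm w)}$ over this ball. By Cauchy--Schwarz,
\[
|\check{X}_{i:}\bm\beta^{*(\bm w)} - \check{X}_{i:}\hat{\bm\beta}| = |\check{X}_{i:}(\bm\beta^{*(\bm w)} - \hat{\bm\beta})| \leq \|\check{X}_{i:}\|_2\,\|\bm\beta^{*(\bm w)} - \hat{\bm\beta}\|_2 \leq \|\check{X}_{i:}\|_2\, r,
\]
so that $\check{X}_{i:}\bm\beta^{*(\bm w)}$ lies in the interval $[\check{X}_{i:}\hat{\bm\beta} - \|\check{X}_{i:}\|_2 r,~\check{X}_{i:}\hat{\bm\beta} + \|\check{X}_{i:}\|_2 r]$. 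In fact this interval is exactly the image of the ball ${\cal B}^{*(\bm w)}$ under the linear functional $\bm\beta \mapsto \check{X}_{i:}\bm\beta$, which is why the hypothesis is phrased in terms of the endpoints $\pm\|\check{X}_{i:}\|_2 r$: it enforces the worst case over all admissible $\bm\beta^{*(\bm w)}$ without needing to pin down its exact location.

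By the hypothesis of the lemma this interval is contained in ${\cal Z}[\ell_{y_i}]$, so in particular $\check{X}_{i:}\bm\beta^{*(\bm w)} \in {\cal Z}[\ell_{y_i}]$, which by the definition of ${\cal Z}[\cdot]$ in Table~\ref{tab:definitions} means $\partial\ell_{y_i}(\check{X}_{i:}\bm\beta^{*(\bm w)}) = \{0\}$. Finally, the KKT condition~\eqref{eq:KKT-primal2dual} yields $-\alpha_i^{*(\bm w)} \in \partial\ell_{y_i}(\check{X}_{i:}\bm\beta^{*(\bm w)}) = \{0\}$, whence $\alpha_i^{*(\bm w)} = 0$, as claimed.

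I do not expect a genuine obstacle here: the argument is a direct chaining of the sphere localization, a worst-case linear bound, and the optimality condition. The only point requiring slight care is to recognize that, since $\bm\beta^{*(\bm w)}$ is known only to lie somewhere in ${\cal B}^{*(\bm w)}$, the containment must be demanded for the \emph{entire} interval of possible values of $\check{X}_{i:}\bm\beta$, not merely for a single point; the Cauchy--Schwarz bound supplies precisely the two extremal values and hence the correct interval, so the hypothesis is both necessary and sufficient for the conclusion to follow.
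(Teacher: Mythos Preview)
Your proof is correct and follows essentially the same approach as the paper: localize $\bm\beta^{*(\bm w)}$ in the ball via Lemma~\ref{lem:gap-sphere-primal}, bound the linear functional $\check{X}_{i:}\bm\beta$ over that ball by Cauchy--Schwarz (the paper packages this as Lemma~\ref{lem:optimize-linear}), and then apply the definition of ${\cal Z}[\ell_{y_i}]$ together with~\eqref{eq:KKT-primal2dual}. The only cosmetic difference is that the paper phrases the middle step as an equivalence $\bigcup_{\bm\beta\in{\cal B}^{*(\bm w)}}\partial\ell_{y_i}(\check{X}_{i:}\bm\beta)=\{0\}\Leftrightarrow[\min,\max]\subseteq{\cal Z}[\ell_{y_i}]$ invoking monotonicity of the subdifferential, whereas you argue the needed one-sided implication directly.
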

The proof is presented in Appendix \ref{app:gap-sample-screening}.

\begin{remark}
When we employ the intercept in the model (Remark \ref{rem:intercept}),
we often define that the regularization function $\rho(\bm\beta)$ do not regularize $\beta_d$
(the model parameter for the intercept), that is,
$\rho(\bm\beta)$ is constant with respect to $\beta_d$.
However, to apply Lemmas \ref{lem:gap-sphere-primal} and \ref{lem:gap-sample-screening},
we need to define that $\rho(\bm\beta)$ does regularize $\beta_d$,
because $\rho$ itself is required to be $\kappa$-strongly convex.
\end{remark}

% % % % % % % % % % % % % % % % % % % % % % % % % % % % % %
\subsection{SSS under covariate-shift DR setup} \label{sec:DRSS}
% % % % % % % % % % % % % % % % % % % % % % % % % % % % % %

Here we consider the DR version of the SSS rule above, that is, we identify samples that meets
the SSS rule for any distribution we consider.
From the discussion in Section \ref{sec:weighted-RERM}, in covariate-shift setup,
``any distribution we consider'' can be redefined as the range of sample weights.

\begin{definition}[DRSSS under covariate shift] \label{def:safe-screening-robust-change}
Given $X\in\mathbb{R}^{n\times d}$, $\bm y\in\mathbb{R}^n$ and ${\cal W}\subset\mathbb{R}_{\geq 0}^n$,
the \emph{DRSSS for ${\cal W}$ in weighted RERM} is defined as finding unnecessary samples for any $\bm w\in{\cal W}$, or more specifically, finding $i\in[n]$ that satisfies Lemma \ref{lem:gap-sample-screening} for any $\bm w\in{\cal W}$.
\end{definition}

%To do this, if Lemma \ref{lem:gap-sample-screening} is applicable, we have only to take the following procedure:
\begin{theorem} \label{thm:safe-screening-robust}
If $\rho$ in $P_{\bm w}$ \eqref{eq:primal} is $\kappa$-strongly convex,
DRSSS in Definition \ref{def:safe-screening-robust-change} is achieved as follows:
\begin{enumerate}
\item Choose $\tilde{\bm w}\in{\cal W}$.
\item Compute \eqref{eq:primal}\&\eqref{eq:KKT-primal2dual} or \eqref{eq:dual}\&\eqref{eq:KKT-dual2primal} for $\tilde{\bm w}$ to learn $\bm\beta^{*(\tilde{\bm w})}$ and $\bm\alpha^{*(\tilde{\bm w})}$. 
\item Identify $i\in[n]$ that satisfies Lemma \ref{lem:gap-sample-screening} with
	$\hat{\bm\beta}\gets\bm\beta^{*(\tilde{\bm w})}$ and $\hat{\bm\alpha}\gets\bm\alpha^{*(\tilde{\bm w})}$
	and for any $\bm w\in{\cal W}$. Such $i$ assures $\alpha_i^{*(\bm w)} = 0$~$\forall\bm w\in{\cal W}$.
\end{enumerate}
Here, the last procedure can be computed as follows:
\begin{align}
& [\check{X}_{i:}\bm\beta^{*(\tilde{\bm w})} - \|\check{X}_{i:}\|_2 R,~
	\check{X}_{i:}\bm\beta^{*(\tilde{\bm w})} + \|\check{X}_{i:}\|_2 R] \subseteq {\cal Z}[\ell_{y_i}],
	\label{eq:safe-sample-screening-rule} \\
& \text{where}\quad
	%\GAMMA% R := \max_{\bm w\in{\cal W}} r(\bm w, \bm\gamma, \kappa, \bm\beta^{*(\tilde{\bm w})}, \bm\alpha^{*(\tilde{\bm w})}) = \sqrt{(2/\kappa) \max_{\bm w\in{\cal W}} [P_{\bm w}(\bm\beta^{*(\tilde{\bm w})}) - D_{\bm w}(\bm\alpha^{*(\tilde{\bm w})})]}.
	R := \max_{\bm w\in{\cal W}} r(\bm w, \kappa, \bm\beta^{*(\tilde{\bm w})}, \bm\alpha^{*(\tilde{\bm w})}) \nonumber\\
	&\quad
	= \sqrt{(2/\kappa) \max_{\bm w\in{\cal W}} [P_{\bm w}(\bm\beta^{*(\tilde{\bm w})}) - D_{\bm w}(\bm\alpha^{*(\tilde{\bm w})})]}. \label{eq:safe-sample-screening-radius}
\end{align}
\end{theorem}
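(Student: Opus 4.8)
The plan is to reduce the universally quantified condition in Definition~\ref{def:safe-screening-robust-change} (``Lemma~\ref{lem:gap-sample-screening} holds for every $\bm w\in{\cal W}$'') to the single inclusion \eqref{eq:safe-sample-screening-rule}. First I would verify that Lemmas~\ref{lem:gap-sphere-primal} and~\ref{lem:gap-sample-screening} are applicable for each individual $\bm w\in{\cal W}$, not merely for $\tilde{\bm w}$: since every weight satisfies $w_i\geq 0$ and each $\ell_{y_i}$ is convex, the weighted loss $\sum_i w_i\ell_{y_i}(\check{X}_{i:}\bm\beta)$ is convex, so adding the $\kappa$-strongly convex regularizer $\rho$ makes $P_{\bm w}$ itself $\kappa$-strongly convex for every $\bm w$. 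This supplies the ``$P_{\bm w}$ itself'' hypothesis of Lemma~\ref{lem:gap-sphere-primal} uniformly over ${\cal W}$, so for each $\bm w$ the ball ${\cal B}^{*(\bm w)}$ and the screening test of Lemma~\ref{lem:gap-sample-screening} are valid with any fixed reference pair $(\hat{\bm\beta},\hat{\bm\alpha})$.

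The key step is the observation that fixing the reference to $(\hat{\bm\beta},\hat{\bm\alpha})=(\bm\beta^{*(\tilde{\bm w})},\bm\alpha^{*(\tilde{\bm w})})$ makes the family of screening intervals concentric. Applying Lemma~\ref{lem:gap-sample-screening} for weight $\bm w$ produces an interval centered at $\check{X}_{i:}\bm\beta^{*(\tilde{\bm w})}$, which does not depend on $\bm w$, with half-width $\|\check{X}_{i:}\|_2\, r(\bm w,\kappa,\bm\beta^{*(\tilde{\bm w})},\bm\alpha^{*(\tilde{\bm w})})$, in which only the radius carries the $\bm w$-dependence. Because all these intervals share the same center, they are nested by their half-width, and their union over $\bm w\in{\cal W}$ is exactly the single interval of half-width $\|\check{X}_{i:}\|_2\,R$ with $R:=\max_{\bm w\in{\cal W}} r(\bm w,\kappa,\bm\beta^{*(\tilde{\bm w})},\bm\alpha^{*(\tilde{\bm w})})$. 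The closed form for $R$ in \eqref{eq:safe-sample-screening-radius} then follows by substituting the definition of $r$ and using monotonicity of $t\mapsto\sqrt{t}$ together with $2/\kappa>0$ to move the maximization inside the square root.

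I would finish with the sufficiency direction. Assume \eqref{eq:safe-sample-screening-rule} holds for index $i$, and fix an arbitrary $\bm w\in{\cal W}$. Then $r(\bm w,\kappa,\bm\beta^{*(\tilde{\bm w})},\bm\alpha^{*(\tilde{\bm w})})\leq R$, so the $\bm w$-interval is contained in the radius-$R$ interval, which by \eqref{eq:safe-sample-screening-rule} lies inside ${\cal Z}[\ell_{y_i}]$. Hence the hypothesis of Lemma~\ref{lem:gap-sample-screening} is met for this $\bm w$ (with reference $(\bm\beta^{*(\tilde{\bm w})},\bm\alpha^{*(\tilde{\bm w})})$), yielding $\alpha_i^{*(\bm w)}=0$; as $\bm w$ was arbitrary this is the required guarantee $\alpha_i^{*(\bm w)}=0$ for all $\bm w\in{\cal W}$, which is exactly what step~3 of the procedure claims.

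The only delicate point, and the main thing to get right, is that this worst-case-radius reduction depends crucially on evaluating the duality gap at the \emph{common} reference $(\bm\beta^{*(\tilde{\bm w})},\bm\alpha^{*(\tilde{\bm w})})$ for every $\bm w$, rather than at each $\bm w$-specific optimum: this is precisely what keeps the interval centers fixed and the family nested. Note that Lemma~\ref{lem:gap-sphere-primal} permits \emph{any} reference pair, so plugging the $\tilde{\bm w}$-optimum into the $\bm w$-problem is legitimate even though the resulting duality gap $P_{\bm w}(\bm\beta^{*(\tilde{\bm w})})-D_{\bm w}(\bm\alpha^{*(\tilde{\bm w})})$ is generally strictly positive. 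The evaluation of $\max_{\bm w\in{\cal W}}[P_{\bm w}(\bm\beta^{*(\tilde{\bm w})})-D_{\bm w}(\bm\alpha^{*(\tilde{\bm w})})]$ itself is left to the concrete choice of ${\cal W}$ and is the part that genuinely requires further work.
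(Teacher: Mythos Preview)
Your proposal is correct and follows precisely the line of reasoning the paper intends: the paper does not spell out a separate proof for this theorem, treating it as an immediate consequence of Lemmas~\ref{lem:gap-sphere-primal} and~\ref{lem:gap-sample-screening} once the reference pair is fixed at $(\bm\beta^{*(\tilde{\bm w})},\bm\alpha^{*(\tilde{\bm w})})$ and the worst-case radius $R$ is taken. Your write-up makes explicit the two points the paper leaves tacit---that $P_{\bm w}$ inherits $\kappa$-strong convexity uniformly over ${\cal W}$ because $\bm w\in\mathbb{R}_{\geq 0}^n$, and that the common center is what collapses the family of per-$\bm w$ inclusions into the single inclusion \eqref{eq:safe-sample-screening-rule}---so nothing is missing.
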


In Theorem \ref{thm:safe-screening-robust},
the key point is the maximization in \eqref{eq:safe-sample-screening-radius},
which is a maximization of convex function and therefore is difficult in general.
In fact,
\begin{align}
& P_{\bm w}(\bm\beta^{*(\tilde{\bm w})}) - D_{\bm w}(\bm\alpha^{*(\tilde{\bm w})}) \nonumber\\
%\GAMMA% & = \sum_{i=1}^n w_i [\ell_{y_i}(\check{X}_{i:}\bm\beta^{*(\tilde{\bm w})}) + \ell^*_{y_i}(-\gamma_i \alpha^{*(\tilde{\bm w})}_i)]
& = \underbrace{\sum_{i=1}^n w_i [\ell_{y_i}(\check{X}_{i:}\bm\beta^{*(\tilde{\bm w})}) + \ell^*_{y_i}(-\alpha^{*(\tilde{\bm w})}_i)]}_{\text{linear w.r.t.}~\bm w}
%\GAMMA% 		+ \rho(\bm\beta^{*(\tilde{\bm w})})
%\GAMMA% 		+ \rho^*(((\bm\gamma \otimes \bm w)\sqtimes\check{X})^\top \bm\alpha^{*(\tilde{\bm w})}).
		+ \rho(\bm\beta^{*(\tilde{\bm w})}) \nonumber\\
		&\quad + \underbrace{\rho^*((\bm w\sqtimes\check{X})^\top \bm\alpha^{*(\tilde{\bm w})})}_{\text{convex w.r.t.}~\bm w},
		\label{eq:duality-gap}
\end{align}
is convex with respect to $\bm w$. (Note that $\rho^*$ is assured to be convex.)
However, depending on the choice of the regularization function $\rho$
and the range of weight changes ${\cal W}$, it may be explicitly computed.
So, in Section \ref{sec:DRSS-examples} we show specific calculations for some typical setups.

%%%%%%%%%%%%%%%%%%%%%%%%%%%%%%%%%%%%%%%%%%%%%%%%%%%%%%%%%%%%
%\section{Specific Computations of DRSSS for L2-Regularization} \label{sec:DRSS-examples}
\section{SPECIFIC COMPUTATIONS OF DRSSS FOR L2-REGULARIZATION} \label{sec:DRSS-examples}
%%%%%%%%%%%%%%%%%%%%%%%%%%%%%%%%%%%%%%%%%%%%%%%%%%%%%%%%%%%%

In this section we discuss specific computations of DRSSS in Theorem \ref{thm:safe-screening-robust},
which depends on the choice of the loss function $\ell_y$, the regularization function $\rho$,
and the distribution changes (i.e., range of weight changes ${\cal W}$).
Especially, we present how to compute the quantities to conduct DRSSS:
computation of \eqref{eq:safe-sample-screening-rule} and maximization of \eqref{eq:duality-gap} in the theorem.

First we can see that the choice of the loss function does not affect the required computations
for DRSSS so much, since the terms including the loss function $\ell_y$ and its convex conjugate $\ell^*_y$
in \eqref{eq:safe-sample-screening-rule} and \eqref{eq:duality-gap} are calculated only by known quantities
$\check{X}$, $\bm\beta^{*(\tilde{\bm w})}$ and $\bm\alpha^{*(\tilde{\bm w})}$.
So, after determining $\ell_y$, we have only to apply functions described in Table \ref{tab:loss-functions}.

On the other hand, the choice of the regularization function and the range of weight changes largely affect
the difficulty of the computations.
So we mainly discuss the use of L2-regularization and show the algorithm to compute them.
Note that, it is not obvious to extend DRSSS for L2-regularization to other regularization functions;
discussions are presented in Appendix \ref{app:regularization-functions}.

%-% % % % % % % % % % % % % % % % % % % % % % % % % % % % % % %
%-% \subsection{DRSSS for weight changes in L1-norm} \label{sec:safe-screening-weight-l1-norm}
%-% % % % % % % % % % % % % % % % % % % % % % % % % % % % % % %
%-% 
%-% First we consider the case when the range of weight change is defined by the L1-norm,
%-% that is ${\cal W} := \{ \bm w \mid \| \bm w - \tilde{\bm w} \|_1\leq S \}$.
%-% Since this implies limiting the change in number of sample, it is a natural constraint.
%-% 
%-% \begin{lemma} \label{lem:maximize-in-l1norm}
%-% For any convex function $f: \mathbb{R}^n\to\mathbb{R}$, any vector $\tilde{\bm w}\in\mathbb{R}^n$ and any $S > 0$,
%-% the maximization $\max_{\bm w: \|\bm w - \tilde{\bm w}\|_1\leq S} f(\bm w)$ is achieved as:
%-% \begin{align*}
%-% \max_{\bm w: \|\bm w - \tilde{\bm w}\|_1\leq S} f(\bm w)
%-% = \max_{\bm w\in{\cal V}_{\tilde{\bm w}, S}} f(\bm w),
%-% \end{align*}
%-% where ${\cal V}_{\tilde{\bm w}, S}$ is the set of all vertices of the polyhedron $\{ \bm w \mid \|\bm w - \tilde{\bm w}\|_1\leq S \}$, or more specifically,
%-% \begin{align*}
%-% {\cal V}_{\tilde{\bm w}, S} := \{ \bm w \in\mathbb{R}^n \mid \exists i\in[n], \forall i^\prime\in[n]\setminus\{i\}:~|w_i - \tilde{w}_i| = S,~w_i = \tilde{w}_{i^\prime} \}.
%-% \end{align*}
%-% \end{lemma}
%-% 
%-% \begin{proof}
%-% See Corollary 32.3.3 of \citep{rockafellar1970convex}.
%-% This relationship holds not only for $\|\bm w - \tilde{\bm w}\|_1\leq S$ but also for any closed convex polyhedron.
%-% \end{proof}
%-% 
% % % % % % % % % % % % % % % % % % % % % % % % % % % % % %
\subsection{DRSSS for L2-Regularization and Weight Changes in L2-Norm} \label{sec:safe-screening-l2-regularization}
% % % % % % % % % % % % % % % % % % % % % % % % % % % % % %

As stated in Lemma \ref{lem:gap-sphere-primal},
DRSSS requires the regularization function to be $\kappa$-strongly convex.
So the \emph{L2-regularization}
$\rho(\bm\beta) := \frac{\lambda}{2}\|\bm\beta\|_2^2$ ($\lambda > 0$) can be applicable for DRSSS.
In addition, we assume that the range of weight changes ${\cal W}$
is defined by L2-norm ${\cal W} := \{ \bm w \mid \| \bm w - \tilde{\bm w} \|_2\leq S \}$ ($S > 0$).

With L2-regularization, since $\rho^*(\bm v) = \frac{1}{2\lambda} \|\bm v\|_2^2$,
the duality gap \eqref{eq:duality-gap} is calculated as
\begin{align}
%& = \sum_{i=1}^n w_i [\ell_{y_i}(\check{X}_{i:}\bm\beta^{*(\tilde{\bm w})}) + \ell^*_{y_i}(-\gamma_i \alpha^{*(\tilde{\bm w})}_i)] + \frac{\lambda}{2}\|\bm\beta^{*(\tilde{\bm w})}\|_2^2 + \frac{1}{2\lambda} \bm\alpha^{*(\tilde{\bm w})\top} ((\bm\gamma \otimes \bm w)\sqtimes\check{X}) ((\bm\gamma \otimes \bm w)\sqtimes\check{X})^\top \bm\alpha^{*(\tilde{\bm w})} \nonumber\\
%\GAMMA% = & \sum_{i=1}^n w_i [\ell_{y_i}(\check{X}_{i:}\bm\beta^{*(\tilde{\bm w})}) + \ell^*_{y_i}(-\gamma_i \alpha^{*(\tilde{\bm w})}_i)] + \frac{\lambda}{2}\|\bm\beta^{*(\tilde{\bm w})}\|_2^2 \nonumber\\
%\GAMMA% 	& + \frac{1}{2\lambda} \bm w^\top ((\bm\gamma \otimes \bm\alpha^{*(\tilde{\bm w})})\sqtimes\check{X}) ((\bm\gamma \otimes \bm\alpha^{*(\tilde{\bm w})})\sqtimes\check{X})^\top \bm w. \label{eq:duality-gap-l2reg}
& \sum_{i=1}^n w_i [\ell_{y_i}(\check{X}_{i:}\bm\beta^{*(\tilde{\bm w})}) + \ell^*_{y_i}(-\alpha^{*(\tilde{\bm w})}_i)] + \frac{\lambda}{2}\|\bm\beta^{*(\tilde{\bm w})}\|_2^2 \nonumber\\
	&\quad
	+ \frac{1}{2\lambda} \bm w^\top (\bm\alpha^{*(\tilde{\bm w})}\sqtimes\check{X}) (\bm\alpha^{*(\tilde{\bm w})}\sqtimes\check{X})^\top \bm w. \label{eq:duality-gap-l2reg}
\end{align}

%\GAMMA% Since $((\bm\gamma \otimes \bm\alpha^{*(\tilde{\bm w})})\sqtimes\check{X}) ((\bm\gamma \otimes \bm\alpha^{*(\tilde{\bm w})})\sqtimes\check{X})^\top$
Since $(\bm\alpha^{*(\tilde{\bm w})}\sqtimes\check{X}) (\bm\alpha^{*(\tilde{\bm w})}\sqtimes\check{X})^\top$
is positive semidefinite, we can see that \eqref{eq:duality-gap-l2reg} is a quadratic convex function with respect to $\bm w$.
As a method to solve a constrained maximization of \eqref{eq:duality-gap-l2reg}
in ${\cal W} := \{ \bm w \mid \| \bm w - \tilde{\bm w} \|_2\leq S \}$,
let us apply the \emph{method of Lagrange multipliers}.
Then we can maximize it since all stationary points can be algorithmically computed. In fact,
\begin{theorem} \label{thm:maximize-convex-quadratic-sketch}
The following maximization can be solved algorithmically:
\begin{align}
& \max_{\bm w\in{\cal W}} \bm w^\top A \bm w + 2\bm b^\top\bm w,
	\quad\text{where}  \label{eq:maximize-convex-quadratic} \\
& {\cal W} := \{ \bm w\in\mathbb{R}^n \mid \|\bm w - \tilde{\bm w}\|_2\leq S \}, \nonumber\\
& \tilde{\bm w}\in\mathbb{R}^n,
	\quad \bm b\in\mathbb{R}^n,
	\quad S > 0, \nonumber\\
& A\in\mathbb{R}^{n\times n}:~\text{symmetric, positive semidefinite, nonzero}. \nonumber
\end{align}
\end{theorem}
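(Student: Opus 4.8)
The plan is to reduce this constrained maximization of a convex quadratic to a one-dimensional root-finding problem via an eigendecomposition of $A$ together with the method of Lagrange multipliers. First I would note that since $A$ is positive semidefinite, the objective $\bm w\mapsto \bm w^\top A\bm w + 2\bm b^\top\bm w$ is convex, and a convex function attains its maximum over the compact convex ball ${\cal W}$ at an extreme point of ${\cal W}$; the extreme points of a Euclidean ball are exactly its bounding sphere $\{\bm w : \|\bm w - \tilde{\bm w}\|_2 = S\}$. Hence it suffices to maximize subject to the equality constraint $\|\bm w - \tilde{\bm w}\|_2 = S$, a setting in which Lagrange multipliers apply cleanly.

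Next I would shift coordinates by $\bm z := \bm w - \tilde{\bm w}$, so that the objective becomes $\bm z^\top A\bm z + 2\bm c^\top\bm z + \text{const}$ with $\bm c := A\tilde{\bm w} + \bm b$ and the constraint becomes $\|\bm z\|_2 = S$. The stationarity condition is $A\bm z + \bm c = \nu\bm z$ for a scalar multiplier $\nu$, i.e. $(A - \nu I)\bm z = -\bm c$, and the second-order condition for a maximum forces $A - \nu I \preceq 0$, i.e. $\nu \geq \lambda_{\max}(A)$. Diagonalizing $A = Q\,\mathrm{diag}(\lambda_1,\dots,\lambda_n)\,Q^\top$ and writing $\bm d := Q^\top\bm c$, $\bm u := Q^\top\bm z$, the condition decouples into $(\lambda_j - \nu)u_j = -d_j$. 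For $\nu$ distinct from every eigenvalue this gives $u_j = d_j/(\nu - \lambda_j)$, and substituting into $\|\bm u\|_2^2 = S^2$ yields the scalar secular equation $\sum_{j=1}^n d_j^2/(\nu - \lambda_j)^2 = S^2$. On $(\lambda_{\max}, +\infty)$ the left-hand side is strictly decreasing with limit $0$ at infinity, so it has at most one root there, which can be bracketed and refined algorithmically (bisection or Newton); more generally, clearing denominators turns the secular equation into a polynomial in $\nu$ of degree at most $2n$, whose finitely many real roots are all computable, and each root maps back to a candidate $\bm w = \tilde{\bm w} + Q\bm u$ over which the objective is evaluated to select the maximum.

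The main obstacle will be the degenerate \emph{hard case}, in which $A - \nu I$ is singular at the relevant multiplier, i.e. $\nu$ coincides with an eigenvalue $\lambda_j$ while the corresponding coordinates $d_j$ of $\bm c$ vanish; this arises for the maximum precisely when $\bm c$ is orthogonal to the top eigenspace and $S$ exceeds a threshold, so that the secular equation has no root in the open interval $(\lambda_{\max},+\infty)$. There the stationarity equation leaves the components of $\bm z$ lying in the $\lambda_j$-eigenspace undetermined, so I would treat each eigenvalue as a candidate multiplier separately: fix $u_k = d_k/(\lambda_j - \lambda_k)$ for the indices $k$ with $\lambda_k \neq \lambda_j$, and, whenever the leftover $S^2 - \sum_{k:\lambda_k\neq\lambda_j} u_k^2$ is nonnegative, assign that residual norm to the $\lambda_j$-eigenspace. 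On that eigenspace the objective depends only on the norm (because $\bm c$ has no component there), so any admissible direction is optimal and completes the enumeration. Handling this case carefully is what guarantees that every stationary configuration is captured, after which taking the maximum over the finite candidate set is routine.
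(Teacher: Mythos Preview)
Your proposal is correct and follows essentially the same route as the paper: reduce to the boundary sphere by convexity, apply Lagrange multipliers, diagonalize $A$ to obtain the secular equation $\sum_j d_j^2/(\nu-\lambda_j)^2=S^2$, enumerate its finitely many real roots together with the degenerate cases where $\nu$ equals an eigenvalue, and take the maximum over the resulting finite candidate set. The only point worth tightening is your claim that ``the second-order condition for a maximum forces $A-\nu I\preceq 0$'': the standard second-order \emph{necessary} condition only gives negative semidefiniteness on the tangent space, and the full inequality is the stronger trust-region global-optimality characterization---but since you fall back to enumerating all polynomial roots and all eigenvalues anyway (exactly as the paper does in Lemma~\ref{lem:find-invsq} and the singular case of Theorem~\ref{thm:maximize-convex-quadratic}), this imprecision does not create a gap.
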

See Appendix \ref{app:maximize-convex-quadratic} for the proof.
After maximizing \eqref{eq:duality-gap-l2reg} by Theorem \ref{thm:maximize-convex-quadratic-sketch},
we can compute \eqref{eq:safe-sample-screening-radius} and therefore the DRSSS rule in
Theorem \ref{thm:safe-screening-robust} can be computed.

Although the maximization of Theorem \ref{thm:maximize-convex-quadratic-sketch} requires $O(n^3)$ time,
% the result is valuable since conventional SSS methods cannot derive SSS rules for a range of $\bm w$.
we can think that the cost can be recovered by the future training computations.

% % % % % % % % % % % % % % % % % % % % % % % % % % % % % %
\subsection{DRSSS for L2-Regularization Using Kernels} \label{sec:safe-screening-kernelized}
% % % % % % % % % % % % % % % % % % % % % % % % % % % % % %

In this section we show the condition to apply both the kernel method and DRSSS,
and show the explicit computations when L2-regularization is employed.

For detailed formulation of kernels, see Chapter 2 of \citep{scholkopf2004kernel} for example.
%
%In this paper we omit the detailed definition of the kernel method; however, within this context, we define it as the scenario where
In this paper, for (weighted) RERM, we define a computation is \emph{kernelized} if input variables for any instance (e.g., elements of $X$) or primal model parameters (e.g., $\bm\beta^{*(\tilde{\bm w})}$) cannot be explicitly obtained, yet inner products between samples (e.g., $X_{i:} X_{i^\prime :}^\top$) are accessible.
In Appendix \ref{app:regularization-functions} we will discuss the condition of regularization functions
so that weighted RERM can accept both kernelization and SSS (although it is not obvious to extend to DRSSS).

To compute DRSSS with kernelized setup,
let us see \eqref{eq:safe-sample-screening-rule} and \eqref{eq:duality-gap-l2reg}.
Noticing that the relationship \eqref{eq:KKT-dual2primal} for L2-regularization is calculated as
$\bm\beta^{*(\tilde{\bm w})}
%\GAMMA% 	= \frac{1}{\lambda}((\bm\gamma \otimes \tilde{\bm w})\sqtimes\check{X})^\top \bm\alpha^{*(\tilde{\bm w})}
	= \frac{1}{\lambda}(\tilde{\bm w}\sqtimes\check{X})^\top \bm\alpha^{*(\tilde{\bm w})}
%\GAMMA% 	= \frac{1}{\lambda}\check{X}^\top (\bm\gamma \otimes \tilde{\bm w} \otimes \bm\alpha^{*(\tilde{\bm w})})$,
	= \frac{1}{\lambda}\check{X}^\top (\tilde{\bm w} \otimes \bm\alpha^{*(\tilde{\bm w})})$,
we can compute \eqref{eq:safe-sample-screening-rule} and \eqref{eq:duality-gap-l2reg} with kernelized as
\begin{align*}
& \hspace{-1em} \check{X}_{i:}\bm\beta^{*(\tilde{\bm w})} \pm \|\check{X}_{i:}\|_2 R
%\GAMMA% = \frac{1}{\lambda} \dashuline{\check{X}_{i:}\check{X}^\top} (\bm\gamma \otimes \tilde{\bm w} \otimes \bm\alpha^{*(\tilde{\bm w})}) \pm \sqrt{\dashuline{\check{X}_{i:}\check{X}_{i:}^\top}} R.
= \frac{1}{\lambda} \dashuline{\check{X}_{i:}\check{X}^\top} (\tilde{\bm w} \otimes \bm\alpha^{*(\tilde{\bm w})}) \pm \sqrt{\dashuline{\check{X}_{i:}\check{X}_{i:}^\top}} R, \\
& \hspace{-1em} P_{\bm w}(\bm\beta^{*(\tilde{\bm w})}) - D_{\bm w}(\bm\alpha^{*(\tilde{\bm w})}) \\
%& = \sum_{i=1}^n w_i [\ell_{y_i}(\check{X}_{i:}\bm\beta^{*(\tilde{\bm w})}) + \ell^*_{y_i}(-\gamma_i \alpha^{*(\tilde{\bm w})}_i)] + \frac{\lambda}{2}\|\bm\beta^{*(\tilde{\bm w})}\|_2 + \frac{1}{2\lambda} \bm w^\top ((\bm\gamma \otimes \bm\alpha^{*(\tilde{\bm w})})\sqtimes\check{X}) ((\bm\gamma \otimes \bm\alpha^{*(\tilde{\bm w})})\sqtimes\check{X})^\top \bm w \\
%& = \sum_{i=1}^n w_i \left[\ell_{y_i}\left(\frac{1}{\lambda} \check{X}_{i:}\check{X}^\top (\bm\gamma \otimes \tilde{\bm w} \otimes \bm\alpha^{*(\bm w)})\right) + \ell^*_{y_i}(-\gamma_i \alpha^{*(\tilde{\bm w})}_i)\right] \\
%	& \phantom{=} + \frac{1}{2\lambda}\left( (\bm\gamma \otimes \tilde{\bm w} \otimes \bm\alpha^{*(\bm w)})^\top \check{X}\check{X}^\top (\bm\gamma \otimes \tilde{\bm w} \otimes \bm\alpha^{*(\bm w)}) \right)
%		+ \frac{1}{2\lambda} \bm w^\top \mathrm{diag}(\bm\gamma \otimes \bm\alpha^{*(\tilde{\bm w})})\check{X} \check{X}^\top \mathrm{diag}(\bm\gamma \otimes \bm\alpha^{*(\tilde{\bm w})}) \bm w \\
%\GAMMA% 	=& \sum_{i=1}^n w_i \left[\ell_{y_i}\left(\frac{1}{\lambda} \dashuline{\check{X}_{i:}\check{X}^\top} (\bm\gamma \otimes \tilde{\bm w} \otimes \bm\alpha^{*(\bm w)})\right) + \ell^*_{y_i}(-\gamma_i \alpha^{*(\tilde{\bm w})}_i)\right] \\
	&= \sum_{i=1}^n w_i \left[\ell_{y_i}\left(\frac{1}{\lambda} \dashuline{\check{X}_{i:}\check{X}^\top} (\tilde{\bm w} \otimes \bm\alpha^{*(\bm w)})\right) + \ell^*_{y_i}(-\alpha^{*(\tilde{\bm w})}_i)\right] \\
%\GAMMA% 	& + \frac{1}{2\lambda} (\bm w - \tilde{\bm w})^\top \mathrm{diag}(\bm\gamma \otimes \bm\alpha^{*(\tilde{\bm w})})\dashuline{\check{X} \check{X}^\top} \mathrm{diag}(\bm\gamma \otimes \bm\alpha^{*(\tilde{\bm w})}) (\bm w - \tilde{\bm w}).
	&\quad + \frac{1}{2\lambda} (\bm w - \tilde{\bm w})^\top \mathrm{diag}(\bm\alpha^{*(\tilde{\bm w})})\dashuline{\check{X} \check{X}^\top} \mathrm{diag}(\bm\alpha^{*(\tilde{\bm w})}) (\bm w - \tilde{\bm w}).
\end{align*}
Here, \dashuline{expressions with dashed lines} denote that the quantities that can be computed
even when kernelized, that is, without using $\bm\beta^{*(\tilde{\bm w})}$ or $X$ except for inner products between samples $X_{i:} X_{i^\prime :}^\top$. This concludes that DRSSS rules can be computed even kernelized with L2-regularization.

%%%%%%%%%%%%%%%%%%%%%%%%%%%%%%%%%%%%%%%%%%%%%%%%%%%%%%%%%%%%
%\section{Numerical Experiments} \label{sec:experiment}
\section{NUMERICAL EXPERIMENTS} \label{sec:experiment}
%%%%%%%%%%%%%%%%%%%%%%%%%%%%%%%%%%%%%%%%%%%%%%%%%%%%%%%%%%%%

% % % % % % % % % % % % % % % % % % % % % % % % % % % % % %
%\subsection{Experimental Settings} \label{sec:experimental-settings}
% % % % % % % % % % % % % % % % % % % % % % % % % % % % % %

\begin{table}[t]
\caption{Tabular datasets for DRSSS experiments.
All are binary classification datasets from LIBSVM dataset \citep{libsvmDataset}.}
\label{tab:dataset-SS}
\begin{center}
{\small
\begin{tabular}{l|rrr}
\hline
\multicolumn{1}{c|}{Name} & \multicolumn{1}{c}{$n$} & \multicolumn{1}{c}{$n^+$} & \multicolumn{1}{c}{$d$} \\
\hline
australian        &  690 &  307 & 15 \\
breast-cancer     &  683 &  239 & 11 \\
heart             &  270 &  120 & 14 \\
ionosphere        &  351 &  225 & 35 \\
sonar             &  208 &   97 & 61 \\
splice (train)    & 1000 &  517 & 61 \\
svmguide1 (train) & 3089 & 2000 & 5  \\
phishing          &11055 & 6157 & 69 \\
\hline
\end{tabular}
}
\end{center}
\end{table}

We evaluate the performances of DRSSS in the following aspects.
In Section \ref{sec:experiment-weight-changes} and \ref{sec:experiment-deep-learning}
we evaluated the ratios of removed samples by DRSSS.
In Section \ref{sec:experiment-cost} we compared the storage reductions and computational costs
of conventional SSS after DRSSS or without DRSSS.
In Section \ref{sec:experiment-shifts} we compared the ``safeness'' of DRSSS, that is,
DRSSS does not change the model parameters even if a part of samples is removed,
while other strategies do.

All experiments are done with SVM (hinge loss + L2-regularization) model, and applied different kernel functions.
In Section \ref{sec:experiment-weight-changes} we employed the linear kernel (linear prediction) and the \emph{radial basis function (RBF) kernel} \citep{scholkopf2004kernel} for tabular datasets in Table \ref{tab:dataset-SS}.
In Section \ref{sec:experiment-deep-learning} we employed NTK (Section \ref{sec:introduction}) for an image dataset. Since NTK is a kernel to approximate a neural network model, we used such dataset that neural network models are often applied.
In Sections \ref{sec:experiment-cost} and \ref{sec:experiment-shifts}, we examined for the same
tabular datasets and the linear prediction.

The followings are common in all experiments above.
In these experiments, we set the initial sample weights
($\tilde{\bm w}$ in Theorem \ref{thm:safe-screening-robust}) as $\tilde{\bm w} = \bm 1_n$,
and learn $\bm\alpha^{*(\tilde{\bm w})}$ by solving \eqref{eq:dual}.
Since the experiments do not employ validation or test dataset,
we use all samples in the dataset for training.
Then, we set the range of weights as ${\cal W} := \{ \bm w \mid \|\bm w - \tilde{\bm w}\|_2\leq S \}$
(Section \ref{sec:DRSS-examples}), where $S$ is specified as follows:
Since all datasets are for binary classifications,
first we assume the weight change that
	the weights for positive samples ($\{i\mid y_i = +1\}$) from $1$ to $a$,
	while retaining the weights for negative samples ($\{i\mid y_i = -1\}$) as $1$.
Then, we defined $S$ as the size of weight change above; specifically, we set $S = \sqrt{n^+}|a - 1|$ ($n^+$: number of positive samples in the training dataset).
We vary $a$ within the range $0.95\leq a\leq 1.05$, assuming a maximum change of up to 5\% per sample weight.

Please refer Appendix \ref{app:implementation} for details of experimental environments and implementation information.

%Also, we demonstrate the difficulty of conventional SSS to achieve DRSSS in Section \ref{sec:experiment-weight-changes}.
%Since conventional SSS can handle only specific weight change, we consider ``simulating''
%the maximization computation $\max_{\bm w\in{\cal W}}$ in \eqref{eq:safe-sample-screening-radius}
%(Theorem \ref{thm:safe-screening-robust}) by repeatedly choosing $\bm w$
%from $\|\bm w - \tilde{\bm w}\|_2 = S$ at random and regarding the largest one among chosen $\bm w$ as $R$.
%As a result, $R$ is underestimated and therefore samples that should not be screened out may be
%incorrectly screened out.

% % % % % % % % % % % % % % % % % % % % % % % % % % % % % %
\subsection{Safe Screening Rates for Tabular Data} \label{sec:experiment-weight-changes}
% % % % % % % % % % % % % % % % % % % % % % % % % % % % % %

\begin{figure*}[t]
{\tabcolsep=0mm
\begin{tabular}{ccc}
\begin{minipage}{0.64\hsize}
	{\tabcolsep=0mm
	\begin{tabular}{cc}
	\begin{minipage}{0.5\hsize}
	\centering
	Linear kernel\\
	\includegraphics[width=\hsize]{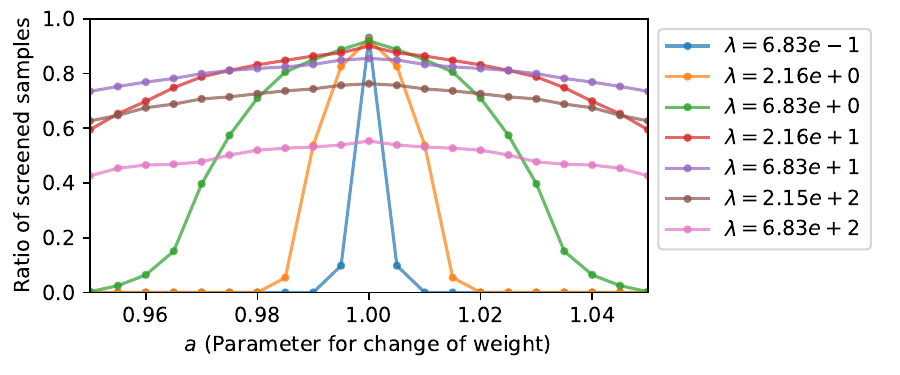}
	\end{minipage}
	&
	\begin{minipage}{0.5\hsize}
	\centering
	RBF kernel\\
	\includegraphics[width=\hsize]{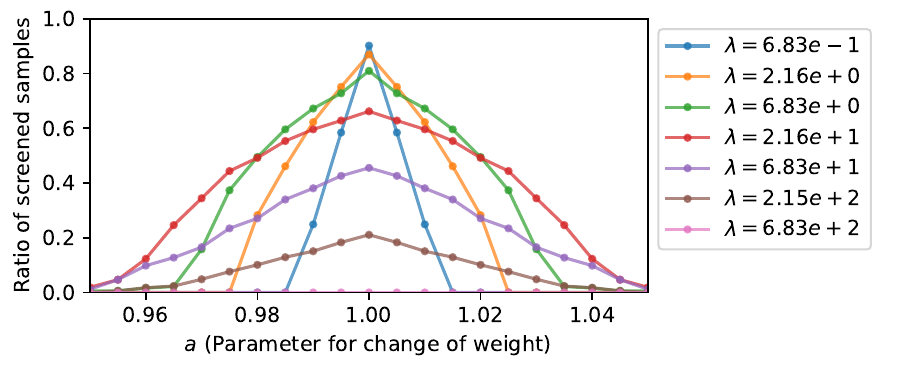}
	\end{minipage}
	\end{tabular}
	}
	\caption{Ratio of removed samples by DRSSS for dataset ``breast-cancer''.}
	\label{fig:SSS-example}
\end{minipage}
&
~~~
&
\begin{minipage}{0.32\hsize}
\centering
\includegraphics[width=\hsize]{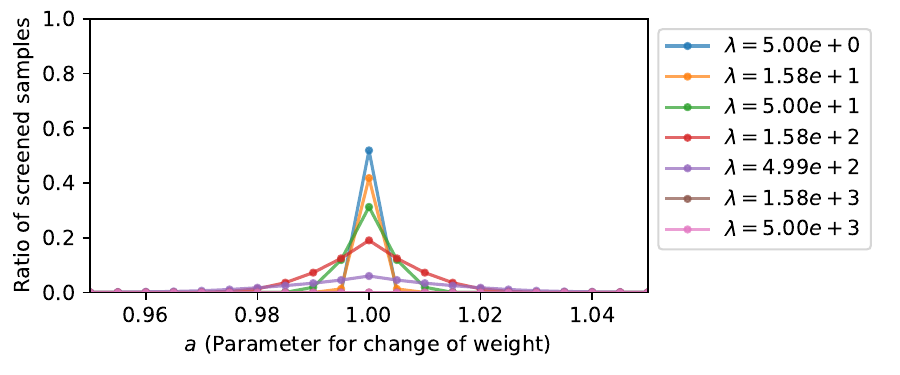}
\caption{Ratio of removed samples by DRSSS for dataset ``CIFAR-10'' using neural tangent kernel (NTK).}
\label{fig:SSS-NTK}
\end{minipage}
\end{tabular}
}
\end{figure*}

%\begin{figure}[t]
%\centering
%\includegraphics[width=\hsize]{simulated/breast-cancer-simul_lam6,83_a1,02_maxKept.pdf}
%\caption{Number of unscreened samples by DRSSS and by the simulation of repeating conventional SSS, for the dataset ``breast-cancer'', linear prediction, $\lambda=6.83$ and $a=1.02$. Error bars are presented but very short; see Appendix \ref{app:simulated} for details.}
%\label{fig:SSS-simulated}
%\end{figure}

First, we present \emph{safe screening rates}, that is, the ratio of the number of removed samples to that of all samples, for two SVM setups: the linear kernel and the RBF kernel.
The RBF kernel is a kernel function for finite-dimensional vectors: given the original dataset $Z\in\mathbb{R}^{n\times d^\prime}$ with $n$ samples and $d^\prime$ features, the RBF kernel can be computed as $X_{i:}X_{i^\prime :}^\top = \exp(-\|Z_{i:} - Z_{i^\prime :}\|_2 / \zeta)$, where $\zeta > 0$ is a hyperparameter.
The datasets used in these experiments are detailed in Table \ref{tab:dataset-SS}.
In this experiment, we adapt the regularization hyperparameter $\lambda$ based on the characteristics of the data.
These details are described in Appendix \ref{app:experimental-setup}.

As an example, for the ``breast-cancer'' dataset, we show the DRSSS result in Figure \ref{fig:SSS-example}
for setups of no kernel and RBF kernel.
Results for other datasets are presented in Appendix \ref{app:experiment}.

These plots allow us to assess the tolerance for changes in sample weights.
Since $a=1$ means no weight change, the screening rates fall as $a$ becomes apart from 1.
We can also see that the screening rates tend to be large for small $\lambda$ if $a$ is near to 1,
while the falls of screening rates is slower for large $\lambda$.
The former means that smaller $\lambda$ tends to have more unnecessary samples ($\alpha^{*(\tilde{\bm w})}_i = 0$) if we do not consider the change of weights.
On the other hand, the behavior of the latter is accounted for by Lemma \ref{lem:gap-sphere-primal}
since larger $\lambda$ reduces
%\GAMMA% $r(\bm w, \bm\gamma, \kappa, \hat{\bm\beta}, \hat{\bm\alpha})$
$r(\bm w, \kappa, \hat{\bm\beta}, \hat{\bm\alpha})$.
For instance,
with $a=0.98$ (weight of each positive sample is reduced by two percent, or equivalent weight change in L2-norm),
the sample screening rates are 0.82 (linear) and 0.27 (RBF kernel) for SVM with $\lambda=\mathrm{68.3}$.
This implies that, even if the weights are changed in such range,
a number of samples are still identified as redundant in the sense of prediction.

%Then we show the result of simulated DRSSS by repeatedly applying conventional SSS.
%In Figure \ref{fig:SSS-simulated} we presented the number of unscreened samples for the dataset ``breast-cancer'', $\lambda=6.83$ and $a=1.02$. By the proposed DRSSS, we identified that 198 samples are needed to be kept. However, by the simulation, even if we choose $\bm w$ for 1,024 times at random, only 71 samples are identified to be needed on average (the right end of the blue line), which demonstrates the difficulty of conventional SSS to apply to our DRSSS setup. Experimental details and results for other datasets are presented in Appendix \ref{app:simulated}.

% % % % % % % % % % % % % % % % % % % % % % % % % % % % % %
\subsection{Safe Screening Rates with Neural Tangent Kernel} \label{sec:experiment-deep-learning}
% % % % % % % % % % % % % % % % % % % % % % % % % % % % % %

% 本実験では, 分布ロバストセーフ事例スクリーニングを, 画像のためのニューラルネットワークを近似したNTKに対して適用する.
% まず2クラス分類のための訓練データとして, CIFAR-10データセットから``airplane''クラスと``automobile''クラスから2,500枚ずつの画像を取り出した5,000枚の画像を用意する.
% また画像のためのニューラルネットワークとして, 畳み込みの大きさを3×3, 活性化関数をReLUとした3層畳み込みニューラルネットワークの最後に線形の全結合層を接続したものを用いる.
% これらの設定のもとでNTKを学習し, 全訓練事例間のカーネルの値を得る. このとき学習においてはNTKのPython実装である``Neural Tangents''を用いる.
% 最後に表形式のデータと同様, ヒンジ損失・L2正則化での重み付き正則化経験損失最小化にて学習を行い, セーフ事例スクリーニングが行えた事例の割合を計算する.

We also experimented DRSSS with NTK for an image dataset.
First, we composed 5,000-sample binary classification dataset from ``CIFAR-10'' dataset \citep{cifar10}, by choosing from classes ``airplane'' (2,529) and `'automobile'' (2,471).
Then, as a neural network for images, we set up a convolutional network of input width 1,024 (same as the number of pixels in an image), three layers of $3 \times 3$ convolution filters, followed by a linear fully connected layer.
With these setups, we trained NTK and obtained the kernel values for all pairs of samples by a Python library ``Neural Tangents'' \citep{neuraltangents2020}.
Finally, similar to tabular datasets, we trained SVM with the NTK and then applied DRSSS.

The result is presented in Figure \ref{fig:SSS-NTK}.
We can confirm a similar tendency to the ones discussed in Section \ref{sec:experiment-weight-changes},
which demonstrates the validity of the proposed method.

% % % % % % % % % % % % % % % % % % % % % % % % % % % % % %
\subsection{Effect of DRSSS before Deployment Phase (Multiple Weights)} \label{sec:experiment-cost}
% % % % % % % % % % % % % % % % % % % % % % % % % % % % % %

\begin{table*}
\caption{Training sample reductions by DRSSS (\blue{blue}) and the comparison the cost of SSS computations (\red{red}). Note that the values in ``Remained samples'' is the same between the setups ``ISSS'' and ``DRSSS+ISSS''.}
\label{tab:ssevw-cost}
{\small
\begin{tabular}{lrr|rr|rrrr}
\hline
 & & & \multicolumn{2}{c|}{Remained samples} & \multicolumn{4}{c}{Total computation time (s)} \\
 & & & \multicolumn{1}{l}{only} & \multicolumn{1}{l|}{ISSS,} & \multicolumn{1}{c}{ISSS} & \multicolumn{3}{c}{DRSSS+ISSS} \\
\cline{7-9}
\multicolumn{1}{c}{Dataset} & \multicolumn{1}{c}{$n$} & \multicolumn{1}{c|}{$\lambda$} & \multicolumn{1}{l}{DRSSS} & \multicolumn{1}{l|}{DRSSS+ISSS} &  & \multicolumn{1}{c}{Total} & \multicolumn{1}{c}{{\scriptsize DRSSS}} & \multicolumn{1}{c}{{\scriptsize ISSS after DRSSS}} \\
\hline
sonar         &  208 &  65.8 & \blue{75.0\%} & 64.8\% & \red{  1.535} & \red{ 1.204} & {\scriptsize 0.025} & {\scriptsize  1.179} \\
heart         &  270 &  27.0 & \blue{53.3\%} & 47.7\% & \red{  1.180} & \red{ 0.720} & {\scriptsize 0.008} & {\scriptsize  0.712} \\
ionosphere    &  351 & 111.0 & \blue{78.6\%} & 61.4\% & \red{  4.337} & \red{ 2.242} & {\scriptsize 0.018} & {\scriptsize  2.223} \\
breast-cancer &  683 &  21.6 & \blue{13.6\%} & 11.0\% & \red{  4.754} & \red{ 0.586} & {\scriptsize 0.010} & {\scriptsize  0.577} \\
australian    &  690 & 218.2 & \blue{65.7\%} & 55.8\% & \red{ 13.517} & \red{ 2.326} & {\scriptsize 0.016} & {\scriptsize  2.311} \\
splice        & 1000 & 316.2 & \blue{79.0\%} & 68.8\% & \red{ 17.216} & \red{11.256} & {\scriptsize 0.077} & {\scriptsize 11.179} \\
svmguide1     & 3089 &  97.7 & \blue{57.6\%} & 48.4\% & \red{169.767} & \red{22.645} & {\scriptsize 0.148} & {\scriptsize 22.497} \\
phishing      &11055 &3495.9 & \blue{58.1\%} & 36.3\% &\red{3457.354} &\red{1193.231}&{\scriptsize 11.279} & {\scriptsize 1181.952} \\
\hline
\multicolumn{4}{r}{Sample reduction in storage $\uparrow$} & \multicolumn{5}{l}{$\uparrow$ Sample reduction when training} \\
\multicolumn{4}{r}{(Reduction \emph{before} weight changes)} & \multicolumn{5}{l}{(Reduction \emph{after} weight change determined)}
\end{tabular}
}

\caption{Model parameter shifts in L2-norm of $\bm\beta^*$ by sample removal strategies, for 100 random choices of $\bm w$ (except dataset ``phishing''). Note that, since the model parameter computations are done with 32-bit floating-point values, shifts less than $10^{-6}$ are considered to include only numerical errors.}
\label{tab:proposed-vs-naive-removal}
{\small
\begin{tabular}{lrcc|ccc}
\hline
 & & \multicolumn{2}{c|}{Remained samples} & \multicolumn{3}{c}{Model parameter shifts (Average$\pm$StDev)} \\
\multicolumn{1}{c}{Dataset} & \multicolumn{1}{c}{$n$} & DRSSS, Random & NaiveSS & DRSSS & Random & NaiveSS \\
\hline
sonar         &  208 & 78.40\% & 71.60\% & 5.03e-07 $\pm$ 4.31e-07 & 1.24e-01 $\pm$ 1.10e-04 & 4.86e-02 $\pm$ 3.54e-04 \\
heart         &  270 & 74.80\% & 72.60\% & 9.17e-08 $\pm$ 1.03e-07 & 8.19e-02 $\pm$ 1.21e-04 & 7.60e-03 $\pm$ 3.85e-04 \\
ionosphere    &  351 & 82.10\% & 76.40\% & 2.38e-07 $\pm$ 2.45e-07 & 8.76e-02 $\pm$ 1.43e-04 & 3.41e-02 $\pm$ 3.89e-04 \\
breast-cancer &  683 & 46.90\% & 44.80\% & 1.76e-08 $\pm$ 4.55e-08 & 9.66e-02 $\pm$ 6.63e-05 & 2.63e-06 $\pm$ 1.50e-05 \\
australian    &  690 & 78.10\% & 75.70\% & 1.69e-07 $\pm$ 1.72e-07 & 7.19e-02 $\pm$ 5.80e-05 & 4.11e-03 $\pm$ 2.10e-04 \\
splice        & 1000 & 88.00\% & 84.50\% & 1.05e-07 $\pm$ 9.79e-08 & 6.50e-02 $\pm$ 7.48e-05 & 1.05e-02 $\pm$ 2.17e-04 \\
svmguide1     & 3089 & 85.10\% & 84.20\% & 8.08e-08 $\pm$ 3.28e-07 & 3.42e-02 $\pm$ 4.35e-05 & 7.34e-08 $\pm$ 3.06e-07 \\
\hline
\end{tabular}
}
\end{table*}

The proposed method assumes that the sample weights change in a certain range,
and aims to remove unnecessary samples for any weight in the range.
We examine the effect of storage reductions of training samples by generating weights at random.
%
%First, given the radius of weight change $S$ ($\|\bm w - \tilde{\bm w}\|_2 = S$)
%where the initial weight is $\tilde{\bm w} = \bm 1_n$ and $S$ is set to the case of $a=0.99$
%in Section \ref{sec:experiment-weight-changes}, we generated $\bm w$ for 10,000 times at random\footnote{
%More specifically, $\bm w$ is chosen as $\bm w\gets\tilde{\bm w} + S\bm v$ ($\bm v\in\mathbb{R}^n$),
%where $\bm v\gets \bm v^\prime/\|\bm v^\prime\|_2$ and $\bm v^\prime$ is drawn from an $n$-dimensional
%standard normal distribution.}.
%
%We add that the model parameters trained for the initial weight $(\bm\beta^{*(\tilde{\bm w})}, \bm\alpha^{*(\tilde{\bm w})})$ is known, and is used as the reference solution of SSS ($\hat{\bm\beta}$ and $\hat{\bm\alpha}$ in Lemma \ref{lem:gap-sphere-primal} and Theorem \ref{thm:safe-screening-robust}).

For the weight change $S$ at $a=0.95$, we generated $\bm w$ for 10,000 times at random (details in Appendix \ref{app:random-weights}).
With the setups above, we demonstrate the training sample reduction and compare the cost of SSS computations
for the following two strategies.
One is ISSS (individual safe sample screening): For each of randomly chosen weights, just apply non-DR SSS.
The other is DRSSS+ISSS: First we apply DRSSS to remove the training samples. Then, for each of randomly chosen weights, we apply non-DR SSS for the remained samples.
Note that we omit the training cost between these two strategies since DRSSS always removes a subset of samples removed by non-DR SSS. As a result, both of the methods above keep the same set of samples before training.

The results are presented in Table \ref{tab:ssevw-cost}.
Firstly, only using non-DR SSS, we could not remove samples before the weights are provided.
On the other hand, DRSSS can reduce a part of samples for the ratio (see the column ``Remained samples -- DRSSS'').
Comparing the cost for safe screening, applying DRSSS can reduce the computational cost.
From the result, we can see that the cost of SSS is reduced more rapidly compared to the reduction of the
training samples.
This is because the cost of non-DR SSS is two-fold against the number of samples:
For $n$-sample training data, we need to examine SSS rules for $n$ times,
and we need to compute SSS rules by using an $n$-dimensional vector $\hat{\bm\alpha}$ (Lemma \ref{lem:gap-sphere-primal}).
This concludes that, when we need to train a model for various weights,
DRSSS is effective both in the storage and the computational cost.

% % % % % % % % % % % % % % % % % % % % % % % % % % % % % %
\subsection{Model Parameter Shifts by DRSSS and Other Sample Removals} \label{sec:experiment-shifts}
% % % % % % % % % % % % % % % % % % % % % % % % % % % % % %

The proposed method DRSSS is a method to remove samples that do not change the trained model parameters.
So we demonstrate other methods of sample removals in the change of the model parameters.
For DRSSS, we remove samples by Theorem \ref{thm:safe-screening-robust}.
%\begin{align}
%\check{X}_{i:} \bm\beta^{*(\tilde{\bm w})} - \|\check{X}_{i:}\|_2 r(\bm w, \kappa, \bm\beta^{*(\tilde{\bm w})}, \bm\alpha^{*(\tilde{\bm w})}) > 1.
%\label{eq:DRSS-SVM}
%\end{align}
%
We also examine the following two methods.
One is ``Random'': removing the same number of samples as that removed by DRSSS at random.
The other is ``Naive SS'': removing samples by non-DR SSS (Lemma \ref{lem:gap-sample-screening}),
which may remove a larger number of samples to that of DRSSS.
%For this setup, by applying the hinge loss to Lemma \ref{lem:gap-sample-screening},
%we remove the $i^\mathrm{th}$ sample if
%\begin{align}
%\check{X}_{i:} \bm\beta^{*(\tilde{\bm w})} > 1.
%\label{eq:DRSS-SVM-noweight}
%\end{align}

The change of the model parameters are examined as follows:
we $S$ as the case of $a=0.99$, and $\lambda$ as $n$.
After samples are removed for each of the setups, for 100 randomly generated weights (details in Appendix \ref{app:random-weights}) we train the model parameters and measure the difference in L2-norm of the primal model parameter $\bm\beta^{*(\bm w)}$ before and after sample removals.
(For ``random removal'' setup, we examined 10 cases of removals and took the average.)

We present results in Table \ref{tab:proposed-vs-naive-removal}.
First, we can confirm that the difference of model parameters after DRSSS causes only numerical errors.
On the other hand, even the same number of samples are removed,
``Random'' causes much larger difference in the model parameters.
The method ``NaiveSS'' causes smaller difference compared to ``Random'', however,
it is much larger than ``DRSSS''.
This concludes that sample removals by DRSSS can keep the model parameters unchanged,
and under the precondition it can remove appropriate number of samples.

%%%%%%%%%%%%%%%%%%%%%%%%%%%%%%%%%%%%%%%%%%%%%%%%%%%%%%%%%%%%
%\section{Conclusions, Limitations and Future Works} \label{sec:conclusion}
\section{CONCLUSIONS, LIMITATIONS AND FUTURE WORKS} \label{sec:conclusion}
%%%%%%%%%%%%%%%%%%%%%%%%%%%%%%%%%%%%%%%%%%%%%%%%%%%%%%%%%%%%
%
In this study, we propose the DRSSS method as a robust data reduction method for situations where the data distribution at deployment is unknown.
This method effectively reduces data storage and model update costs in DR learning settings.
Our technical contribution extends the SSS technique, originally developed for accelerating a single optimization problem, to be applicable across an entire class of optimization problems within a specific category.
The current limitation of the proposed method is that the applicability of DRSSS, similar to SSS, is confined to convex optimization problems of a specific class.
Therefore, applying it to complex models such as deep learning requires using kernel methods based on NTK, meaning it cannot be directly applied to deep learning models.
Additionally, there may be cases where the removable data volume by DRSSS is insufficient,
since DRSSS identifies unnecessary samples under a stringent condition that exactly the same model parameter should be obtained even when they are removed.
It is, therefore, crucial to develop a framework that relaxes these requirements and allows for the removal of more data.

\subsubsection*{Acknowledgements}

This work was partially supported by MEXT KAKENHI (JP20H00601, JP23K16943, JP24K15080), JST CREST (JPMJCR21D3 including AIP challenge program, JPMJCR22N2), JST Moonshot R\&D (JPMJMS2033-05), JST AIP Acceleration Research (JPMJCR21U2), NEDO (JPNP18002, JPNP20006) and RIKEN Center for Advanced Intelligence Project.

\bibliography{paper}
\bibliographystyle{apalike}

%%%%%%%%%%%%%%%%%%%%%%%%%%%%%%%%%%%%%%%%%%%%%%%%%%%%%%%%%%%%
%
%\input{checklist}
%
\clearpage
\onecolumn
\appendix

%\section{Proofs of General Lemmas} \label{app:general-lemmas}
\section{PROOFS OF GENERAL LEMMAS} \label{app:general-lemmas}

In this appendix we show several lemmas used in other appendices.

\begin{lemma}\label{lem:fenchel-moreau} \emph{(Fenchel-Moreau theorem)}
For a convex function $f: \mathbb{R}^d\to\mathbb{R}\cup\{+\infty\}$,
$f^{**}$ is equivalent to $f$ if $f$ is convex, proper (i.e., $\exists \bm v\in\mathbb{R}^d:~f(\bm v) < +\infty$) and lower-semicontinuous.
\end{lemma}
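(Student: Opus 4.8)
The plan is to prove the two inequalities $f^{**}\le f$ and $f^{**}\ge f$ separately: the first holds essentially by definition, while the second is where the three hypotheses (convexity, properness, lower-semicontinuity) are actually used. The easy direction follows from the Fenchel--Young inequality. Directly from the definition of the conjugate in Table~\ref{tab:definitions}, for every $\bm u,\bm v\in\mathbb{R}^d$ we have $f^*(\bm u)\ge \bm u^\top\bm v - f(\bm v)$, equivalently $f(\bm v)\ge \bm u^\top\bm v - f^*(\bm u)$. Taking the supremum over $\bm u$ yields $f(\bm v)\ge \sup_{\bm u}(\bm u^\top\bm v - f^*(\bm u)) = f^{**}(\bm v)$; properness guarantees $f^*$ is not identically $+\infty$, so this manipulation is meaningful.

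The crucial reformulation I would then establish is that $f^{**}$ is exactly the pointwise supremum of all affine minorants of $f$. An affine map $\bm v\mapsto \bm u^\top\bm v - c$ lies below $f$ if and only if $c\ge \bm u^\top\bm v - f(\bm v)$ for all $\bm v$, i.e. if and only if $c\ge f^*(\bm u)$; the tightest minorant of slope $\bm u$ is therefore $\bm v\mapsto \bm u^\top\bm v - f^*(\bm u)$, and taking the supremum over $\bm u$ recovers $f^{**}$. Consequently the identity $f=f^{**}$ is equivalent to the geometric statement that $f$ coincides with the supremum of its affine minorants.

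For the reverse inequality $f^{**}\ge f$ I would argue by contradiction via a separating-hyperplane argument on the epigraph. Suppose $f^{**}(\bm v_0)<f(\bm v_0)$ for some $\bm v_0$. Since $f$ is convex and lower-semicontinuous, its epigraph $\mathrm{epi}(f)=\{(\bm v,t)\mid t\ge f(\bm v)\}$ is a closed convex subset of $\mathbb{R}^{d+1}$, nonempty by properness, and the point $(\bm v_0, f^{**}(\bm v_0))$ lies strictly outside it. Applying the strict separation theorem produces a separating functional $(\bm u,s)$, from which, when $s\ne 0$, I would extract a genuine affine minorant of $f$ whose value at $\bm v_0$ strictly exceeds $f^{**}(\bm v_0)$; this contradicts the reformulation above, since no affine minorant can exceed the supremum $f^{**}(\bm v_0)$ at $\bm v_0$.

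The main obstacle is the case in which the separating hyperplane is \emph{vertical}, i.e. $s=0$, which can occur precisely when $\bm v_0$ lies outside the effective domain of $f$. Then the separation only gives an inequality in the $\bm v$-variables and not directly an affine minorant. I would resolve this in the standard way: properness and lower-semicontinuity ensure that at least one genuine non-vertical affine minorant $a_0\le f$ exists, and adding a sufficiently large positive multiple of the vertical functional to $a_0$ yields non-vertical affine minorants of $f$ taking arbitrarily large values at $\bm v_0$, contradicting the finiteness of $f^{**}(\bm v_0)$. Dealing with this vertical case cleanly, together with the careful bookkeeping of $+\infty$ values throughout, is the delicate part of the argument; everything else is routine.
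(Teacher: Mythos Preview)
Your argument is the standard and correct proof of the Fenchel--Moreau theorem: the Fenchel--Young inequality gives $f^{**}\le f$, the reformulation of $f^{**}$ as the supremum of affine minorants is right, and the separating-hyperplane argument on the closed convex epigraph (with the usual care in the vertical-hyperplane case) yields the reverse inequality. There is no genuine gap.

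That said, the paper does not actually prove this lemma at all --- it simply cites Section~12 of Rockafellar's \emph{Convex Analysis} and moves on. So there is no ``paper's own approach'' to compare against; you have supplied a full proof where the authors supplied only a reference. Your sketch is in fact essentially the argument one finds in the cited source, so in that sense your approach and the (outsourced) proof coincide.
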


\begin{proof}
See Section 12 of \citep{rockafellar1970convex} for example.
\end{proof}

As a special case of Lemma \ref{lem:fenchel-moreau}, it holds if $f$ is finite ($\forall \bm v\in\mathbb{R}^d:~f(\bm v)<+\infty$) and convex.

\begin{lemma}\label{lem:strong-convexity-smoothness}
For a convex function $f: \mathbb{R}^d\to\mathbb{R}\cup\{+\infty\}$,
\begin{itemize}
\item $f^*$ is $(1/\nu)$-strongly convex if $f$ is proper and $\nu$-smooth.
\item $f^*$ is $(1/\kappa)$-smooth if $f$ is proper, lower-semicontinuous and $\kappa$-strongly convex.
\end{itemize}
\end{lemma}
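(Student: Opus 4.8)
The plan is to treat this as the classical conjugate correspondence between strong convexity and smoothness, proving each bullet by pairing the defining quadratic inequality with the Fenchel--Young relation $\bm s\in\partial f(\bm x)\iff \bm x\in\partial f^*(\bm s)\iff f(\bm x)+f^*(\bm s)=\bm s^\top\bm x$, whose validity rests on $f^{**}=f$ from Lemma~\ref{lem:fenchel-moreau}. Throughout I use smoothness in its quadratic form: $\nu$-smoothness of a convex $f$ is equivalent to the upper bound $f(\bm v')\le f(\bm v)+\nabla f(\bm v)^\top(\bm v'-\bm v)+\tfrac{\nu}{2}\|\bm v'-\bm v\|_2^2$, and $\kappa$-strong convexity to the matching lower bound with $\tfrac{\kappa}{2}$ and a subgradient in place of the gradient.

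For the second bullet, I would first argue that $\kappa$-strong convexity of a proper, lower-semicontinuous $f$ forces, for every $\bm u$, the map $\bm x\mapsto f(\bm x)-\bm u^\top\bm x$ to have a \emph{unique} minimizer; hence $f^*(\bm u)$ is finite and $\partial f^*(\bm u)$ is the singleton consisting of that minimizer, so $f^*$ is differentiable. To bound the Lipschitz constant of $\nabla f^*$, take $\bm u_1,\bm u_2$, set $\bm x_i:=\nabla f^*(\bm u_i)$ so that $\bm u_i\in\partial f(\bm x_i)$, and add the two strong-convexity inequalities evaluated at the pair $(\bm x_1,\bm x_2)$. The $f$-values cancel, leaving $(\bm u_1-\bm u_2)^\top(\bm x_1-\bm x_2)\ge\kappa\|\bm x_1-\bm x_2\|_2^2$; Cauchy--Schwarz then gives $\|\bm x_1-\bm x_2\|_2\le(1/\kappa)\|\bm u_1-\bm u_2\|_2$, i.e.\ $\nabla f^*$ is $(1/\kappa)$-Lipschitz, which is exactly $(1/\kappa)$-smoothness of $f^*$.

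For the first bullet, I would take any $\bm s\in\partial f^*(\bm u_1)$, so $\bm u_1=\nabla f(\bm s)$ and $f^*(\bm u_1)=\bm u_1^\top\bm s-f(\bm s)$ by Fenchel--Young. Starting from $f^*(\bm u_2)\ge \bm u_2^\top\bm x-f(\bm x)$ for every $\bm x$, I would substitute the smoothness upper bound for $f(\bm x)$ expanded about $\bm s$ and then maximize the resulting concave quadratic in $\bm x$; the optimizer $\bm x-\bm s=\tfrac1\nu(\bm u_2-\bm u_1)$ produces exactly the curvature term $\tfrac{1}{2\nu}\|\bm u_2-\bm u_1\|_2^2$. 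Rewriting the linear part through $\bm u_2^\top\bm s-f(\bm s)=f^*(\bm u_1)+\bm s^\top(\bm u_2-\bm u_1)$ yields $f^*(\bm u_2)\ge f^*(\bm u_1)+\bm s^\top(\bm u_2-\bm u_1)+\tfrac{1}{2\nu}\|\bm u_2-\bm u_1\|_2^2$, which is the defining inequality of $(1/\nu)$-strong convexity of $f^*$.

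The arithmetic here is harmless; the real care lies in the regularity bookkeeping. I expect the main obstacle to be justifying that $\partial f$ and $\partial f^*$ are genuinely inverse to one another and that the suprema defining $f^*$ are attained where needed --- this is where properness, lower-semicontinuity and $f^{**}=f$ (Lemma~\ref{lem:fenchel-moreau}) are essential, and where I would be most careful to avoid implicitly assuming what is being proved (in particular, the differentiability of $f^*$ must be \emph{derived} from strong convexity of $f$, not presupposed). An alternative, if a fully self-contained derivation is not wanted, is to cite the standard statement of this conjugacy (e.g.\ Hiriart-Urruty--Lemar\'echal, or Kakade--Shalev-Shwartz--Tewari).
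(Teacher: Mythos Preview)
Your argument is correct and is the standard proof of the conjugate correspondence. The paper, however, does not prove this lemma at all: its entire proof reads ``See Section X.4.2 of \cite{hiriart1993convex} for example.'' So the comparison is simply that you have written out the details the paper defers to a reference; the Fenchel--Young/subdifferential-inversion route you take is precisely what one finds in Hiriart-Urruty--Lemar\'echal, and your closing sentence already anticipates this citation as an alternative. There is no substantive methodological difference to discuss.
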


\begin{proof}
See Section X.4.2 of \citep{hiriart1993convex} for example.
\end{proof}

\begin{corollary} \label{cor:strong-convexity-smoothness}
Suppose that the regularization function $\rho$ in \eqref{eq:primal} is $\kappa$-strongly convex, which is required to apply SSS (Lemma \ref{lem:gap-sphere-primal}).
Then, from Lemma \ref{lem:strong-convexity-smoothness}, $\rho^*$ must be $(1/\kappa)$-smooth. This implies that, in this case, $\rho^*$ cannot be infinite.
\end{corollary}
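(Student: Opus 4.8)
The plan is to read the corollary as two linked claims and dispatch them in order. The first claim, that $\rho^*$ is $(1/\kappa)$-smooth, is a direct application of the second bullet of Lemma~\ref{lem:strong-convexity-smoothness}; the second claim, that $(1/\kappa)$-smoothness forces $\rho^*$ to be finite everywhere, I would read off from the definition of ``$\mu$-smooth'' in Table~\ref{tab:definitions}. The only genuine work is verifying the hypotheses of Lemma~\ref{lem:strong-convexity-smoothness} and securing a single point at which $\rho^*$ is finite.

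For the first claim, I would check that $\rho$ meets the three hypotheses of that bullet: proper, lower-semicontinuous, and $\kappa$-strongly convex. The last is the standing assumption of the corollary. The other two follow from Definition~\ref{def:WRERM}, where $\rho\colon\mathbb{R}^d\to\mathbb{R}$ is declared finite-valued and convex: a convex function that is finite on all of $\mathbb{R}^d$ is continuous, hence lower-semicontinuous, and, being finite somewhere, is proper. Lemma~\ref{lem:strong-convexity-smoothness} then yields that $\rho^*$ is $(1/\kappa)$-smooth.

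For the second claim, I would unfold the definition of smoothness from Table~\ref{tab:definitions}, namely $|\rho^*(\bm v)-\rho^*(\bm v')|\le (1/\kappa)\|\bm v-\bm v'\|_2$ for all $\bm v,\bm v'\in\mathbb{R}^d$ (the scalar case of the tabulated bound). First I would note that $\rho^*$ is finite at at least one point: since $\rho$ is proper and convex, $\rho^*$ is proper, i.e. not identically $+\infty$. Concretely, $\kappa$-strong convexity furnishes a quadratic minorant $\rho(\bm v)\ge(\kappa/2)\|\bm v\|_2^2+\bm a^\top\bm v+b$ for suitable $\bm a,b$, so that $\rho^*(\bm u)$ is bounded above by the supremum of a strictly concave quadratic in $\bm v$ and is therefore finite. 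Fixing any $\bm v_0$ with $\rho^*(\bm v_0)<+\infty$, the Lipschitz bound gives $|\rho^*(\bm v)-\rho^*(\bm v_0)|\le(1/\kappa)\|\bm v-\bm v_0\|_2<+\infty$ for every $\bm v$, so $\rho^*$ is finite everywhere; equivalently, were $\rho^*$ equal to $+\infty$ at any point, the left-hand side would be infinite, contradicting the finite bound.

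The main obstacle — really the only subtle point — is that the paper's notion of ``$\mu$-smooth'' in Table~\ref{tab:definitions} is a Lipschitz condition on the \emph{values} of $\rho^*$ rather than the customary gradient-Lipschitz condition. Under this reading the finiteness conclusion becomes almost immediate once a finite reference point is in hand, so the care goes entirely into justifying that point before propagating finiteness through the Lipschitz inequality. I would therefore present the quadratic-minorant estimate explicitly, since it simultaneously establishes properness of $\rho^*$ and keeps the argument self-contained.
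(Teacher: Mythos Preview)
Your proposal is correct and follows the same approach the paper intends: invoke Lemma~\ref{lem:strong-convexity-smoothness} to obtain $(1/\kappa)$-smoothness of $\rho^*$, then read off finiteness. The paper itself gives no separate proof for this corollary beyond the sentence embedded in its statement, so your write-up simply supplies the details the paper omits---checking the hypotheses of the lemma and making the ``smooth $\Rightarrow$ finite'' step explicit via the Lipschitz inequality from Table~\ref{tab:definitions} together with a finite anchor point.
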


\begin{lemma}\label{lem:strong-convexity-sphere}
Suppose that $f: \mathbb{R}^d\to\mathbb{R}\cup\{+\infty\}$ is a $\kappa$-strongly convex function,
and let $\bm v^* = \targmin_{\bm v\in\mathbb{R}^d} f(\bm v)$ be the minimizer of $f$.
Then, for any $\bm v\in\mathbb{R}^d$, we have
\begin{align*}
\| \bm v - \bm v^* \|_2 \leq \sqrt{\frac{2}{\kappa}[f(\bm v) - f(\bm v^*)]}.
\end{align*}
\end{lemma}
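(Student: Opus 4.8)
The plan is to reduce everything to the elementary quadratic-lower-bound form of the subgradient inequality for strongly convex functions, applied at the minimizer. First I would unpack the definition of strong convexity from Table~\ref{tab:definitions}: setting $g(\bm v) := f(\bm v) - (\kappa/2)\|\bm v\|_2^2$, the hypothesis that $f$ is $\kappa$-strongly convex says precisely that $g$ is convex. Since $g$ is convex, its subgradient inequality $g(\bm v) \geq g(\bm v^*) + \bm h^\top(\bm v - \bm v^*)$ holds for every $\bm h \in \partial g(\bm v^*)$. Because the quadratic term is differentiable, $\partial g(\bm v^*) = \partial f(\bm v^*) - \kappa\bm v^*$, so substituting $\bm h = \bm g - \kappa\bm v^*$ (with $\bm g \in \partial f(\bm v^*)$) and replacing $g$ by its definition yields, after the cross terms and the two squared-norm terms reassemble, the familiar strengthened inequality
\[
f(\bm v) \geq f(\bm v^*) + \bm g^\top(\bm v - \bm v^*) + \frac{\kappa}{2}\|\bm v - \bm v^*\|_2^2
\]
valid for every $\bm g \in \partial f(\bm v^*)$.

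Next I would specialize to the minimizer. Since $\bm v^*$ minimizes the convex function $f$, the first-order optimality condition gives $\bm 0_d \in \partial f(\bm v^*)$; choosing $\bm g = \bm 0_d$ in the inequality above annihilates the linear term and leaves
\[
f(\bm v) - f(\bm v^*) \geq \frac{\kappa}{2}\|\bm v - \bm v^*\|_2^2.
\]
Because $\bm v^*$ is the global minimizer, the left-hand side is nonnegative, so I may multiply through by $2/\kappa$ and take square roots of both sides to obtain the claimed bound $\|\bm v - \bm v^*\|_2 \leq \sqrt{(2/\kappa)[f(\bm v) - f(\bm v^*)]}$.

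This argument is essentially routine, so I do not anticipate a serious obstacle; the only point requiring care is the algebraic passage from the definition of strong convexity (stated by subtracting $(\kappa/2)\|\bm v\|_2^2$) to the quadratic-lower-bound inequality, where one must verify that the term $-\kappa\bm v^{*\top}(\bm v - \bm v^*)$ combines with $(\kappa/2)\|\bm v\|_2^2 - (\kappa/2)\|\bm v^*\|_2^2$ to reassemble exactly into $(\kappa/2)\|\bm v - \bm v^*\|_2^2$. One should also note that $\partial f(\bm v^*)$ is nonempty — which holds here since $\bm v^*$ is an attained minimizer of a convex function — so that the statement $\bm 0_d \in \partial f(\bm v^*)$ is meaningful and the chosen $\bm g$ exists.
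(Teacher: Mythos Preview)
Your proof is correct: deriving the quadratic-lower-bound subgradient inequality from the paper's definition of strong convexity and then specializing to $\bm g = \bm 0_d$ at the minimizer is exactly the standard argument, and your algebraic check that $-\kappa\bm v^{*\top}(\bm v - \bm v^*) + (\kappa/2)\|\bm v\|_2^2 - (\kappa/2)\|\bm v^*\|_2^2 = (\kappa/2)\|\bm v - \bm v^*\|_2^2$ goes through. The paper itself does not supply a proof at all for this lemma --- it simply writes ``See \cite{ndiaye2015gap} for example'' --- so your self-contained derivation is more explicit than what appears there.
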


\begin{proof}
See \citep{ndiaye2015gap} for example.
\end{proof}

\begin{lemma} \label{lem:optimize-linear}
For any vector $\bm a, \bm c\in\mathbb{R}^n$ and $S > 0$,
\begin{align*}
& \min_{\bm v\in\mathbb{R}^n:~\|\bm v - \bm c\|_2\leq S} \bm a^\top \bm v = \bm a^\top \bm c - S\|\bm a\|_2,
& \max_{\bm v\in\mathbb{R}^n:~\|\bm v - \bm c\|_2\leq S} \bm a^\top \bm v = \bm a^\top \bm c + S\|\bm a\|_2. \\
\end{align*}
\end{lemma}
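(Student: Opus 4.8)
The plan is to reduce both optimizations to a single Cauchy--Schwarz estimate over a ball centered at the origin. First I would substitute $\bm u := \bm v - \bm c$, so that the feasible set $\{\bm v\in\mathbb{R}^n \mid \|\bm v - \bm c\|_2 \leq S\}$ becomes the centered ball $\{\bm u\in\mathbb{R}^n \mid \|\bm u\|_2 \leq S\}$ and the objective splits as $\bm a^\top \bm v = \bm a^\top \bm c + \bm a^\top \bm u$. Since $\bm a^\top \bm c$ is a constant, it suffices to determine $\max_{\|\bm u\|_2 \leq S} \bm a^\top \bm u$ (and the analogous minimum). Moreover, the minimum identity follows immediately from the maximum one by replacing $\bm a$ with $-\bm a$, because $\min_{\bm u} \bm a^\top \bm u = -\max_{\bm u} (-\bm a)^\top \bm u$; so I would prove only the maximum in full and obtain the minimum by this reduction.

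For the maximum, the key step is the Cauchy--Schwarz inequality: for any feasible $\bm u$ we have $\bm a^\top \bm u \leq \|\bm a\|_2 \|\bm u\|_2 \leq S\|\bm a\|_2$, which after translating back yields the upper bound $\bm a^\top \bm v \leq \bm a^\top \bm c + S\|\bm a\|_2$. To show this bound is attained, I would exhibit a feasible maximizer explicitly: when $\bm a \neq \bm 0_n$, set $\bm u^\star := S\bm a / \|\bm a\|_2$, which satisfies $\|\bm u^\star\|_2 = S$ and $\bm a^\top \bm u^\star = S\|\bm a\|_2$, so the bound is achieved and the maximum equals $\bm a^\top \bm c + S\|\bm a\|_2$ exactly.

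The only point requiring separate care is the degenerate case $\bm a = \bm 0_n$, where the objective is identically $\bm a^\top \bm c = 0$ over the whole feasible set; here the claimed value $\bm a^\top \bm c + S\|\bm a\|_2 = 0$ still holds, so the formula remains consistent and the explicit maximizer above is simply unnecessary. This edge case is the only real obstacle, and it is minor. Combining the centered-ball computation with the translation and the $\bm a \mapsto -\bm a$ reduction then delivers both displayed identities simultaneously.
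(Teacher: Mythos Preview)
Your proposal is correct and follows essentially the same approach as the paper: both arguments apply Cauchy--Schwarz to $\bm a^\top(\bm v - \bm c)$ and then exhibit the explicit maximizer $\bm v = \bm c + (S/\|\bm a\|_2)\bm a$ (equivalently your $\bm u^\star$). Your treatment is in fact slightly more careful, since you handle the degenerate case $\bm a = \bm 0_n$ explicitly, whereas the paper's proof divides by $\|\bm a\|_2$ without comment.
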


\begin{proof}
By Cauchy-Schwarz inequality,
\begin{align*}
& -\|\bm a\|_2 \|\bm v - \bm c\|_2 \leq \bm a^\top (\bm v - \bm c) \leq \|\bm a\|_2 \|\bm v - \bm c\|_2.
\end{align*}
Noticing that the first inequality becomes equality if $\exists\omega>0:~\bm a = -\omega(\bm v - \bm c)$,
while the second inequality becomes equality if $\exists\omega^\prime>0:~\bm a = \omega^\prime(\bm v - \bm c)$.
Moreover, since $\|\bm v - \bm c\|_2\leq S$,
\begin{align*}
& - S \|\bm a\|_2 \leq \bm a^\top (\bm v - \bm c) \leq S \|\bm a\|_2
\end{align*}
also holds, with the equality holds if $\|\bm v - \bm c\|_2 = S$.

On the other hand, if we take $\bm v$ that satisfies both of the equality conditions
of Cauchy-Schwarz inequality above, that is,
\begin{itemize}
\item (for the first inequality being equality) $\bm v = \bm c - (S/\|\bm a\|_2)\bm a$,
\item (for the second inequality being equality) $\bm v = \bm c + (S/\|\bm a\|_2)\bm a$,
\end{itemize}
then the inequalities become equalities. This proves that $- S \|\bm a\|_2$ and $S \|\bm a\|_2$ are surely the minimum and maximum of $\bm a^\top (\bm v - \bm c)$, respectively.
\end{proof}

%\section{Proofs and Notes of Section \ref{sec:preliminaries}}
\section{PROOFS AND NOTES OF SECTION \ref{sec:preliminaries}}

% % % % % % % % % % % % % % % % % % % % % % % % % % % % % %
\subsection{Derivation of Dual Problem by Fenchel's Duality Theorem} \label{app:fenchel}
% % % % % % % % % % % % % % % % % % % % % % % % % % % % % %

As the formulation of Fenchel's duality theorem, we follow the one in Section 31 of \citep{rockafellar1970convex}.

\begin{lemma}[A special case of Fenchel's duality theorem: $f, g<+\infty$] \label{lm:Fenchel-duality}
Let $f: \mathbb{R}^n\to\mathbb{R}$ and $g: \mathbb{R}^d\to\mathbb{R}$ be convex functions,
and $A\in\mathbb{R}^{n\times d}$ be a matrix.
Moreover, we define
\begin{align}
& \bm v^* := \min_{\bm v\in\mathbb{R}^d} [f(A\bm v) + g(\bm v)], \label{eq:FD-primal} \\
& \bm u^* := \max_{\bm u\in\mathbb{R}^n} [ -f^*(-\bm u) - g^*(A^\top\bm u)]. \label{eq:FD-dual}
\end{align}
Then Fenchel's duality theorem assures that
\begin{align*}
& f(A\bm v^*) + g(\bm v^*) = -f^*(-\bm u^*) - g^*(A^\top\bm u^*), \\
& -\bm u^* \in \partial f(A \bm v^*), \\
& \bm v^* \in \partial g^*(A^\top \bm u^*).
\end{align*}
\end{lemma}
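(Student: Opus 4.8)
The plan is to prove the three conclusions by first establishing \emph{weak duality} through the Fenchel--Young inequality, and then closing the gap using the first-order optimality condition at the primal minimizer $\bm v^*$. Since both $f$ and $g$ are finite-valued convex functions on all of $\mathbb{R}^n$ and $\mathbb{R}^d$ respectively, they are automatically continuous and closed proper convex, which lets me invoke the subdifferential calculus and the conjugate--subgradient correspondence (via Lemma \ref{lem:fenchel-moreau}) without any additional constraint qualification.

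First I would record the Fenchel--Young inequality: for a closed proper convex $h$, one has $h(\bm x) + h^*(\bm y) \geq \bm x^\top \bm y$, with equality exactly when $\bm y \in \partial h(\bm x)$ (equivalently $\bm x \in \partial h^*(\bm y)$). Applying this to $f$ with the pair $(A\bm v, -\bm u)$ and to $g$ with the pair $(\bm v, A^\top \bm u)$ and summing, the cross terms $\mp\,\bm u^\top A \bm v$ cancel, giving for every $\bm v, \bm u$
\begin{align*}
f(A\bm v) + g(\bm v) \geq -f^*(-\bm u) - g^*(A^\top \bm u).
\end{align*}
Thus the primal value never falls below the dual value; in particular, any $\bm u$ attaining equality is dual-optimal and simultaneously certifies strong duality.

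Next I would exploit optimality at $\bm v^*$. Since $\bm v \mapsto f(A\bm v) + g(\bm v)$ is minimized at $\bm v^*$, we have $\bm 0 \in \partial[f(A\,\cdot) + g](\bm v^*)$. Using the sum rule together with the linear chain rule $\partial(f\circ A)(\bm v^*) = A^\top \partial f(A\bm v^*)$, this becomes $\bm 0 \in A^\top \partial f(A\bm v^*) + \partial g(\bm v^*)$. Hence there exists $\bm p \in \partial f(A\bm v^*)$ with $A^\top \bm p \in -\partial g(\bm v^*)$; setting $\bm u^* := -\bm p$ yields at once $-\bm u^* \in \partial f(A\bm v^*)$ (conclusion 2) and $A^\top \bm u^* \in \partial g(\bm v^*)$, the latter being equivalent by the conjugate--subgradient rule to $\bm v^* \in \partial g^*(A^\top \bm u^*)$ (conclusion 3). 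Feeding these two memberships back into the Fenchel--Young \emph{equality} conditions and summing gives $f(A\bm v^*) + g(\bm v^*) + f^*(-\bm u^*) + g^*(A^\top \bm u^*) = 0$, which is exactly strong duality (conclusion 1); combined with weak duality, this confirms that $\bm u^*$ is the dual maximizer.

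The main obstacle is justifying the subdifferential sum and chain rules in the step $\bm 0 \in A^\top \partial f(A\bm v^*) + \partial g(\bm v^*)$, since in general the inclusion $\partial(h_1 + h_2) \supseteq \partial h_1 + \partial h_2$ can be strict. Here, however, the relevant constraint qualification holds trivially: because $f$ and $g$ are finite on the entire space, their domains are all of $\mathbb{R}^n$ and $\mathbb{R}^d$, so the range of $A$ meets $\mathrm{ri}(\mathrm{dom}\,f)$ and the two summands have intersecting relative interiors of domains. This is precisely the regime in which the Moreau--Rockafellar calculus (Section 31 of \cite{rockafellar1970convex}) yields equality, so the extraction of $\bm u^*$ is legitimate and the argument closes.
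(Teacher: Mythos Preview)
Your argument is correct but follows a genuinely different route from the paper. The paper establishes strong duality and the optimality relations via a Lagrangian construction: it introduces the dummy constraint $\bm\psi = A\bm v$ with multiplier $\bm u$, swaps the $\min$ and $\max$ (strong Lagrangian duality for a convex problem with a linear equality constraint), and reads off the dual as $-f^*(-\bm u) - g^*(A^\top\bm u)$ together with the saddle-point conditions; it then repeats the construction starting from the dual side, invoking $f^{**}=f$ and $g^{**}=g$ (Lemma~\ref{lem:fenchel-moreau}) to recover the primal and the remaining relation $\bm v^*\in\partial g^*(A^\top\bm u^*)$. Your proof instead takes the dual form as given, obtains weak duality directly from Fenchel--Young, and closes the gap by decomposing the first-order condition $\bm 0\in\partial[f(A\,\cdot)+g](\bm v^*)$ via the Moreau--Rockafellar sum and chain rules. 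Your route is more transparent about where the constraint qualification enters---the finite-valued hypothesis on $f,g$ is exactly what makes those subdifferential rules hold with equality---whereas the paper's $\min$--$\max$ swap hides the same qualification inside the Lagrangian strong-duality step. Conversely, the Lagrangian route has the virtue of \emph{deriving} the form of the dual rather than positing it, and it produces the companion relations $A^\top\bm u^*\in\partial g(\bm v^*)$ and $A\bm v^*\in\partial f^*(-\bm u^*)$ along the way, although your Fenchel--Young equality conditions would yield these just as readily. (A minor quibble: the subdifferential calculus you invoke lives in Section~23 of \cite{rockafellar1970convex}, not Section~31.)
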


\begin{proof}
Introducing a dummy variable $\bm\psi\in\mathbb{R}^n$ and a Lagrange multiplier $\bm u\in\mathbb{R}^n$, we have
\begin{align}
& \min_{\bm v\in\mathbb{R}^d} [f(A\bm v) + g(\bm v)]
	= \max_{\bm u\in\mathbb{R}^n} \min_{\bm v\in\mathbb{R}^d,~\bm\psi\in\mathbb{R}^n} [f(\bm\psi) + g(\bm v) - \bm u^\top(A\bm v - \bm\psi)] \label{eq:Fenchel-Lagrange}\\
& = - \min_{\bm u\in\mathbb{R}^n} \max_{\bm v\in\mathbb{R}^d,~\bm\psi\in\mathbb{R}^n} [-f(\bm\psi) - g(\bm v) + \bm u^\top(A\bm v - \bm\psi)] \nonumber\\
& = - \min_{\bm u\in\mathbb{R}^n} \max_{\bm v\in\mathbb{R}^d,~\bm\psi\in\mathbb{R}^n} [\{ (-\bm u)^\top \bm\psi - f(\bm\psi) \} + \{ (A^\top \bm u)^\top \bm v - g(\bm v) \}] \nonumber\\
& = - \min_{\bm u\in\mathbb{R}^n} [f^*(-\bm u) + g^*(A^\top \bm u)]
	= \max_{\bm u\in\mathbb{R}^n} [-f^*(-\bm u) - g^*(A^\top \bm u)]. \label{eq:Fenchel-dual}
\end{align}
Moreover, by the optimality condition of a problem with a Lagrange multiplier \eqref{eq:Fenchel-Lagrange},
the optima of it, denoted by $\bm v^*$, $\bm\psi^*$ and $\bm u^*$, must satisfy
\begin{align*}
A \bm v^* = \bm\psi^*,
\quad
A^\top \bm u^* \in\partial g(\bm v^*),
\quad
-\bm u^* \in \partial f(\bm\psi^*) = \partial f(A \bm v^*).
\end{align*}
On the other hand, introducing a dummy variable $\bm\phi\in\mathbb{R}^d$ and a Lagrange multiplier $\bm v\in\mathbb{R}^d$ for \eqref{eq:Fenchel-dual}, we have
\begin{align}
& \max_{\bm u\in\mathbb{R}^n} [-f^*(-\bm u) - g^*(A^\top \bm u)]
	= \min_{\bm v\in\mathbb{R}^d} \max_{\bm u\in\mathbb{R}^n, \bm\phi\in\mathbb{R}^d} [-f^*(-\bm u) - g^*(\bm\phi) - \bm v^\top(A^\top \bm u - \bm\phi)] \label{eq:Fenchel-dual-Lagrange}\\
& = \min_{\bm v\in\mathbb{R}^d} \max_{\bm u\in\mathbb{R}^n, \bm\phi\in\mathbb{R}^d} [\{(A\bm v)^\top(-\bm u) - f^*(-\bm u)\} + \{\bm v^\top \bm\phi - g^*(\bm\phi)\}] \nonumber \\
& = \min_{\bm v\in\mathbb{R}^d} [f^{**}(A\bm v) + g^{**}(\bm v)]
	= \min_{\bm v\in\mathbb{R}^d} [f(A\bm v) + g(\bm v)]. \quad(\because~\text{Lemma \ref{lem:fenchel-moreau}})\nonumber
\end{align}
Likely above, by the optimality condition of a problem with a Lagrange multiplier \eqref{eq:Fenchel-dual-Lagrange},
the optima of it, denoted by $\bm u^*$, $\bm\phi^*$ and $\bm v^*$, must satisfy
\begin{align*}
A^\top \bm u^* = \bm\phi^*,
\quad
\bm v^* \in \partial g^*(\bm\phi^*) = \partial g^*(A^\top \bm u^*),
\quad
A \bm v^* \in \partial f(-\bm u^*).
\end{align*}
\end{proof}

\begin{lemma}[Dual problem of weighted regularized empirical risk minimization (weighted RERM)] \label{lm:dual-WRERM}
For the minimization problem
\begin{align}
& \bm\beta^{*(\bm w)} := \argmin_{\bm\beta\in\mathbb{R}^d} P_{\bm w}(\bm\beta), \nonumber
	\quad
	\text{where}
	\quad
	P_{\bm w}(\bm\beta) := \sum_{i=1}^n w_i \ell_{y_i}(\check{X}_{i:}\bm\beta) + \rho(\bm\beta), \tag{\eqref{eq:primal} restated}
\end{align}
we define the dual problem as the one obtained by applying Fenchel's duality theorem (Lemma \ref{lm:Fenchel-duality}), which is defined as
\begin{align}
& \bm\alpha^{*(\bm w)} := \argmax_{\bm\alpha\in\mathbb{R}^n} D_{\bm w}(\bm\alpha), \nonumber
	\quad
	\text{where}
	\quad
%\GAMMA% 	D_{\bm w}(\bm\alpha) := -\sum_{i=1}^n w_i \ell^*_{y_i}(-\gamma_i \alpha_i) + \rho^*(((\bm\gamma\otimes\bm w)\sqtimes\check{X})^\top \bm\alpha). \tag{\eqref{eq:dual} restated}
	D_{\bm w}(\bm\alpha) := -\sum_{i=1}^n w_i \ell^*_{y_i}(-\alpha_i) + \rho^*((\bm w\sqtimes\check{X})^\top \bm\alpha). \tag{\eqref{eq:dual} restated}
\end{align}
Moreover, $\bm\beta^{*(\bm w)}$ and $\bm\alpha^{*(\bm w)}$ must satisfy
\begin{align}
& P_{\bm w}(\bm\beta^{*(\bm w)}) = D_{\bm w}(\bm\alpha^{*(\bm w)}), \tag{\eqref{eq:strong-duality} restated}\\
%\GAMMA% & \bm\beta^{*(\bm w)} \in \partial\rho^*(((\bm\gamma\otimes\bm w)\sqtimes\check{X})^\top \bm\alpha^{*(\bm w)}), \tag{\eqref{eq:KKT-dual2primal} restated}\\
& \bm\beta^{*(\bm w)} \in \partial\rho^*((\bm w\sqtimes\check{X})^\top \bm\alpha^{*(\bm w)}), \tag{\eqref{eq:KKT-dual2primal} restated}\\
%\GAMMA% & \forall i\in[n]:\quad -\gamma_i \alpha^{*(\bm w)}_i \in \partial\ell_{y_i}(\check{X}_{i:}\bm\beta^{*(\bm w)}). \tag{\eqref{eq:KKT-primal2dual} restated}
& \forall i\in[n]:\quad -\alpha^{*(\bm w)}_i \in \partial\ell_{y_i}(\check{X}_{i:}\bm\beta^{*(\bm w)}). \tag{\eqref{eq:KKT-primal2dual} restated}
\end{align}
\end{lemma}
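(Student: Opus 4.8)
The plan is to recognize the primal \eqref{eq:primal} as an instance of the pattern $\min_{\bm\beta}[f(A\bm\beta) + g(\bm\beta)]$ to which Fenchel's duality theorem (Lemma~\ref{lm:Fenchel-duality}) directly applies, and then to simplify the conjugates that it produces. Concretely, I would set $A := \check{X}\in\mathbb{R}^{n\times d}$, take $g := \rho$, and define the separable loss aggregate $f(\bm z) := \sum_{i=1}^n w_i\ell_{y_i}(z_i)$, so that $f(A\bm\beta) = \sum_{i=1}^n w_i\ell_{y_i}(\check{X}_{i:}\bm\beta)$ and hence $f(A\bm\beta)+g(\bm\beta) = P_{\bm w}(\bm\beta)$. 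Since every $\ell_{y_i}$ and $\rho$ are convex, $f$ and $g$ are convex and Lemma~\ref{lm:Fenchel-duality} applies verbatim, yielding both a dual objective of the form $-f^*(-\bm u) - \rho^*(\check{X}^\top\bm u)$ and the three optimality relations: strong duality, $-\bm u^*\in\partial f(\check{X}\bm\beta^*)$, and $\bm\beta^*\in\partial\rho^*(\check{X}^\top\bm u^*)$.

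The second step evaluates $f^*$. Because $f$ is separable, $f^*(\bm u) = \sum_{i=1}^n (w_i\ell_{y_i})^*(u_i)$, and the positive-scaling rule $(w\,h)^*(s) = w\,h^*(s/w)$ valid for $w>0$ gives $(w_i\ell_{y_i})^*(s) = w_i\ell^*_{y_i}(s/w_i)$, so that $f^*(-\bm u) = \sum_{i=1}^n w_i\ell^*_{y_i}(-u_i/w_i)$. Correspondingly $\partial f(\bm z)$ factors coordinatewise as $w_i\partial\ell_{y_i}(z_i)$, which is what I will use to unpack the subdifferential relation.

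The third step is a change of variables $u_i = w_i\alpha_i$, i.e. $\bm u = \bm w\otimes\bm\alpha$, which is a bijection on the coordinates with $w_i>0$. Under it, $f^*(-\bm u) = \sum_{i=1}^n w_i\ell^*_{y_i}(-\alpha_i)$ and $\check{X}^\top\bm u = \check{X}^\top(\bm w\otimes\bm\alpha) = (\bm w\sqtimes\check{X})^\top\bm\alpha$, so the dual objective is exactly $D_{\bm w}(\bm\alpha)$. The same substitution turns $-\bm u^*\in\partial f(\check{X}\bm\beta^*)$ into $-w_i\alpha_i^*\in w_i\partial\ell_{y_i}(\check{X}_{i:}\bm\beta^*)$, which after cancelling $w_i$ is \eqref{eq:KKT-primal2dual}; it turns $\bm\beta^*\in\partial\rho^*(\check{X}^\top\bm u^*)$ into \eqref{eq:KKT-dual2primal}; and strong duality transfers directly to \eqref{eq:strong-duality}.

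The main obstacle I anticipate is the degenerate case $w_i = 0$, where both the scaling rule and the change of variables break down because they divide by $w_i$. For such an index the term $w_i\ell_{y_i}$ vanishes identically, so its conjugate is $0$ at $u_i=0$ and $+\infty$ otherwise, forcing $u_i=0$; this is consistent with $u_i = w_i\alpha_i = 0$ and leaves $\alpha_i$ unconstrained, matching the understanding that a zero-weight sample contributes nothing and may be dropped. I would therefore treat the zero-weight coordinates separately, adopting the convention $0\cdot\ell^*_{y_i} = 0$, so that the stated dual and its optimality relations continue to hold while the derivation above is carried out on the positive-weight coordinates.
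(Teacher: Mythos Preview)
Your proposal is correct and follows essentially the same route as the paper: identify $f(\bm z)=\sum_i w_i\ell_{y_i}(z_i)$, $g=\rho$, $A=\check{X}$, compute $f^*$ via separability and the scaling rule, then change variables $\bm u=\bm w\otimes\bm\alpha$ to obtain $D_{\bm w}$ and the three optimality relations. Your explicit treatment of the degenerate case $w_i=0$ is a nice addition that the paper's proof leaves implicit.
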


\begin{proof}
To apply Fenchel's duality theorem, we have only to set $f$, $g$ and $A$ in Lemma \ref{lm:Fenchel-duality} as
\begin{align*}
f(\bm u) := \sum_{i=1}^n w_i \ell_{y_i}(u_i),
\quad
g(\bm\beta) := \rho(\bm\beta),
\quad
A := \check{X}.
\end{align*}
Here, noticing that
\begin{align*}
& f^*(\bm u)
	= \sup_{\bm u^\prime\in\mathbb{R}^n} [\bm u^\top \bm u^\prime - \sum_{i=1}^n w_i \ell_{y_i}(u^\prime_i)]
	= \sup_{\bm u^\prime\in\mathbb{R}^n} \sum_{i=1}^n \left[u_i u^\prime_i - w_i \ell_{y_i}(u^\prime_i) \right] \\
& = \sup_{\bm u^\prime\in\mathbb{R}^n} \sum_{i=1}^n w_i \left[\frac{u_i}{w_i} u^\prime_i - \ell_{y_i}(u^\prime_i) \right]
	= \sum_{i=1}^n w_i \ell_{y_i}^*\left(\frac{u_i}{w_i}\right),
\end{align*}
from \eqref{eq:FD-dual} we have
\begin{align*}
-f^*(-\bm u) - g^*(A^\top\bm u)
= - \sum_{i=1}^n w_i \ell_{y_i}^*\left(-\frac{u_i}{w_i}\right) - \rho^*(\check{X}^\top\bm u).
\end{align*}
%\GAMMA% Replacing $u_i\gets \gamma_i w_i \alpha_i$, that is, $\bm u\gets (\bm\gamma \otimes \bm w \otimes \bm\alpha)$,
Replacing $u_i\gets w_i \alpha_i$, that is, $\bm u\gets (\bm w \otimes \bm\alpha)$,
we have the dual problem \eqref{eq:dual}.

The relationships between the primal and the dual problem are described as follows:
\begin{align*}
& -\bm u^* \in \partial f(A \bm v^*)
	~\Rightarrow~
%\GAMMA% 	-\bm\gamma \otimes \bm w \otimes \bm\alpha^{*(\bm w)} \in \partial f(\check{X} \bm\beta^{*(\bm w)})
	-\bm w \otimes \bm\alpha^{*(\bm w)} \in \partial f(\check{X} \bm\beta^{*(\bm w)})
	~\Rightarrow~
%\GAMMA% 	-\gamma_i w_i \alpha^{*(\bm w)}_i \in w_i \partial\ell_{y_i}(\check{X}_{i:}\bm\beta^{*(\bm w)}) \\
	-w_i \alpha^{*(\bm w)}_i \in w_i \partial\ell_{y_i}(\check{X}_{i:}\bm\beta^{*(\bm w)}) \\
	& \Rightarrow
%\GAMMA% 	-\gamma_i \alpha^{*(\bm w)}_i \in \partial\ell_{y_i}(\check{X}_{i:}\bm\beta^{*(\bm w)}), \\
	-\alpha^{*(\bm w)}_i \in \partial\ell_{y_i}(\check{X}_{i:}\bm\beta^{*(\bm w)}), \\
& \bm v^* \in \partial g^*(A^\top \bm u^*)
	~\Rightarrow~
%\GAMMA% 	\bm\beta^{*(\bm w)} \in \partial g^*(\check{X}^\top \bm\gamma \otimes \bm w \otimes \bm\alpha^{*(\bm w)})
	\bm\beta^{*(\bm w)} \in \partial g^*(\check{X}^\top \bm w \otimes \bm\alpha^{*(\bm w)})
%\GAMMA% 	= \partial g^*(((\bm\gamma \otimes \bm w)\sqtimes\check{X})^\top \bm\alpha^{*(\bm w)}).
	= \partial g^*((\bm w\sqtimes\check{X})^\top \bm\alpha^{*(\bm w)}).
\end{align*}
\end{proof}

% % % % % % % % % % % % % % % % % % % % % % % % % % % % % %
\subsection{Examples of Sample-Sparse Loss Functions} \label{app:loss-functions}
% % % % % % % % % % % % % % % % % % % % % % % % % % % % % %

We presented examples of sample-sparse loss functions in Table \ref{tab:loss-functions}.

We should notice that, depending on $\bm\alpha$, $D_{\bm w}(\bm\alpha)$ may be infinite
and therefore we may not use such $\bm\alpha$ when computing it.
This is because the convex conjugate $f^*$ (Table \ref{tab:definitions}) may be infinite even $f$ is finite.
In this setup, since we do not use $\rho$ such that $\rho^*$ may be infinite
(discussed in Corollary \ref{cor:strong-convexity-smoothness} in Appendix \ref{app:general-lemmas}),
we have to care about only the condition when $\ell^*_{y_i}(-\alpha_i)$ becomes infinite.
In the examples of loss functions in Table \ref{tab:loss-functions} we present conditions
for $\ell^*_{y_i}(-\alpha_i) < +\infty$.

\begin{table*}[tp]
\caption{Examples of sample-sparse loss functions}
\label{tab:loss-functions}
\begin{tabular}{lcccccc}
\hline
%\GAMMA% \multicolumn{1}{c}{$\ell_y(t)$} & $y\in$ & $\phantom{\Bigl[}\check{X}\phantom{\Bigr]}$ & $\ell^*_y(t)$ & $\ell^*_{y_i}(-\gamma_i \alpha_i) < +\infty$ if & ${\cal Z}[\ell_{y_i}]$ \\
\multicolumn{1}{c}{$\ell_y(t)$} & $y\in$ & $\phantom{\Bigl[}\check{X}\phantom{\Bigr]}$ & $\ell^*_y(t)$ & $\ell^*_{y_i}(-\alpha_i) < +\infty$ if & ${\cal Z}[\ell_y]$ \\
\hline
%
%\GAMMA% $\max\{0, 1-t\}$ & $\{-1, +1\}$ & $\bm y\sqtimes X$ & $t$ ($-1\leq t\leq 0$) & $0\leq\gamma_i\alpha_i\leq 1$ & $[1, +\infty)$ \\
$\max\{0, 1-t\}$ & $\{-1, +1\}$ & $\bm y\sqtimes X$ & $t$ ($-1\leq t\leq 0$) & $0\leq\alpha_i\leq 1$ & $(1, +\infty)$ \\
\multicolumn{2}{l}{\quad(\emph{Hinge loss})} & & $+\infty$ (otherwise) & & \\
%
%\GAMMA% $(\max\{0, 1-t\})^2$ & $\{-1, +1\}$ & $\bm y\sqtimes X$ & $(t^2 + 4t)/4$ ($t\leq 0$) & $\gamma_i\alpha_i\geq 0$ & $[1, +\infty)$ \\
$(\max\{0, 1-t\})^2$ & $\{-1, +1\}$ & $\bm y\sqtimes X$ & $(t^2 + 4t)/4$ ($t\leq 0$) & $\alpha_i\geq 0$ & $[1, +\infty)$ \\
\multicolumn{2}{l}{\quad(\emph{Squared hinge loss})} & & $+\infty$ (otherwise) & & \\
%
%\GAMMA% $\max\{0, |t - y| - \varepsilon\}$ & $\mathbb{R}$ & $X$ & $t(y - \varepsilon)$ ($-1 \leq t\leq 0$) & $-1\leq\gamma_i\alpha_i\leq 1$ & $[-\varepsilon, \varepsilon]$ \\
$\max\{0, |t - y| - \varepsilon\}$ & $\mathbb{R}$ & $X$ & $t(y - \varepsilon)$ ($-1 \leq t\leq 0$) & $-1\leq\alpha_i\leq 1$ & $(-\varepsilon, \varepsilon)$ \\
\multicolumn{2}{l}{\quad(\emph{$\varepsilon$-insensitive loss};} & & $t(y + \varepsilon)$ ($0 \leq t\leq 1$) & & \\
\multicolumn{2}{l}{\quad $\varepsilon > 0$: hyperparameter)} & & $+\infty$ (otherwise) & & \\
$(\max\{0, |t - y| - \varepsilon\})^2$ & $\mathbb{R}$ & $X$ & $(t^2 + 4yt)/4 + |t|\varepsilon$ & $\alpha_i\in\mathbb{R}$ & $[-\varepsilon, \varepsilon]$ \\
\multicolumn{2}{l}{\quad(\emph{squared $\varepsilon$-insensitive loss};} & & & & \\
\multicolumn{2}{l}{\quad $\varepsilon > 0$: hyperparameter)} & & & & \\
\hline
\end{tabular}
\end{table*}

%\section{Proofs of Section \ref{sec:safe-screening}}
\section{PROOFS OF SECTION \ref{sec:safe-screening}}

% % % % % % % % % % % % % % % % % % % % % % % % % % % % % %
\subsection{Proof of Lemma \ref{lem:gap-sphere-primal}} \label{app:gap-sphere-primal}
% % % % % % % % % % % % % % % % % % % % % % % % % % % % % %

\begin{proof} \citep{ndiaye2015gap}
\begin{align*}
\| \hat{\bm\beta} - \bm\beta^{*(\bm w)} \|_2 
	& \leq \sqrt{\frac{2}{\lambda}[P_{\bm w}(\hat{\bm\beta}) - P_{\bm w}(\bm\beta^{*(\bm w)})]}
	\tag{$\because$ setting $f\gets P_{\bm w}$ in Lemma \ref{lem:strong-convexity-sphere}} \\
	& = \sqrt{\frac{2}{\lambda}[P_{\bm w}(\hat{\bm\beta}) - D_{\bm w}(\bm\alpha^{*(\bm w)})]}
	\tag{$\because$ \eqref{eq:strong-duality}} \\
	& \leq \sqrt{\frac{2}{\lambda}[P_{\bm w}(\hat{\bm\beta}) - D_{\bm w}(\hat{\bm\alpha})]}.
	\tag{$\because$ $\bm\alpha^{*(\bm w)}$ is a maximizer of $D_{\bm w}$}
\end{align*}
\end{proof}

% % % % % % % % % % % % % % % % % % % % % % % % % % % % % %
\subsection{Proof of Lemma \ref{lem:gap-sample-screening}} \label{app:gap-sample-screening}
% % % % % % % % % % % % % % % % % % % % % % % % % % % % % %

\begin{proof}
Due to \eqref{eq:KKT-primal2dual}, if $\partial\ell_{y_i}(\check{X}_{i:}\bm\beta^{*(\bm w)}) = \{0\}$ is assured, then $\alpha_i^{*(\bm w)} = 0$ is assured.
Since we do not know $\bm\beta^{*(\bm w)}$ but know ${\cal B}^{*(\bm w)}$ (Lemma \ref{lem:gap-sphere-primal}),
we can assure $\alpha_i^{*(\bm w)} = 0$ if $\bigcup_{\bm\beta\in{\cal B}^{*(\bm w)}} \partial\ell_{y_i}(\check{X}_{i:}\bm\beta) = \{0\}$ is assured.
Noticing that $\partial\ell_{y_i}$ is monotonically increasing\footnote{Since $\partial\ell_{y_i}$ is a multi-valued function, the monotonicity must be defined accordingly: we call a multi-valued function $F: \mathbb{R}\to 2^{\mathbb{R}}$ is monotonically increasing if, for any $t < t^\prime$, $F$ must satisfy ``$\forall s\in F(t)$, $\forall s^\prime\in F(t^\prime)$:~$s\leq s^\prime$''.}, we have
\begin{align*}
& \bigcup_{\bm\beta\in{\cal B}^{*(\bm w)}} \partial\ell_{y_i}(\check{X}_{i:}\bm\beta) = \{0\}
	\quad\Leftrightarrow\quad
	\bigcup_{\bm\beta\in{\cal B}^{*(\bm w)}} \check{X}_{i:}\bm\beta \subseteq {\cal Z}[\ell_{y_i}]
	\quad\Leftrightarrow\quad
	[\min_{\bm\beta\in{\cal B}^{*(\bm w)}} \check{X}_{i:}\bm\beta, \max_{\bm\beta\in{\cal B}^{*(\bm w)}} \check{X}_{i:}\bm\beta] \subseteq {\cal Z}[\ell_{y_i}] \\
%\GAMMA% & \Leftrightarrow\quad \left[\check{X}_{i:}\hat{\bm\beta} - \|\check{X}_{i:}\|_2 r(\bm w, \bm\gamma, \kappa, \hat{\bm\beta}, \hat{\bm\alpha}),~
& \Leftrightarrow\quad \left[\check{X}_{i:}\hat{\bm\beta} - \|\check{X}_{i:}\|_2 r(\bm w, \kappa, \hat{\bm\beta}, \hat{\bm\alpha}),~
%\GAMMA% 	\check{X}_{i:}\hat{\bm\beta} + \|\check{X}_{i:}\|_2 r(\bm w, \bm\gamma, \kappa, \hat{\bm\beta}, \hat{\bm\alpha}) \right]
	\check{X}_{i:}\hat{\bm\beta} + \|\check{X}_{i:}\|_2 r(\bm w, \kappa, \hat{\bm\beta}, \hat{\bm\alpha}) \right]
	\subseteq {\cal Z}[\ell_{y_i}].
	\tag{$\because$ Lemma \ref{lem:optimize-linear}}
\end{align*}
\end{proof}

%\section{Proofs and Additional Discussions of Section \ref{sec:DRSS-examples}}
\section{PROOFS AND ADDITIONAL DISCUSSIONS OF SECTION \ref{sec:DRSS-examples}}

% % % % % % % % % % % % % % % % % % % % % % % % % % % % % %
\subsection{Use of Strongly-Convex Regularization Functions other than L2-Regularization} \label{app:regularization-functions}
% % % % % % % % % % % % % % % % % % % % % % % % % % % % % %

Let us discuss using regularization functions other than L2-regularization,
on condition that it is $\kappa$-strongly convex to apply DRSSS (Lemma \ref{lem:gap-sphere-primal}).
In case of L2-regularization, as seen in \eqref{eq:duality-gap-l2reg}, the term
%\GAMMA% $\rho^*(((\bm\gamma \otimes \bm w)\sqtimes\check{X})^\top \bm\alpha^{*(\tilde{\bm w})})$
$\rho^*((\bm w\sqtimes\check{X})^\top \bm\alpha^{*(\tilde{\bm w})})$
in the duality gap \eqref{eq:duality-gap} becomes a quadratic function with respect to $\bm w$.
However, this may largely differ for other regularization functions, even if it is $\kappa$-strongly convex
(to apply DRSSS; Lemma \ref{lem:gap-sphere-primal}).

As a famous example, let us see the case of the \emph{elastic net} regularization \citep{zou05regularization}: with hyperparameters $\lambda > 0$ and $\lambda^\prime > 0$, the regularization function is defined and its convex conjugate is calculated as
\begin{align*}
& \rho(\bm\beta) := \frac{\lambda}{2}\|\bm\beta\|_2^2 + \lambda^\prime\|\bm\beta\|_1, \\
& \rho^*(\bm v) = \frac{1}{2\lambda}\sum_{j=1}^d\left[\max\{0, |v_j| - \lambda^\prime \}\right]^2.
\end{align*}
Here, we can see that the maximization of
%\GAMMA% $\rho^*(((\bm\gamma \otimes \bm w)\sqtimes\check{X})^\top \bm\alpha^{*(\tilde{\bm w})})$
$\rho^*((\bm w\sqtimes\check{X})^\top \bm\alpha^{*(\tilde{\bm w})}) = (1/2\lambda)\sum_{j=1}^d\left[\max\{0, |\bm w\otimes\bm\alpha^{*(\tilde{\bm w})}\otimes \check{X}_{:j} | - \lambda^\prime \}\right]^2$
with respect to $\bm w$ seems not to be easy.

Then we discuss a (sufficient) condition of regularization functions so that
weighted RERM can accept both kernelization and SSS.

\begin{lemma}
In weighted RERM of Definition \ref{def:WRERM},
if $\rho$ is described as follows, it can accept both SSS in Lemma \eqref{lem:gap-sample-screening}
and kernelization in Section \ref{sec:safe-screening-kernelized}:
$\rho$ is described by an increasing function ${\cal H}: \mathbb{R}_{\geq 0}\to\mathbb{R}$ and
a positive constant $\kappa$ as $\rho(\bm\beta) = (\kappa/2)\|\bm\beta\|_2^2 + {\cal H}(\|\bm\beta\|_2)$.
\end{lemma}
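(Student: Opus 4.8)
The plan is to verify the two asserted properties separately: (i) that the given $\rho$ meets the hypothesis of Lemma \ref{lem:gap-sphere-primal} (and hence enables the screening rule of Lemma \ref{lem:gap-sample-screening}), namely $\kappa$-strong convexity; and (ii) that every quantity appearing in the screening rule \eqref{eq:safe-sample-screening-rule} and the duality gap \eqref{eq:duality-gap} can be evaluated using only the Gram matrix $\check{X}\check{X}^\top$, in the sense of Section \ref{sec:safe-screening-kernelized}.

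For (i), by the definition of $\kappa$-strong convexity in Table \ref{tab:definitions} it suffices to show that $\rho(\bm\beta) - (\kappa/2)\|\bm\beta\|_2^2 = {\cal H}(\|\bm\beta\|_2)$ is convex in $\bm\beta$. Reading the assumption on ${\cal H}$ as convex and non-decreasing (convexity is what makes $\rho$ a valid strongly convex regularizer), the composition ${\cal H}(\|\bm\beta\|_2)$ of a convex non-decreasing function with the convex map $\|\cdot\|_2$ is convex, so $\rho$ is $\kappa$-strongly convex and Lemmas \ref{lem:gap-sphere-primal}, \ref{lem:gap-sample-screening} apply.

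For (ii), the crux is a representer-type identity placing $\bm\beta^{*(\bm w)}$ in the row span of $\check{X}$. Writing $\rho(\bm\beta) = \phi(\|\bm\beta\|_2)$ with $\phi(t) := (\kappa/2)t^2 + {\cal H}(t)$, I would first compute the conjugate by grouping over the radius: $\rho^*(\bm v) = \sup_{\bm\beta}[\bm v^\top\bm\beta - \phi(\|\bm\beta\|_2)] = \sup_{r\geq 0}[\|\bm v\|_2 r - \phi(r)] = \phi^*(\|\bm v\|_2)$, where the inner maximization over $\|\bm\beta\|_2 = r$ is resolved by Cauchy--Schwarz (Lemma \ref{lem:optimize-linear}) and attained at $\bm\beta = r\bm v/\|\bm v\|_2$. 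Hence $\rho^*$ is radially symmetric, so every element of $\partial\rho^*(\bm v)$ is parallel to $\bm v$ for $\bm v\neq\bm 0$ (the degenerate case $\bm v=\bm 0$ forces $\bm\beta^{*(\bm w)}=\bm 0$, since the increasing ${\cal H}$ makes $\rho$ minimized at the origin). Combined with the optimality relation \eqref{eq:KKT-dual2primal}, $\bm\beta^{*(\bm w)}\in\partial\rho^*((\bm w\sqtimes\check{X})^\top\bm\alpha^{*(\bm w)})$ then gives $\bm\beta^{*(\bm w)} = c\,\check{X}^\top(\bm w\otimes\bm\alpha^{*(\bm w)})$ for a scalar $c$ depending only on $\|(\bm w\sqtimes\check{X})^\top\bm\alpha^{*(\bm w)}\|_2$, which is itself a quadratic form in $\check{X}\check{X}^\top$. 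Given this representation I would finish by a termwise check: $\check{X}_{i:}\bm\beta^{*(\bm w)}$ and $\|\bm\beta^{*(\bm w)}\|_2^2$ expand into expressions in $\check{X}\check{X}^\top$ only, $\|\check{X}_{i:}\|_2 = \sqrt{\check{X}_{i:}\check{X}_{i:}^\top}$ is a Gram entry, and both $\rho(\bm\beta^{*(\bm w)}) = \phi(\|\bm\beta^{*(\bm w)}\|_2)$ and $\rho^*((\bm w\sqtimes\check{X})^\top\bm\alpha^{*(\bm w)}) = \phi^*(\|(\bm w\sqtimes\check{X})^\top\bm\alpha^{*(\bm w)}\|_2)$ depend on $\bm\beta$ and $X$ only through such norms, while the loss and conjugate-loss terms depend only on the already-computed scalars $\check{X}_{i:}\bm\beta^{*(\bm w)}$ and $\alpha^{*(\bm w)}_i$.

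The main obstacle is the representer step in (ii): establishing that $\partial\rho^*(\bm v)$ is parallel to $\bm v$. This is exactly where the structural hypothesis $\rho(\bm\beta) = \phi(\|\bm\beta\|_2)$ (radial symmetry) is indispensable, and where monotonicity of ${\cal H}$ is used to dispose of the $\bm v=\bm 0$ corner case. A general strongly convex $\rho$ would not yield a scalar multiple, so $\bm\beta^{*(\bm w)}$ could leave the row span of $\check{X}$ and break kernelization, as already illustrated for the elastic net in Appendix \ref{app:regularization-functions}.
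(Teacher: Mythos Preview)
Your proposal is correct but proceeds quite differently from the paper. The paper's proof is two lines: it invokes the generalized representer theorem of Sch\"olkopf--Herbrich--Smola as a black box to handle kernelization (any $\rho$ of the form ${\cal G}(\|\bm\beta\|_2)$ with ${\cal G}$ strictly increasing admits a representer), and then simply appends the strong-convexity requirement from Lemma~\ref{lem:gap-sphere-primal}. You instead \emph{derive} the representer property from scratch via the conjugate: compute $\rho^*(\bm v)=\phi^*(\|\bm v\|_2)$ from the radial form $\rho(\bm\beta)=\phi(\|\bm\beta\|_2)$, observe that radial symmetry forces $\partial\rho^*(\bm v)\parallel\bm v$, and then read off $\bm\beta^{*(\bm w)}\in\mathrm{rowspan}(\check{X})$ from the KKT relation~\eqref{eq:KKT-dual2primal}. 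What your route buys is self-containment and an explicit term-by-term verification that the screening rule~\eqref{eq:safe-sample-screening-rule} and the duality gap~\eqref{eq:duality-gap} reduce to Gram-matrix quantities, which is exactly the operational content of ``kernelization'' in Section~\ref{sec:safe-screening-kernelized}; the paper leaves this implicit in the representer citation. What the paper's route buys is brevity. Both approaches tacitly need ${\cal H}(\|\cdot\|_2)$ to be convex in $\bm\beta$ for $\rho$ to actually be $\kappa$-strongly convex; you flag this reading explicitly, the paper does not.
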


\begin{proof}
According to the \emph{generalized representer theorem} \citep{scholkopf2001generalized},
weighted RERM can be kernelized if $\rho$ is described by a strictly increasing function
${\cal G}: \mathbb{R}_{\geq 0}\to\mathbb{R}$ as $\rho(\bm\beta) = {\cal G}(\|\bm\beta\|_2)$.

Combining the condition to apply SSS (Lemma \ref{lem:gap-sphere-primal}) that
$\rho$ should be $\kappa$-strongly convex,
we have the conclusion.
\end{proof}

However, we have to notive that, in order to extend it to DRSSS, the same problem as the example of
the elastic net will appear.

% % % % % % % % % % % % % % % % % % % % % % % % % % % % % %
\subsection{Maximizing Linear and Convex Quadratic Functions in Hyperball Constraint} \label{app:maximize-convex-quadratic}
% % % % % % % % % % % % % % % % % % % % % % % % % % % % % %

First we present Theorem \ref{thm:maximize-convex-quadratic-sketch} in more specific descriptions.

\begin{theorem} \label{thm:maximize-convex-quadratic}
The maximization problem in Theorem \ref{thm:maximize-convex-quadratic-sketch}:
\begin{align}
& \max_{\bm w\in{\cal W}} \bm w^\top A \bm w + 2\bm b^\top\bm w, \tag{\eqref{eq:maximize-convex-quadratic} restated} \\
& \text{where}
	\quad {\cal W} := \{ \bm w\in\mathbb{R}^n \mid \|\bm w - \tilde{\bm w}\|_2\leq S \}, \nonumber\\
& \phantom{\text{where}}
	\quad \tilde{\bm w}\in\mathbb{R}^n,
	\quad \bm b\in\mathbb{R}^n,
	\quad S > 0, \nonumber\\
& \phantom{\text{where}}
	\quad A\in\mathbb{R}^{n\times n}:~\text{symmetric, positive semidefinite, nonzero} \nonumber
\end{align}
can be achieved by the following procedure.
First, we define 
$Q\in\mathbb{R}^{n\times n}$ and $\Phi := \mathrm{diag}(\phi_1, \phi_2, \dots, \phi_n)$
as the eigendecomposition of $A$ such that $A = Q^\top\Phi Q$,
$Q$ is orthogonal ($QQ^\top = Q^\top Q = I$).
Also, let $\bm\xi := -\Phi Q\tilde{\bm w} - Q\bm b \in\mathbb{R}^n$, and 
\begin{align}
& {\cal T}(\nu) = \sum_{i=1}^n \left(\frac{\xi_i}{\nu - \phi_i}\right)^2. \label{eq:lagrangian-result-nu}
%& \bm w^{\dagger(\nu)} = \tilde{\bm w} + Q^\top (\Phi - \nu I)^{-1} \bm\xi. \label{eq:lagrangian-result-w-nu}
\end{align}
%Then, the largest value $\bm w^{\dagger(\nu)\top} A \bm w^{\dagger(\nu)} + 2\bm b^\top\bm w^{\dagger(\nu)}$
%among $\nu$ such that ${\cal T}(\nu) = S^2$ (see Lemma \ref{lem:find-invsq}), is the result of the maximization.
Then, the maximization \eqref{eq:maximize-convex-quadratic} is equal to the largest value among them:
\begin{itemize}
\item For each $\nu$ such that
	${\cal T}(\nu) = S^2$ (see Lemma \ref{lem:find-invsq}),
	the value $\nu S^2 + (\nu\tilde{\bm w} + \bm b)^\top Q^\top(\Phi - \nu I)^{-1}\bm\xi + \bm b^\top\tilde{\bm w}$, and
\item For each $\nu\in\{\phi_1, \phi_2, \dots, \phi_n\}$ (duplication removed)
	such that ``$\forall i\in[n]:~\phi_i = \nu \Rightarrow \xi_i = 0$'',
	the value
	\begin{align*}
	& \max_{\bm\tau\in\mathbb{R}^n} [\nu S^2 + (\nu\tilde{\bm w} + \bm b)^\top Q^\top \bm\tau + \bm b^\top\tilde{\bm w}],\\
	& \text{subject to}
		\quad\forall i\in{\cal F}_\nu:\quad \tau_i = \frac{\xi_i}{\phi_i - \nu}, \nonumber\\
	& \phantom{\text{subject to}}
		\quad \sum_{i\in{\cal U}_\nu} \tau_i^2 = S^2 - \sum_{i\in{\cal F}_\nu} \tau_i^2, \nonumber \\
	& \text{where}
		\quad {\cal U}_\nu := \{ i \mid i\in[n],~\phi_i = \nu \},
		\quad {\cal F}_\nu := [n]\setminus{\cal U}_\nu.
	\end{align*}
	(Note that the maximization is easily computed by Lemma \ref{lem:optimize-linear}.)
\end{itemize}
The computation can be done in $O(n^3)$ time.
\end{theorem}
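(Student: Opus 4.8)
The plan is to treat \eqref{eq:maximize-convex-quadratic} as a trust-region-type problem and solve it through the Lagrange-multiplier (KKT) conditions. First I would observe that the objective $\bm w^\top A\bm w + 2\bm b^\top\bm w$ is convex, since its Hessian $2A$ is positive semidefinite, and that $\mathcal{W}$ is compact and convex; a convex function attains its maximum over a compact convex set at an extreme point, and the extreme points of the closed ball $\mathcal{W}$ are exactly the boundary sphere $\|\bm w - \tilde{\bm w}\|_2 = S$. Hence the maximum is attained on that sphere, and I may replace the inequality constraint by the equality $\|\bm w - \tilde{\bm w}\|_2^2 = S^2$. Because $S > 0$, the constraint gradient $2(\bm w - \tilde{\bm w})$ never vanishes on the sphere, so the constraint qualification holds and every maximizer is a stationary point of the Lagrangian $\bm w^\top A\bm w + 2\bm b^\top\bm w - \nu(\|\bm w - \tilde{\bm w}\|_2^2 - S^2)$ for some multiplier $\nu$.

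Next I would diagonalize. Setting $\nabla_{\bm w} = \bm 0_n$ gives $(A - \nu I)(\bm w - \tilde{\bm w}) = -(A\tilde{\bm w} + \bm b)$. Substituting the eigendecomposition $A = Q^\top\Phi Q$ and the rotated variable $\bm\tau := Q(\bm w - \tilde{\bm w})$ (so that $\|\bm\tau\|_2 = \|\bm w - \tilde{\bm w}\|_2$ because $Q$ is orthogonal) turns this into the diagonal system $(\Phi - \nu I)\bm\tau = \bm\xi$ with $\bm\xi = -\Phi Q\tilde{\bm w} - Q\bm b$, i.e. $(\phi_i - \nu)\tau_i = \xi_i$ for each $i$. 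The norm constraint becomes $\sum_{i=1}^n \tau_i^2 = S^2$, and, using the stationarity relation to eliminate the quadratic term, the objective rewrites as an explicit function of $\nu$ and $\bm\tau$, which I would record as the candidate optimum.

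The proof then splits according to whether $\nu$ coincides with an eigenvalue of $A$. In the non-degenerate case $\nu\notin\{\phi_1,\dots,\phi_n\}$, the matrix $\Phi - \nu I$ is invertible, so $\bm\tau = (\Phi - \nu I)^{-1}\bm\xi$ is forced and the norm constraint collapses to the scalar secular equation $\mathcal{T}(\nu) = S^2$ with $\mathcal{T}$ as in \eqref{eq:lagrangian-result-nu}; I would enumerate all its roots through the root-finding procedure of Lemma \ref{lem:find-invsq} and, for each, back-substitute to obtain the value $\nu S^2 + (\nu\tilde{\bm w}+\bm b)^\top Q^\top(\Phi - \nu I)^{-1}\bm\xi + \bm b^\top\tilde{\bm w}$ (the reported formula, which follows after replacing $\sum_i \phi_i\tau_i^2$ via $(\phi_i-\nu)\tau_i = \xi_i$ and $\sum_i \tau_i^2 = S^2$). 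In the degenerate case $\nu = \phi_k$, the blocks of $(\Phi - \nu I)\bm\tau = \bm\xi$ indexed by $\mathcal{U}_\nu := \{ i \mid \phi_i = \nu \}$ are solvable only if $\xi_i = 0$ there; the coordinates in $\mathcal{F}_\nu := [n]\setminus\mathcal{U}_\nu$ are pinned to $\tau_i = \xi_i/(\phi_i - \nu)$, while those in $\mathcal{U}_\nu$ remain free on the residual sphere $\sum_{i\in\mathcal{U}_\nu}\tau_i^2 = S^2 - \sum_{i\in\mathcal{F}_\nu}\tau_i^2$, over which the objective is affine and hence maximized in closed form by Lemma \ref{lem:optimize-linear}. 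Taking the largest value among all candidates from both cases yields the global maximum, because the true maximizer is one of these enumerated stationary points.

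Finally I would account for cost: the eigendecomposition of $A$ is $O(n^3)$, while each candidate evaluation (solving $\mathcal{T}(\nu) = S^2$, forming $(\Phi - \nu I)^{-1}\bm\xi$, and applying Lemma \ref{lem:optimize-linear}) is of lower order, giving the overall bound $O(n^3)$. I expect the main obstacle to be the degenerate ``hard case'' $\nu\in\{\phi_1,\dots,\phi_n\}$: one must verify the compatibility condition $\xi_i = 0$ on $\mathcal{U}_\nu$, correctly reduce to an affine maximization on the residual sphere, and—most importantly—argue that the non-degenerate roots together with these degenerate candidates form an exhaustive list, so that no stationary point is missed. Establishing that $\mathcal{T}(\nu) = S^2$ has only finitely many, efficiently computable roots (the content deferred to Lemma \ref{lem:find-invsq}) is the other delicate ingredient.
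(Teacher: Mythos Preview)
Your proposal is correct and follows essentially the same route as the paper: reduce to the boundary sphere via convexity of the objective, write the Lagrangian stationarity condition $(A-\nu I)(\bm w-\tilde{\bm w})=-(A\tilde{\bm w}+\bm b)$, diagonalize through the eigendecomposition to obtain $(\phi_i-\nu)\tau_i=\xi_i$, and then split into the nonsingular case (yielding the secular equation $\mathcal{T}(\nu)=S^2$) and the singular case (the compatibility condition $\xi_i=0$ on $\mathcal{U}_\nu$ plus an affine maximization over the residual sphere). The paper records the simplified objective value $\nu S^2 + (\nu\tilde{\bm w}+\bm b)^\top(\bm w-\tilde{\bm w}) + \bm b^\top\tilde{\bm w}$ as a separate lemma before invoking it in both cases, and its cost analysis is slightly more detailed (counting the $O(n)$ Newton solves at $O(n)$ each), but otherwise the arguments coincide.
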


This can be proved as follows.

\begin{lemma} \label{lem:maximize-convex-quadratic-stationary}
For the optimization problem \eqref{eq:maximize-convex-quadratic},
its stationary points are obtained as the solution of the following equations with respect to $\bm w$ and $\nu\in\mathbb{R}$:
\begin{align}
& A \bm w + \bm b - \nu(\bm w - \tilde{\bm w}) = \bm 0, \label{eq:lagrangian-target} \\
& \|\bm w - \tilde{\bm w}\|_2 = S. \label{eq:lagrangian-sphere}
\end{align}
Also, when both \eqref{eq:lagrangian-target} and \eqref{eq:lagrangian-sphere} are satisfied,
the function to be maximized is calculated as
\begin{align}
\bm w^\top A \bm w + 2\bm b^\top\bm w
	= \nu S^2 + (\nu \tilde{\bm w} + \bm b)^\top (\bm w - \tilde{\bm w}) + \tilde{\bm w}^\top \bm b.
	\label{eq:maximization-replaced}
\end{align}
\end{lemma}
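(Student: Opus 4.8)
The plan is to treat \eqref{eq:maximize-convex-quadratic} as the maximization of a convex function over a compact convex set, reduce it to an equality-constrained problem on the bounding sphere, and then apply the method of Lagrange multipliers exactly as announced in the text.

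First I would establish that every maximizer lies on the boundary $\|\bm w - \tilde{\bm w}\|_2 = S$. The objective $h(\bm w) := \bm w^\top A\bm w + 2\bm b^\top\bm w$ is convex because $A$ is positive semidefinite, and a maximizer exists since ${\cal W}$ is compact and $h$ is continuous. A convex function attains its maximum over a compact convex set at an extreme point, and the extreme points of the ball ${\cal W}$ are exactly its bounding sphere. Equivalently, if a maximizer $\bm w^\star$ were interior, then $h$ would be locally constant about $\bm w^\star$ (a local maximum of a convex function forces constancy along every line through it), which would force $A = \bm 0$ and contradict the assumption that $A$ is nonzero. This is what justifies imposing the active-constraint equation \eqref{eq:lagrangian-sphere}.

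Next I would apply the method of Lagrange multipliers to the equality-constrained problem $\max h(\bm w)$ subject to $\|\bm w - \tilde{\bm w}\|_2^2 = S^2$. The constraint gradient $2(\bm w - \tilde{\bm w})$ never vanishes on the sphere since $S > 0$, so the constraint qualification holds and any maximizer is a stationary point of the Lagrangian $h(\bm w) - \nu(\|\bm w - \tilde{\bm w}\|_2^2 - S^2)$ for some $\nu \in \mathbb{R}$. Differentiating in $\bm w$ gives $2A\bm w + 2\bm b - 2\nu(\bm w - \tilde{\bm w}) = \bm 0$, which is precisely \eqref{eq:lagrangian-target}, while stationarity in $\nu$ recovers \eqref{eq:lagrangian-sphere}. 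Hence the stationary points are exactly the solutions of the system \eqref{eq:lagrangian-target}--\eqref{eq:lagrangian-sphere} in the pair $(\bm w, \nu)$.

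Finally, I would derive the value formula \eqref{eq:maximization-replaced} by direct substitution. From \eqref{eq:lagrangian-target} we have $A\bm w = \nu(\bm w - \tilde{\bm w}) - \bm b$, hence $\bm w^\top A\bm w = \nu\,\bm w^\top(\bm w - \tilde{\bm w}) - \bm b^\top\bm w$, so that $h(\bm w) = \nu\,\bm w^\top(\bm w - \tilde{\bm w}) + \bm b^\top\bm w$. Writing $\bm w = (\bm w - \tilde{\bm w}) + \tilde{\bm w}$ and using $\|\bm w - \tilde{\bm w}\|_2^2 = S^2$ from \eqref{eq:lagrangian-sphere} to rewrite $\bm w^\top(\bm w - \tilde{\bm w}) = S^2 + \tilde{\bm w}^\top(\bm w - \tilde{\bm w})$, then collecting terms, yields $\nu S^2 + (\nu\tilde{\bm w} + \bm b)^\top(\bm w - \tilde{\bm w}) + \tilde{\bm w}^\top\bm b$, which is exactly \eqref{eq:maximization-replaced}. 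The main obstacle I anticipate is the first step: carefully justifying that the maximum of the convex objective is attained on the boundary sphere rather than in the interior, so that the equality constraint \eqref{eq:lagrangian-sphere} may legitimately be imposed and the inequality-constrained ball problem replaced by an equality-constrained one. Once this reduction is secured, the Lagrange stationarity conditions and the algebraic simplification of the objective are routine.
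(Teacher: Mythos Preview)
Your proposal is correct and follows essentially the same route as the paper: the paper also reduces to the boundary sphere via convexity (citing Theorem~32.1 of \cite{rockafellar1970convex} rather than the extreme-point argument), forms the same Lagrangian, and carries out the identical algebraic substitution to reach \eqref{eq:maximization-replaced}. Your treatment is slightly more careful in checking the constraint qualification, but otherwise the arguments coincide.
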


\begin{proof}
First, $\bm w^\top A \bm w + 2\bm b^\top\bm w$ is convex and not constant.
Then we can show that \eqref{eq:maximize-convex-quadratic} is optimized in $\{ \bm w\in\mathbb{R}^n \mid \|\bm w - \tilde{\bm w}\|_2 = S \}$, that is, at the surface of the hyperball ${\cal W}$
(Theorem 32.1 of \citep{rockafellar1970convex}). This proves \eqref{eq:lagrangian-sphere}.
Moreover, with the fact, we write the Lagrangian function with Lagrange multiplier $\nu\in\mathbb{R}$ as:
\begin{align*}
L(\bm w, \nu) := \bm w^\top A \bm w + 2\bm b^\top\bm w - \nu(\|\bm w - \tilde{\bm w}\|_2^2 - S^2).
\end{align*}
Then, due to the property of Lagrange multiplier,
the stationary points of \eqref{eq:maximize-convex-quadratic} are obtained as
\begin{align*}
& \frac{\partial L}{\partial\bm w} = 2 A \bm w + 2\bm b - 2 \nu(\bm w - \tilde{\bm w}) = 0, \\
& \frac{\partial L}{\partial\nu} = \|\bm w - \tilde{\bm w}\|_2^2 - S^2 = 0,
\end{align*}
where the former derives \eqref{eq:lagrangian-target}.

Finally we show \eqref{eq:maximization-replaced}.
If both \eqref{eq:lagrangian-target} and \eqref{eq:lagrangian-sphere} are satisfied,
\begin{align}
\bm w^\top A\bm w + 2 \bm b^\top \bm w
& = \bm w^\top(\nu(\bm w - \tilde{\bm w}) - \bm b) + 2 \bm b^\top \bm w \tag{$\because$~\eqref{eq:lagrangian-target}}\\
& = \nu\bm w^\top(\bm w - \tilde{\bm w}) + \bm b^\top \bm w \nonumber\\
& = \nu(\bm w - \tilde{\bm w})^\top(\bm w - \tilde{\bm w}) + \nu\tilde{\bm w}^\top(\bm w - \tilde{\bm w}) + \bm b^\top(\bm w - \tilde{\bm w}) + \bm b^\top\tilde{\bm w} \nonumber\\
& = \nu S^2 + \nu\tilde{\bm w}^\top(\bm w - \tilde{\bm w}) + \bm b^\top(\bm w - \tilde{\bm w}) + \bm b^\top\tilde{\bm w} \tag{$\because$~\eqref{eq:lagrangian-sphere}}\\
& = \nu S^2 + (\nu\tilde{\bm w} + \bm b)^\top(\bm w - \tilde{\bm w}) + \bm b^\top\tilde{\bm w} \tag{\eqref{eq:maximization-replaced} restated}
\end{align}
\end{proof}

\begin{lemma} \label{lem:find-invsq}
Under the same definitions as Theorem \ref{thm:maximize-convex-quadratic},
The equation ${\cal T}(\nu) = S^2$ can be solved by the following procedure:
Let $\bm e := [e_1, e_2, \dots, e_N]$ ($N\leq n$, $k\neq k^\prime \Rightarrow e_k\neq e_{k^\prime}$) be a sequence of indices such that
\begin{enumerate}
\item $e_k\in[n]$ for any $k\in[N]$,
\item $i\in[n]$ is included in $\bm e$ if and only if $\xi_i\neq 0$, and
\item $\phi_{e_1} \leq \phi_{e_2} \leq \dots \leq \phi_{e_N}$.
\end{enumerate}
Note that, if $\phi_{e_k} < \phi_{e_{k+1}}$ ($k\in[N-1]$), then ${\cal T}(\nu)$ is a convex function
in the interval $(\phi_{e_k}, \phi_{e_{k+1}})$ with $\lim_{\nu\to\phi_{e_k}+0} = \lim_{\nu\to\phi_{e_{k+1}}-0} = +\infty$.
Then, unless $N = 0$, each of the following intervals contains just one solution of ${\cal T}(\nu) = S^2$:
\begin{itemize}
\item Intervals $(-\infty, \phi_{e_1})$ and $(\phi_{e_N}, +\infty)$.
\item Let $\nu^{\#(k)} := \targmin_{\phi_{e_k}<\nu<\phi_{e_{k+1}}} {\cal T}(\nu)$.
	For each $k\in[N-1]$ such that $\phi_{e_k}<\phi_{e_{k+1}}$,
	\begin{itemize}
	\item intervals $(\phi_{e_k}, \nu^{\#(k)})$ and $(\nu^{\#(k)}, \phi_{e_{k+1}})$ if ${\cal T}(\nu^{\#(k)}) < S^2$,
	\item interval $[\nu^{\#(k)}, \nu^{\#(k)}]$ (i.e., point) if ${\cal T}(\nu^{\#(k)}) = S^2$.
	\end{itemize}
\end{itemize}
It follows that ${\cal T}(\nu) = S^2$ has at most $2n$ solutions.
\end{lemma}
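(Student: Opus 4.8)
The plan is to study the scalar rational function ${\cal T}(\nu) = \sum_{i=1}^n \xi_i^2/(\nu-\phi_i)^2$ by partitioning the real line at its poles and then analysing monotonicity and convexity on each resulting piece. First I would observe that only the indices with $\xi_i\neq 0$ contribute to the sum, so the poles of ${\cal T}$ sit exactly at the \emph{distinct} values among $\{\phi_{e_1},\dots,\phi_{e_N}\}$; this is precisely why the sequence $\bm e$ is defined to collect those indices and to sort them by eigenvalue. Differentiating each summand $g_i(\nu):=\xi_i^2/(\nu-\phi_i)^2$ gives $g_i'(\nu) = -2\xi_i^2/(\nu-\phi_i)^3$ and $g_i''(\nu) = 6\xi_i^2/(\nu-\phi_i)^4 \ge 0$, so every $g_i$, and hence ${\cal T}$, is convex on any open interval containing no pole. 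This immediately yields the convexity claim on each $(\phi_{e_k},\phi_{e_{k+1}})$ with $\phi_{e_k}<\phi_{e_{k+1}}$, while the limits $\lim_{\nu\to\phi_{e_k}+0}{\cal T}(\nu)=\lim_{\nu\to\phi_{e_{k+1}}-0}{\cal T}(\nu)=+\infty$ follow because the two endpoint summands blow up.

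Next I would treat the two unbounded intervals separately. On $(-\infty,\phi_{e_1})$ every contributing factor satisfies $\nu-\phi_i<0$, so $g_i'(\nu)>0$ and ${\cal T}$ is strictly increasing from $\lim_{\nu\to-\infty}{\cal T}(\nu)=0$ up to $+\infty$; by the intermediate value theorem together with strict monotonicity, ${\cal T}(\nu)=S^2$ has exactly one root there for every $S^2>0$. Symmetrically, on $(\phi_{e_N},+\infty)$ all factors have $\nu-\phi_i>0$, so ${\cal T}$ is strictly decreasing from $+\infty$ to $0$ and again has exactly one root. For a bounded interval $(\phi_{e_k},\phi_{e_{k+1}})$, convexity together with the two $+\infty$ limits forces a unique interior minimiser $\nu^{\#(k)}$; splitting the interval at $\nu^{\#(k)}$ produces two strictly monotone branches, on each of which ${\cal T}$ ranges over $({\cal T}(\nu^{\#(k)}),+\infty)$. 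A short case analysis on the minimum value then reproduces the three stated cases: two roots when ${\cal T}(\nu^{\#(k)})<S^2$ (one per branch), the single root $\nu^{\#(k)}$ when ${\cal T}(\nu^{\#(k)})=S^2$, and none when ${\cal T}(\nu^{\#(k)})>S^2$.

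To make the procedure genuinely algorithmic I would note that $\nu^{\#(k)}$ is itself located reliably because ${\cal T}'$ is strictly increasing on the pole-free interval (as ${\cal T}$ is convex there with ${\cal T}''>0$), so its unique zero is found by bisection; and on each of the resulting strictly monotone branches the unique root of ${\cal T}(\nu)=S^2$ is likewise found by bisection. Finally, counting gives the bound: with $D\le N\le n$ distinct poles there are two unbounded intervals contributing one root each and $D-1$ bounded intervals contributing at most two each, for a total of at most $2+2(D-1)=2D\le 2n$ solutions. The case $N=0$ is excluded because then ${\cal T}\equiv 0\neq S^2$, so no root exists.

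I expect the main obstacle to be the bookkeeping around repeated eigenvalues: several indices may share the same value $\phi_i$, so I must argue carefully that ties in the sorted sequence $\bm e$ collapse into a single pole, and that the relevant bounded intervals are exactly those delimited by \emph{strict} increases $\phi_{e_k}<\phi_{e_{k+1}}$, matching the condition imposed in the statement. Once the domain is correctly decomposed into pole-free pieces, the convexity and monotonicity arguments above are routine consequences of the sign of $g_i'$ and $g_i''$.
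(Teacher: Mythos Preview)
Your proposal is correct and follows essentially the same approach as the paper: both arguments establish convexity of ${\cal T}$ on each pole-free interval by differentiating (the paper observes that ${\cal T}'(\nu)=-2\sum_i \xi_i^2/(\nu-\phi_i)^3$ is increasing, you equivalently compute ${\cal T}''\ge 0$), note the $+\infty$ limits at the poles, and then read off the root structure. Your write-up is in fact more detailed than the paper's, which simply asserts that once convexity and the limits are in hand ``the conclusion immediately follows'' without spelling out the unbounded-interval monotonicity, the case split on ${\cal T}(\nu^{\#(k)})$, or the $2n$ count.
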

By Lemma \ref{lem:find-invsq}, in order to compute the solution of ${\cal T}(\nu) = S^2$,
we have only to compute $\nu^{\#(k)}$ by Newton method or the like,
and to compute the solution for each interval by Newton method or the like.
We show an example of ${\cal T}(\nu)$ in Figure \ref{fig:sum_equals_R2}.

\begin{proof}[Proof of Lemma \ref{lem:find-invsq}]
We show the statements in the lemma that, if $\phi_{e_k} < \phi_{e_{k+1}}$ ($k\in[N-1]$), then ${\cal T}(\nu)$ is a convex function
in the interval $(\phi_{e_k}, \phi_{e_{k+1}})$ with $\lim_{\nu\to\phi_{e_k}+0} = \lim_{\nu\to\phi_{e_{k+1}}-0} = +\infty$. Then the conclusion immediately follows.

The latter statement clearly holds. The former statement is proved by directly computing the derivative.
\begin{align*}
& \frac{d}{d\nu}{\cal T}(\nu)
	= \frac{d}{d\nu}\sum_{i=1}^n \left(\frac{\xi_i}{\nu - \phi_i}\right)^2
	= -2\sum_{i=1}^n \frac{\xi_i^2}{(\nu - \phi_i)^3}.
\end{align*}
It is an increasing function with respect to $\nu$, as long as $\nu$ does not match any of $\{\phi_i\}_{i=1}^n$
such that $\xi_i\neq 0$. So it is convex in the interval $\phi_{e_k} < \nu < \phi_{e_{k+1}}$.
\end{proof}

\begin{figure}[t]
\includegraphics[width=\hsize]{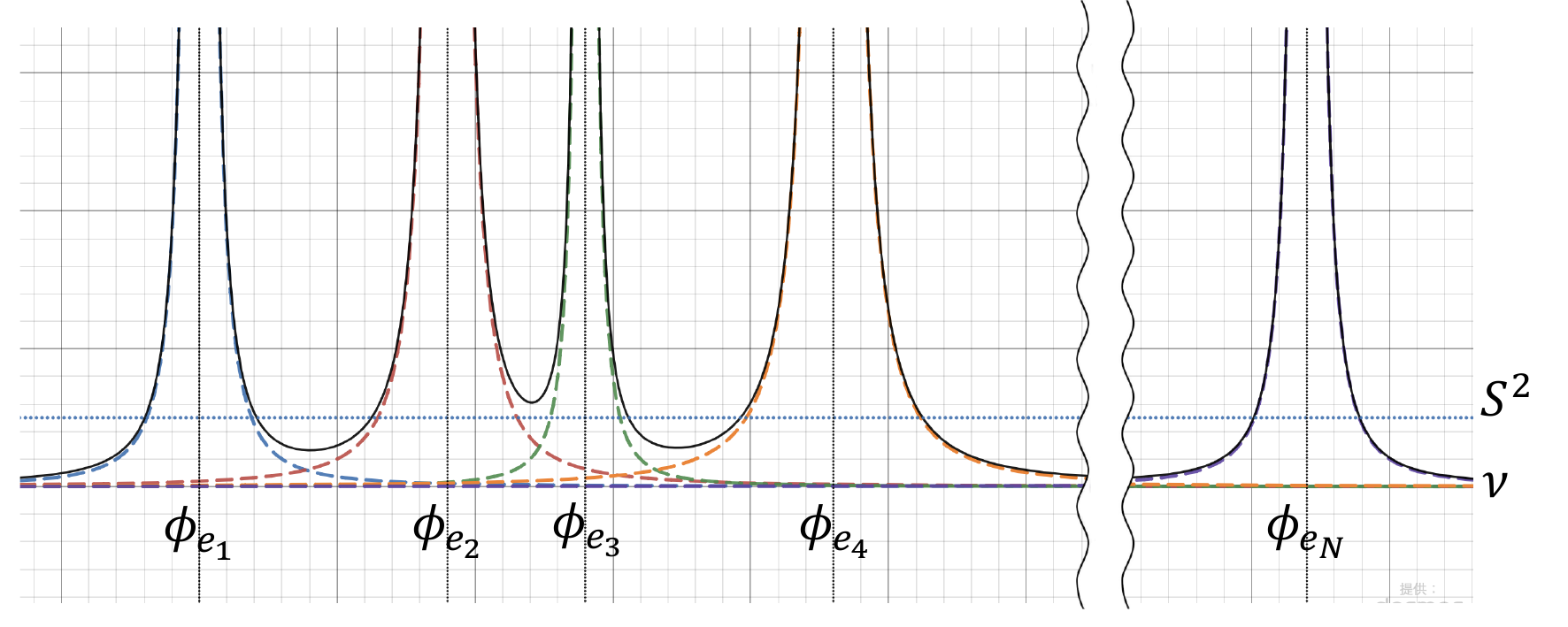}
\vspace{-2em}
\caption{An example of the expression ${\cal T}(\nu)$ (black solid line) in Lemmas \ref{thm:maximize-convex-quadratic} and \ref{lem:find-invsq}.
Colored dash lines denote terms in the summation $(\xi_{e_k}/(\nu - \phi_{e_k}))^2$.
We can see that, given an interval $(\phi_{e_k}, \phi_{e_{k+1}})$ ($k\in[N-1]$), the function is convex.}
\label{fig:sum_equals_R2}
\end{figure}

Now we are prepared to prove Theorem \ref{thm:maximize-convex-quadratic}.

\begin{proof}[Proof of Theorem \ref{thm:maximize-convex-quadratic}]
The condition \eqref{eq:lagrangian-target} is calculated as
\begin{align*}
& A \bm w + \bm b = \nu(\bm w - \tilde{\bm w}), \\
& (A - \nu I)(\bm w - \tilde{\bm w}) = -A\tilde{\bm w} - \bm b.
\end{align*}
Here, let us apply eigendecomposition of $A$, denoted by $A = Q^\top\Phi Q$,
where $Q\in\mathbb{R}^{n\times n}$ is orthogonal ($QQ^\top = Q^\top Q = I$) and
$\Phi := \mathrm{diag}(\phi_1, \phi_2, \dots, \phi_n)$ is a diagonal matrix consisting of eigenvalues of $A$.
Such a decomposition is assured to exist since $A$ is assumed to be symmetric and positive semidefinite.
Then,
\begin{align}
& (Q^\top\Phi Q - \nu I)(\bm w - \tilde{\bm w}) = -Q^\top\Phi Q\tilde{\bm w} - \bm b, \nonumber\\
& Q^\top(\Phi - \nu I) Q (\bm w - \tilde{\bm w}) = -Q^\top\Phi Q\tilde{\bm w} - \bm b, \nonumber\\
& (\Phi - \nu I) \bm\tau = \bm\xi,
	\quad
	(\text{where}
	\quad \bm\tau := Q (\bm w - \tilde{\bm w}),
	\quad \bm\xi := -\Phi Q\tilde{\bm w} - Q\bm b \in\mathbb{R}^n,) \nonumber \\
& \forall i\in[n]:\quad (\phi_i - \nu) \tau_i = \xi_i. \label{eq:equation-by-tau}
\end{align}
Note that we have to be also aware of the constraint
\begin{align}
S = \|\bm\tau\|_2 = \sqrt{\bm\tau^\top\bm\tau} = \sqrt{(\bm w - \tilde{\bm w})^\top Q^\top Q (\bm w - \tilde{\bm w})} = \|\bm w - \tilde{\bm w}\|_2. \label{eq:tau-constraint}
\end{align}

Here, we consider these two cases.
\begin{enumerate}
\item First, consider the case when $(\Phi - \nu I)$ is nonsingular, that is, when $\nu$ is different from any of $\phi_1, \phi_2, \dots, \phi_n$. Then, from \eqref{eq:tau-constraint} we have
	\begin{align}
	& S^2
		= \|\bm\tau\|_2 = \sum_{i=1}^n \tau_i^2
		= \sum_{i=1}^n \left(\frac{\xi_i}{\nu - \phi_i}\right)^2 \quad\bigl(=: {\cal T}(\nu)\bigr). \label{eq:lagrangian-result-nu-equation}
	\end{align}
	So, values of \eqref{eq:maximize-convex-quadratic} for all stationary points with respect to $\bm w$ and $\nu$ (on condition that $(\Phi - \nu I)$ is nonsingular) can be obtained by computing \eqref{eq:maximization-replaced} for each $\nu$ satisfying \eqref{eq:lagrangian-result-nu-equation}, that is,
	\begin{itemize}
	\item for such $\nu$ computing $\bm\tau$ by \eqref{eq:equation-by-tau}, and
	\item computing \eqref{eq:maximization-replaced} as
		$\nu S^2 + (\nu\tilde{\bm w} + \bm b)^\top(\bm w - \tilde{\bm w}) + \bm b^\top\tilde{\bm w}
		= \nu S^2 + (\nu\tilde{\bm w} + \bm b)^\top Q^\top \bm\tau + \bm b^\top\tilde{\bm w}$.
	\end{itemize}

\item Secondly, consider the case when $(\Phi - \nu I)$ is nonsingular, that is, when $\nu$ is equal to one of $\phi_1, \phi_2, \dots, \phi_n$.
	First, given $\nu$, let ${\cal U}_\nu := \{ i \mid i\in[n],~\phi_i = \nu \}$ be the indices of $\{\phi_i\}_i$ equal to $\nu$ (this may include more than one indices), and ${\cal F}_\nu := [n]\setminus{\cal U}_\nu$. Note that, by assumption, ${\cal U}_\nu$ is not empty.
	Then, all stationary points of \eqref{eq:maximize-convex-quadratic} with respect to $\bm w$ and $\nu$ (on condition that $(\Phi - \nu I)$ is singular) can be found by computing the followings for each $\nu\in\{\phi_1, \phi_2, \dots, \phi_n\}$ (duplication excluded):
	\begin{itemize}
	\item If $\xi_i\neq 0$ for at least one $i\in{\cal U}_\nu$, the equation \eqref{eq:equation-by-tau} cannot hold.
	\item If $\xi_i = 0$ for all $i\in{\cal N}_\nu$, the equation \eqref{eq:equation-by-tau} may hold.
		So we calculate $\bm\tau$ that maximizes \eqref{eq:maximize-convex-quadratic} as follows:
		\begin{itemize}
		\item Fix $\tau_i = \xi_i / (\phi_i - \nu)$ for $i\in{\cal F}_\nu$.
		\item Set the constraint $\sum_{i\in{\cal U}_\nu} \tau_i^2 = S^2 - \sum_{i\in{\cal F}_\nu} \tau_i^2$ (due to \eqref{eq:tau-constraint}).
		\item Maximize \eqref{eq:maximize-convex-quadratic} with respect to $\{\tau_i\}_{i\in{\cal U}_\nu}$
			under the constraints above. Here, by \eqref{eq:maximization-replaced} we have only to calculate
			\begin{align}
			& \max_{\bm\tau\in\mathbb{R}^n} [\nu S^2 + (\nu\tilde{\bm w} + \bm b)^\top(\bm w - \tilde{\bm w}) + \bm b^\top\tilde{\bm w}], \label{eq:singular-max}\\
			& \text{subject to}
				\quad\forall i\in{\cal F}_\nu:\quad \tau_i = \frac{\xi_i}{\phi_i - \nu}, \nonumber\\
			& \phantom{\text{subject to}}
				\quad \sum_{i\in{\cal U}_\nu} \tau_i^2 = S^2 - \sum_{i\in{\cal F}_\nu} \tau_i^2, \nonumber
			\end{align}
			which is easily computed by Lemma \ref{lem:optimize-linear}.
			The value of the maximization result is equal to that of \eqref{eq:maximize-convex-quadratic} on condition that $\nu$ is specified above.
		\end{itemize}
		So, collecting these result and taking the largest one, the maximization (on condition that $(\Phi - \nu I)$ is singular) is completed.
	\end{itemize}
\end{enumerate}
Taking the maximum of the two cases, we have the maximization result of \eqref{eq:maximize-convex-quadratic}.

Finally we show the computational cost.
All matrix multiplications in the computation process can be done in at most $O(n^2)$ time,
and we additionally need the following computations:
the eigendecomposition of $A$ and
the solution of ${\cal T}(\nu) = S^2$ using Newton method or the like (Lemma \ref{lem:find-invsq}).
The former can be done in $O(n^3)$ time.
The latter requires Newton method computations for $O(n)$ time,
and each of Newton method computation requires evaluations of ${\cal T}(\nu)$ that requires $O(n^2)$ time.
So, if we assume that the number of repetition of Newton method is constant,
it can be done in $O(n^2)$ time.
This concludes that the computational cost is $O(n^3)$.
\end{proof}

%\section{Details of Experiments}
\section{DETAILS OF EXPERIMENTS}

\subsection{Experimental Environments and Implementation Information} \label{app:implementation}

We used the following computers for experiments:
For experiments except for the image dataset,
we run experiments on a computer with Intel Xeon Silver 4214R (2.40GHz) CPU and 64GB RAM.
For experiments using the image dataset,
we run experiments on a computer with Intel(R) Xeon(R) Gold 6338 (2.00GHz) CPU, NVIDIA RTX A6000 GPU and 1TB RAM.

Procedures are implemented in Python, mainly with the following libraries:
\begin{itemize}
\item {\em NumPy} \citep{harris2020array}: Matrix and vector operations
\item {\em CVXPY} \citep{diamond2016cvxpy}: Convex optimizations (training computations with weights)
\item {\em SciPy} \citep{2020SciPy-NMeth}: Solving equations to maximize the quadratic convex function in Theorem \ref{thm:maximize-convex-quadratic} (by module {\em optimize.root\_scalar})
\item {\em PyTorch} \citep{paszke2017automatic}: Defining the source neural network (which will be converted to a kernel by NTK) for image prediction
\item {\em neural-tangents} \citep{neuraltangents2020}: NTK
\end{itemize}

\subsection{Data and Learning Setup} \label{app:experimental-setup}

The criteria of selecting datasets (Table \ref{tab:dataset-SS}) and detailed setups are as follows:
\begin{itemize}
\item All of the datasets are downloaded from LIBSVM dataset \citep{libsvmDataset}.
	We used training datasets only if test datasets are provided separately
	(``splice'' and ``svmguide1'').
\item We selected datasets from LIBSVM dataset containing 100 to 10,000 samples,
	100 or fewer features, and the area under the curve (AUC) of
	the receiver operating characteristic (ROC) is 0.9 or higher
	for the regularization strengths ($\lambda$) we examined
	so that they tend to facilitate more effective sample screening.
\item In the table, the column ``$d$'' denotes the number of features including the intercept feature (Remark \ref{rem:intercept}).
\end{itemize}

The choice of the regularization hyperparameter $\lambda$, based on the characteristics of the data, is as follows:
We set $\lambda$ as $n$, $n\times 10^{-0.5}$, $n\times 10^{-1.0}$, $\ldots$, $n\times 10^{-3.0}$.
	(For DRSSS with DL, we set 1000 instead of $n$.)
	This is because the effect of $\lambda$ gets weaker for larger $n$.

The choice of the hyperparameter in RBF kernel is fixed as follows: we set $\zeta = d^\prime * \mathbb{V}(Z)$ as suggested in {\tt sklearn.svm.SVC} of \emph{scikit-learn} \citep{scikit-learn}, where $\mathbb{V}$ denotes the elementwise sample variance.

%-%\subsection{Simulation Setup} \label{app:simulated}
%-%
%-%In Section \ref{sec:experiment-weight-changes}, the experiment of the ``simulated'' maximization of $\max_{\bm w\in{\cal W}}$ in \eqref{eq:safe-sample-screening-radius} (Theorem \ref{thm:safe-screening-robust}) is computed as follows.
%-%\begin{itemize}
%-%\item First, we chose a vector $\bm w$ from $\|\bm w - \tilde{\bm w}\|_2 = S$ (not $\|\bm w - \tilde{\bm w}\|_2\leq S$; see \eqref{eq:lagrangian-sphere} in Lemma \ref{lem:maximize-convex-quadratic-stationary}) with the following procedure: first we draw $\bm v$ from a standard normal distribution, then we set $\bm w \gets S \bm v / \|\bm v\|_2$.
%-%\item We repeated this for specified number of trials ${\cal Q}$, which we denote by $\bm w^{[1]}$, $\bm w^{[2]}$, $\ldots$, $\bm w^{[{\cal Q}]}$.
%-%\item We define the ``simulated'' maximization of $\max_{\bm w\in{\cal W}} f(\bm w)$ ($f$: any function) as $\max_{q\in[{\cal Q}]} f(\bm w^{[q]})$.
%-%\end{itemize}
%-%
%-%In the experiment, for each experimental setup, we tried this for 10 times, and in the plot of the result we draw its average and standard deviation.

\subsection{Setup of Randomly Generated Weights} \label{app:random-weights}

Given $S$ of $\|\bm w - \tilde{\bm w}\|_2 \leq S$, in order to randomly generate $\bm w$ satisfying the above, we took the following procedure.

We generate $\bm w$ by $\bm w\gets\tilde{\bm w} + (S/\|\bm v\|_2)\bm v$ ($\bm v\in\mathbb{R}^n$),
where $\bm v$ is drawn from an $n$-dimensional standard normal distribution.
Here, the method above generates only $\bm w$ such that $\|\bm w - \tilde{\bm w}\|_2 = S$,
we intentionally did this since we would like to examine the behavior of methods when $\bm w$ is
as away from $\tilde{\bm w}$ as possible under the constraint.

\subsection{All Experimental Results of Section \ref{sec:experiment-weight-changes}} \label{app:experiment}

We show all experimental results of safe sample screening rates for linear and RBF-kernel SVMs in Figure \ref{fig:result-all}.
For the dataset ``phishing'', we could not calculate for RBF kernel due to memory constraint in our environment. 
%
%Then, we show experimental results of unscreened samples of DRSSS by the proposed method and simulations in Figure \ref{fig:simulated-all}. Note that, because we have the same plot as \ref{fig:SSS-simulated} for each choice of $\lambda$, $a$ and the kernel function, it is difficult to present all results. So we present for only these setups: $\lambda\in\{n, 0.01n\}$, $a = 1.02$ and no kernel function (linear prediction).

\begin{figure}[tp]
\begin{tabular}{cc}
\begin{minipage}[b]{0.47\hsize}\centering Dataset: australian, Linear kernel\\\includegraphics[width=0.7\hsize]{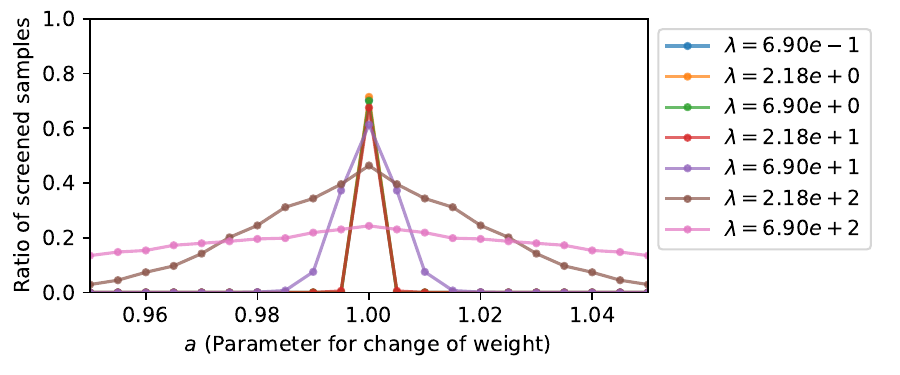}\end{minipage}
&
\begin{minipage}[b]{0.47\hsize}\centering Dataset: australian, RBF kernel\\\includegraphics[width=0.7\hsize]{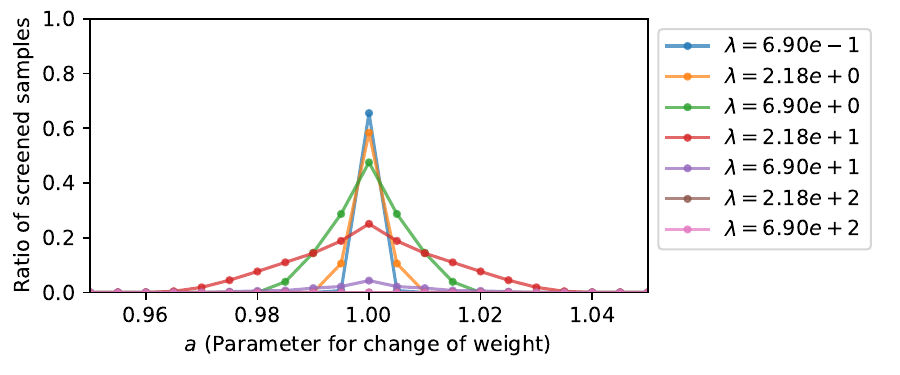}\end{minipage}
\\
\begin{minipage}[b]{0.47\hsize}\centering Dataset: breast-cancer, Linear kernel\\\includegraphics[width=0.7\hsize]{sample-screening-redrawn/linear-svm_ss_screening_rate_Label_WeightL2Range_breast-cancer.pdf}\end{minipage}
&
\begin{minipage}[b]{0.47\hsize}\centering Dataset: breast-cancer, RBF kernel\\\includegraphics[width=0.7\hsize]{sample-screening-redrawn/rbf-svm_ss_screening_rate_Label_WeightL2Range_breast-cancer.pdf}\end{minipage}
\\
\begin{minipage}[b]{0.47\hsize}\centering Dataset: heart, Linear kernel\\\includegraphics[width=0.7\hsize]{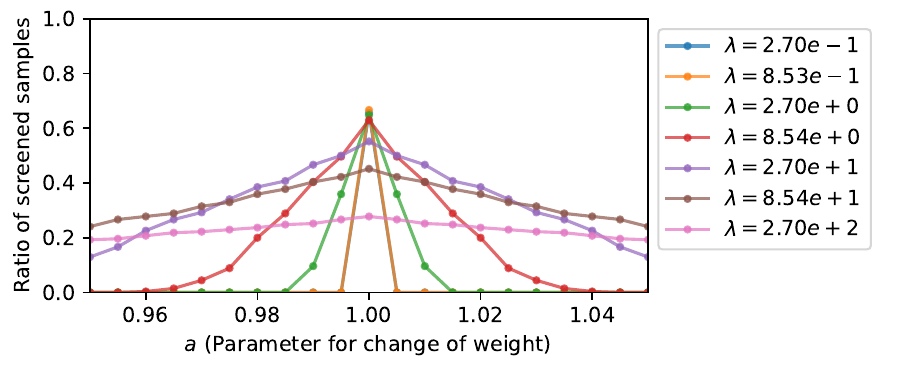}\end{minipage}
&
\begin{minipage}[b]{0.47\hsize}\centering Dataset: heart, RBF kernel\\\includegraphics[width=0.7\hsize]{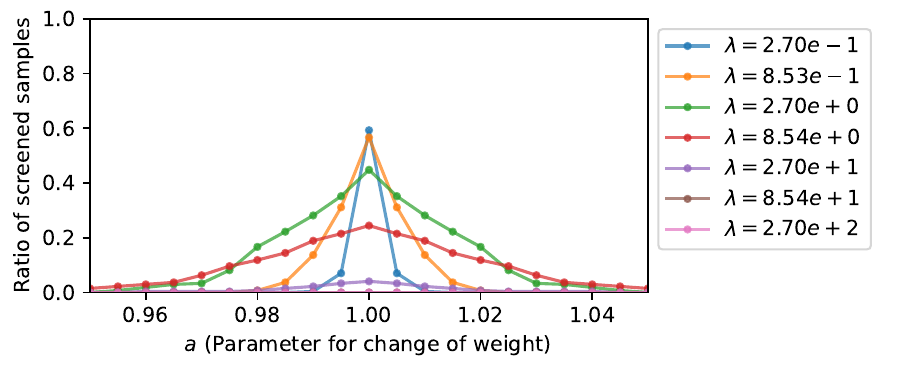}\end{minipage}
\\
\begin{minipage}[b]{0.47\hsize}\centering Dataset: ionosphere, Linear kernel\\\includegraphics[width=0.7\hsize]{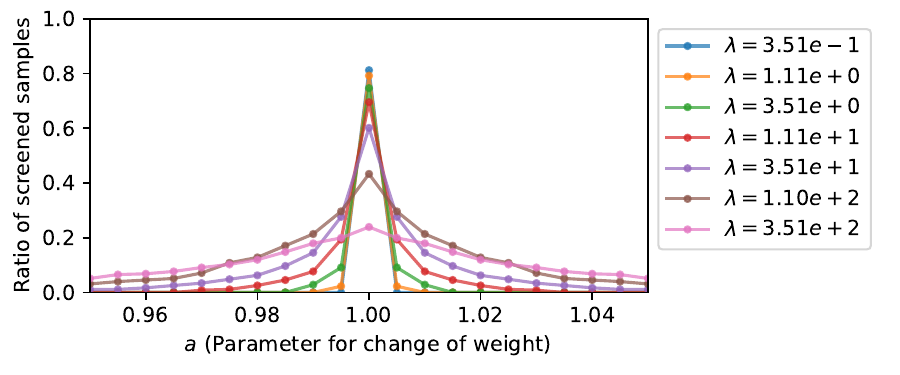}\end{minipage}
&
\begin{minipage}[b]{0.47\hsize}\centering Dataset: ionosphere, RBF kernel\\\includegraphics[width=0.7\hsize]{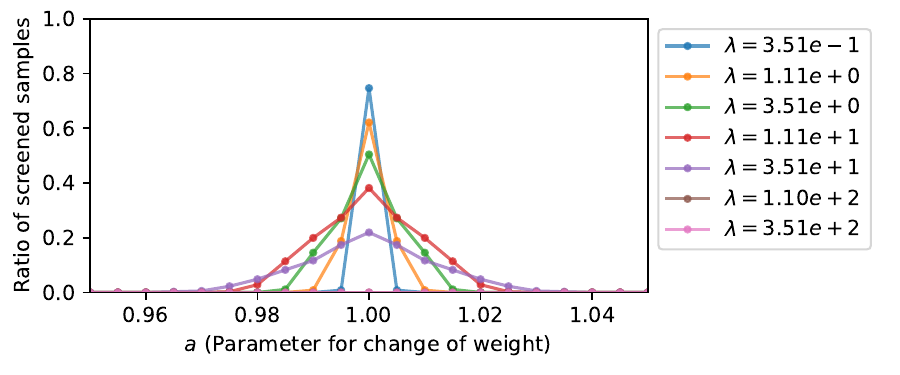}\end{minipage}
\\
\begin{minipage}[b]{0.47\hsize}\centering Dataset: phishing, Linear kernel\\\includegraphics[width=0.7\hsize]{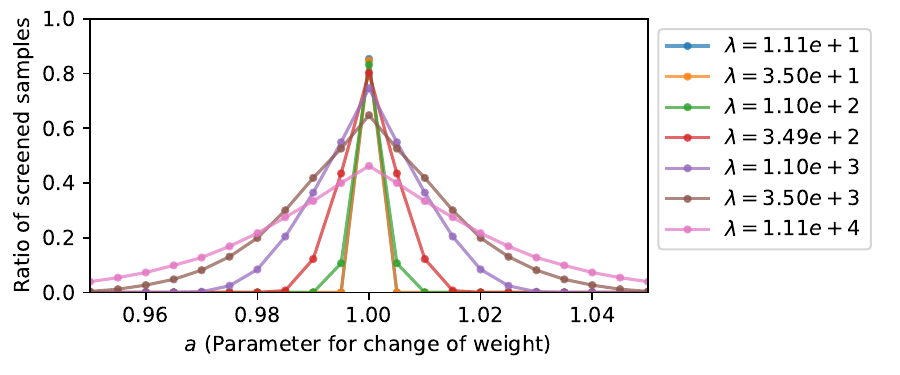}\end{minipage}
&
\begin{minipage}[b]{0.47\hsize}~%\centering Dataset: phishing, RBF kernel
%\\\includegraphics[width=0.7\hsize]{ample-screening-redrawn/rbf-svm_ss_screening_rate_Label_WeightL2Range_phishing.pdf}
\end{minipage}
\\
\begin{minipage}[b]{0.47\hsize}\centering Dataset: sonar, Linear kernel\\\includegraphics[width=0.7\hsize]{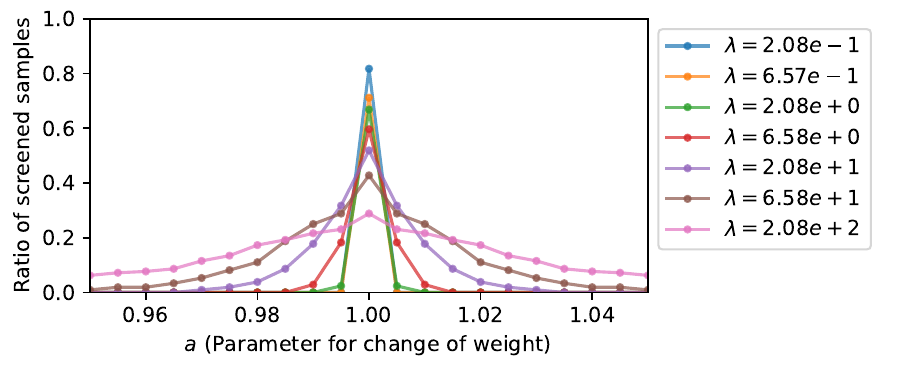}\end{minipage}
&
\begin{minipage}[b]{0.47\hsize}\centering Dataset: sonar, RBF kernel\\\includegraphics[width=0.7\hsize]{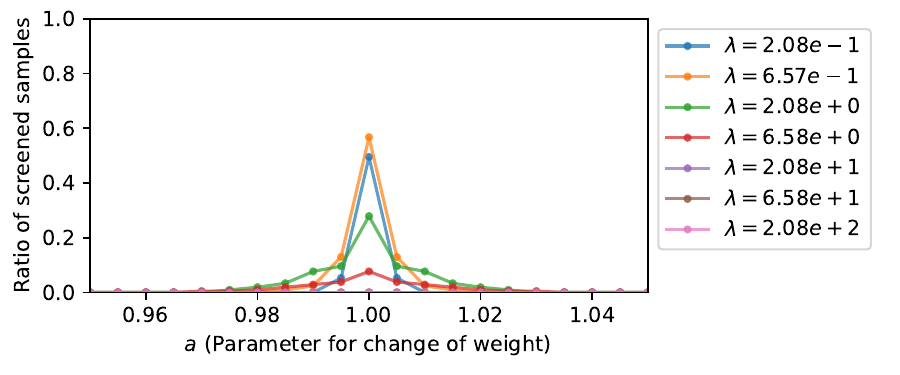}\end{minipage}
\\
\begin{minipage}[b]{0.47\hsize}\centering Dataset: splice, Linear kernel\\\includegraphics[width=0.7\hsize]{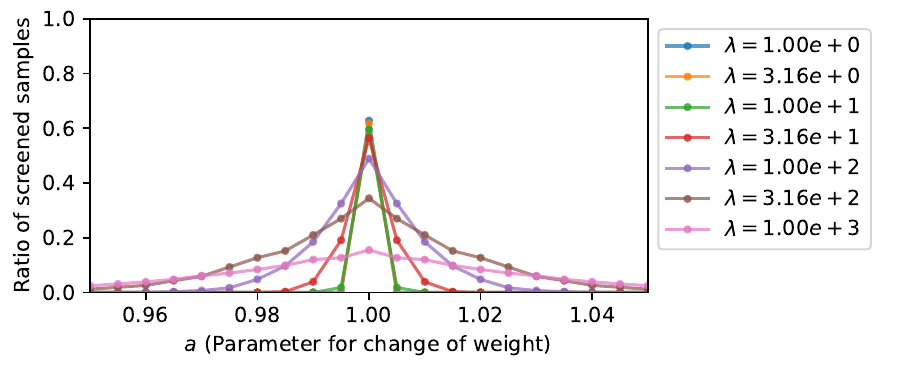}\end{minipage}
&
\begin{minipage}[b]{0.47\hsize}\centering Dataset: splice, RBF kernel\\\includegraphics[width=0.7\hsize]{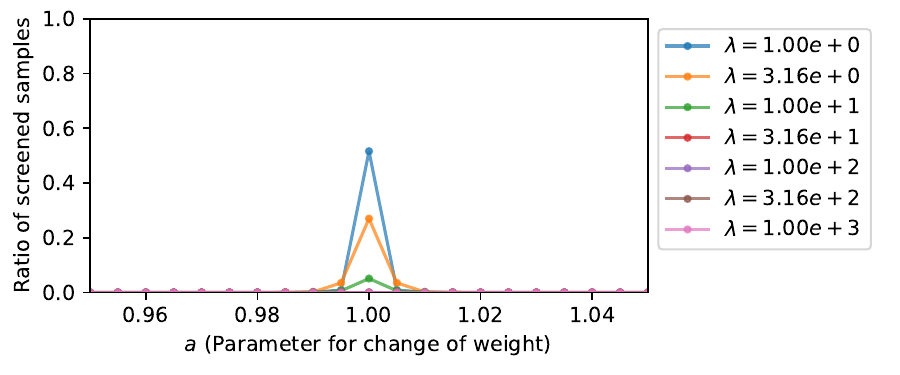}\end{minipage}
\\
\begin{minipage}[b]{0.47\hsize}\centering Dataset: svmguide1, Linear kernel\\\includegraphics[width=0.7\hsize]{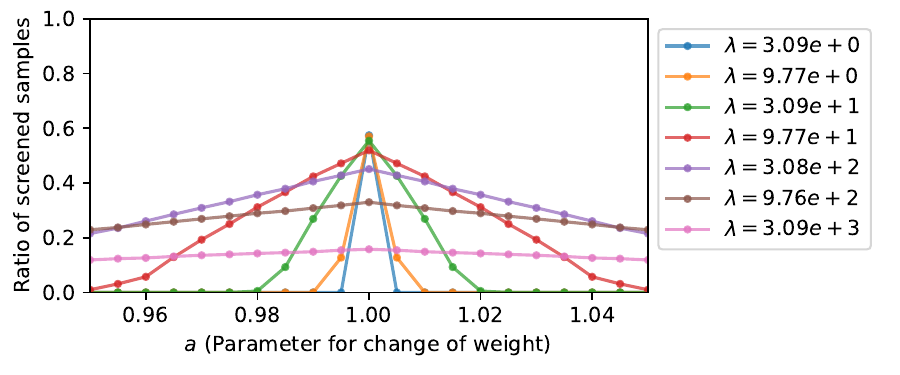}\end{minipage}
&
\begin{minipage}[b]{0.47\hsize}\centering Dataset: svmguide1, RBF kernel\\\includegraphics[width=0.7\hsize]{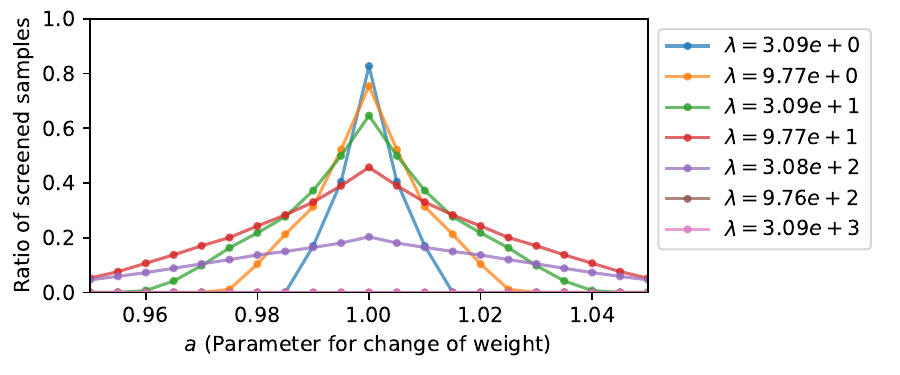}\end{minipage}
\end{tabular}
\caption{Safe sample screening rates for linear- and RBF-kernel SVMs, under the settings described in Section \ref{sec:experiment} and Appendix \ref{app:experimental-setup}.}
\label{fig:result-all}
\end{figure}

\end{document}